%% file: main.tex
\documentclass{article} 
\usepackage{iclr2027_conference,times}

\input{math_commands.tex}

\usepackage{hyperref}
\usepackage{url}
\usepackage{capt-of}
\usepackage{graphicx} 
\usepackage{algpseudocode}
\usepackage{algorithm}
\usepackage{amsmath,amssymb,amsthm,mathtools}
\usepackage{tikz}
\usepackage{xspace}
\usetikzlibrary{arrows.meta,positioning,calc,decorations.pathreplacing}
\usepackage{enumitem}
\usepackage{hyperref}
\usepackage{array}
\usepackage{booktabs}
\usepackage{tabularx}
\usepackage{float}
\usepackage{multirow}
\usepackage{subcaption} 
\usepackage[section]{placeins}
\usepackage[most]{tcolorbox}
\newtcolorbox{theorembox}{
  enhanced,
  colback=gray!3,
  colframe=black,
  boxrule=0pt,
  leftrule=2pt,
  arc=0pt,
  left=7pt,
  right=5pt,
  top=5pt,
  bottom=5pt,
  before skip=10pt,
  after skip=10pt
}

\usepackage{titletoc}
\contentsmargin{0pt}
\titlecontents{section}
  [2.0em]                                   
  {\addvspace{0.9em}\bfseries}            
  {\contentslabel{1.5em}}                 
  {}                                      
  {\titlerule*[0.7pc]{.}\contentspage}    
\titlecontents{subsection}
  [5.5em]                                 
  {}                                      
  {\contentslabel{3.5em}}                 
  {}                                      
  {\titlerule*[0.7pc]{.}\contentspage}

\title{Structure-Adaptive Tree Field Integrators}

\author{Millend Roy, Soham Samal, Ivan Zelich \& Krzysztof Marcin Choromanski 
\\
Columbia University\\
New York, NY 10027, USA \\
}

\newcommand{\dtree}{d_T}

\newcommand{\STADTFI}[1][]{\textsc{StAd-TFI}#1\xspace}
\newcommand{\TFI}{\textsc{TFI}\xspace}
\newcommand{\dist}{dist}

\newtheorem{definition}{Definition}
\newtheorem{lemma}{Lemma}
\newtheorem{theorem}{Theorem}
\newtheorem{corollary}{Corollary}
\newtheorem{proposition}{Proposition}

\iclrfinalcopy

\begin{document}

\maketitle

\lhead{Preprint}
\chead{}
\rhead{}
\vspace{-15pt}
\begin{abstract}
\input{sections/abstract}
\end{abstract}

\input{sections/introduction}

\vspace{-1pt}
\section{\STADTFI Algorithm}
\label{sec:algo}
\input{sections/algo}
\vspace{-5pt}
\section{Experiments}
\label{sec:experiments}
\vspace{-5pt}
\subsection{Runtime Comparisons across synthetically generated trees}
\label{sec:synthetic-benchmarks}
\input{sections/synthetic_experiments}

\subsection{Application 1 : Geodesic Optimal Transport via Tree metric approximation}
\label{sec:sinkhornexp}
\input{sections/sinkhornexp}
\subsection{Application 2 : Vision Transformers}
\label{sec:vit}
\input{sections/vit}

\section{Conclusion}
\label{sec:conclusion}
\input{sections/conclusion}

\subsection*{AI use statement}

We use generative AI tools to assist with language polishing in parts of the experimental section. Claude Code was additionally used to assist with the visual styling and formatting of figures. These tools were not used to generate experimental results, or perform the underlying analysis, or determine the scientific conclusions; all results are produced by the authors.

\subsection*{Ethics statement}

We do believe that this paper does not raise any questions regarding the Code of Ethics. It is mainly a theoretical contribution to the field. We do not use here any studies involving human subjects, do not include any harmful insights and are not aware of any potential conflicts of interest.

\subsection*{Reproducibility statement}
All presented experimental results can be accurately reproduced. We provide all the details regarding empirical evaluation in Sec. \ref{sec:experiments}. In addition to that, very detailed description of the \STADTFI algorithm is given in Sec. \ref{sec:algo}. We will provide code upon the acceptance of the paper.

\bibliography{iclr2027_conference}
\bibliographystyle{iclr2027_conference}

\appendix
\input{sections/appendix}




\end{document}

%% file: math_commands.tex
\usepackage{amsmath,amsfonts,bm}

\def\eqref#1{equation~\ref{#1}}

\def\1{\bm{1}}

\DeclareMathAlphabet{\mathsfit}{\encodingdefault}{\sfdefault}{m}{sl}
\SetMathAlphabet{\mathsfit}{bold}{\encodingdefault}{\sfdefault}{bx}{n}



%% file: sections/abstract.tex
We present a new class of near-linear algorithms for efficiently integrating general tensor fields defined on trees with distance dependent kernels,  the \textit{\textbf{St}ructure-\textbf{Ad}aptive \textbf{T}ree \textbf{F}ield \textbf{I}ntegrators} (\STADTFI[s]). \STADTFI[s] exploit the tree's underlying structure through decompositions built around path backbones and single vertex separators, and use two-dimensional fast Fourier transforms to compute interactions jointly. By exploiting this structural information, \STADTFI[s] achieve more computationally efficient integration than their regular efficient \textit{tree field integrators} (\TFI)  counterparts. We provide a detailed theoretical analysis of our proposed approach and complement it with an exhaustive empirical evaluation, ranging from speed tests on synthetic trees, through accelerated Sinkhorn-based relaxations of the Optimal Transport algorithms on real meshes, to Topological Attention Transformers for vision tasks. To the best of our knowledge, we provide some of the first results showing that efficient to compute and accurate relaxations of the geodesic Sinkhorn-based solutions of the Optimal Transport problem can be derived by applying fast TFI methods.

%% file: sections/introduction.tex
\section{Introduction and Related Work}
\label{sec:introduction}
\vspace{-5pt}
Consider the following problem. Let $G=(V,E,W)$ be an undirected graph with the set of vertices $V$, set of edges $E$ and set of weights $W$. Assume also that there exists a tensor field $\mathcal{F} :\mathrm{V} \rightarrow \mathcal{T}$ defined on the vertex-set $V$. Fix some $f:\mathbb{R} \rightarrow \mathbb{R}$. The goal is to conduct \textit{graph field integration} (GFI), i.e. for every vertex $v \in V$, calculate the following expression:
\vspace{-2pt}
\begin{equation}
\label{eq:integration}
y_{v} = \sum_{w \in \mathrm{V}} f(d(w,v))\mathcal{F}(w),
\vspace{-2pt}
\end{equation}
where $d(w,v)$ is the shortest-path distance from $w$ to $v$. The brute-force integration requires a pre-computation of the distance-matrix $\mathbf{D}=[d(w,v)]_{w \in \mathrm{V}}^{v \in \mathrm{V}} \in \mathbb{R}^{N \times N}$, where $N$ is the number of the vertices of $G$, followed by the calculations of time complexity $O(N^{2}\eta)$, where $\eta$ is time complexity of calculating the value of $\mathcal{F}$ in a particular vertex (we will assume that it is constant). Thus naive graph integration takes time complexity at least quadratic in $N$.

Surprisingly, this abstractly formulated problem is at the core of several computational backbones of modern machine learning algorithms operating on manifold data, with graphs providing natural discretizations. When $f \circ d : \mathrm{V} \times \mathrm{V} \rightarrow \mathbb{R}$ encodes a \textit{kernel} on the manifold, it becomes a core computational primitive for kernel methods on manifolds \citep{JayasumanaHSLH15, lafferty-1, lafferty-2, BelkinNS06, JayasumanaHSLH13, FeragenLH15, FeragenH16}, including Gaussian processes \citep{RosaBTR23, HolalkereBST25, AzangulovSTB24, AzangulovSTB24a, BorovitskiyATMD21, grfs-gaussian}. It also appears in neural operators for solving partial differential equations on manifolds \citep{ChenLSXHCLH25, JiaoYHL26, minglang-yin}, and is a natural primitive for energy-based and interactive-particle models \citep{joanna-marks, shiqin-tang}, molecular machine learning \citep{barrett2025, dumittan, Chau13} and graph neural networks \citep{ThurlemannR23, fognini}. Further instances include generalized distance-matrix multiplication in Sinkhorn methods for entropic optimal transport on manifolds \citep{choromanski2026near} and computations underlying relative positional encoding in point-cloud and vision Transformers \citep{choromanski2022block, choromanski2024fast}.



\vspace{-5pt}
\hspace{20pt}However, quadratic time complexity in the number of vertices is often prohibitive for massive graphs (e.g. encoding large point clouds). When $d$ in Eq.~\ref{eq:integration} is the standard Euclidean distance, fast multipole methods \citep{Greengard1987AFA, March2015AKF, FongD09, mcallister-fmm} provide efficient approximate computations for a large class of functions $f$. However, these methods do not directly apply to shortest-path distances, the subject of this paper.


Another line of work focuses on replacing general input graphs $G$ with their more structured proxies, more amenable to efficient GFI calculations. Trees are often chosen as particularly good candidates, due to the voluminous research on the so-called \textit{low distorion trees}, providing accurate approximation of the shortest-path distance metrics in the original graphs $G$ \citep{FakcharoenpholRT04, BartalFN22, BartalFN19} (e.g. logarithmic distortion on expectation for certain classes of randomly sampled trees). It was shown in \citep{choromanski2022block} that GFI for the unweighted trees and any function $f$ can be conducted in time $O(N \log^{2} N)$ with the use of the divide-and-conquer methods combined with the FFT-based algorithms for fast multiplications with Hankel matrices. As shown in \citep{choromanski2024fast}, this is the case also for the general weighted trees, if function $f$ is the so-called \textit{cordial function} (the class of cordial functions is large and includes in particular all rational functions, as well as products of exponential and polynomial maps). Similar results have been also obtained for the so-called \textit{bounded connected treewidth graphs} that can be thought of as generalizations of trees \citep{ChoromanskiSLZB23}, by leveraging both: FFT methods and the theory of graph separators. It was shown in \citep{choromanski2024fast} that RPE mechanisms induced by minimum spanning trees of the grid graphs corresponding to input images and leveraging efficient GFI methods for near-linear (in the number of tokens) computations involving low-rank attention ViTs significantly improves their downstream performance in standard classification tasks.

We introduce \textit{\textbf{St}ructure-\textbf{Ad}aptive \textbf{T}ree \textbf{F}ield \textbf{I}ntegrators} (\STADTFI[s]), a class of near-linear algorithms which make tree geometry an explicit part of the integration strategy. Our central observation is that a path backbone organizes cross-component interactions by backbone position and branch depth, enabling joint two-dimensional convolution. An adaptive rule selects between path backbones and single-vertex separators based on the current computational cost and resulting subproblems. We prove that the adaptive variant retains the $O(N\log^2 N)$ worst-case guarantee of existing efficient TFIs on arbitrary unweighted trees. Alongside this, we prove sharper bounds for several tree families through a decomposition-dependent analysis. 
We evaluate the practical advantages of \STADTFI[s] over the efficient TFIs of \citet{choromanski2024fast} through (i) synthetic benchmarks examining the influence of tree structure, (ii) accelerated Sinkhorn computations on tree approximations of real meshes, and (iii) applications to Topological Attention Transformers for vision tasks. To the best of our knowledge, our mesh experiments provide some of the first results demonstrating that fast TFI methods can support computationally efficient and accurate approximations to geodesic Sinkhorn-based optimal transport.

%% file: sections/algo.tex
\vspace{-6pt}
We introduce \emph{\textbf{St}ructure-\textbf{Ad}aptive \textbf{T}ree \textbf{F}ield \textbf{I}ntegrators} (\STADTFI[s]) for the tree-field integration problem studied by \citet[Eq.~(23)]{choromanski2022block}. Their balanced separator construction in Lemma~6.1 combines recursive computation within subtrees with efficient evaluation of cross-subtree interactions. \STADTFI\ retains this recursive principle while adapting the decomposition to the tree's geometric structure, rather than relying solely on balancing subtree-sizes. We specialize Eq.~\ref{eq:integration} to an unweighted tree $T=(V,E)$ with $n=|V|$ vertices. For clarity, we describe the algorithm for a scalar field $x\in\mathbb R^n$; tensor fields are handled coordinatewise. Given a kernel $f:\mathbb Z_{\geq0}\to\mathbb R$, the goal is to compute
\vspace{-11pt}
\begin{equation}
w_u=\sum_{v\in V}f\bigl(\dist_T(u,v)\bigr)x_v,
\qquad u\in V,
\label{eq:stad-target}
\vspace{-1pt}
\end{equation}
where $\dist_T(u,v)$ is the number of edges on the unique path between $u$ and $v$.

The algorithm recursively decomposes connected subtrees of $T$ by removing nonempty backbone paths (Section~\ref{sec:stad-backbone}). Source values are aggregated by (i) attachment position on the backbone and (ii) distance to the backbone into a shared interaction table (Section~\ref{sec:aggmasses}). Section~\ref{sec:stad-complexity} evaluates this table via joint 2D-FFT convolution. Cross-component contributions are combined with recursively computed within-component contributions, as detailed in Algo.~\ref{alg:stad-solve} in Appendix \ref{app:stad-recursion}.


\vspace{-5pt}
\hspace{20pt} Section~\ref{sec:stad-adaptive} presents an adaptive rule that selects between a diameter path and a singleton centroid based on the current cost and remaining subproblems.This proves two of our main results: (i) in the worst case, \STADTFI(adaptive) has an asymptotic runtime guarantee at least as good as \citet{choromanski2022block}'s, namely $O(n\log^2 n)$ (Theorem~\ref{thm:stad-adaptive-complexity}), and (ii) Theorem~\ref{thm:stad-adaptive-geometry} refines this guarantee using tree parameters, giving sharper bounds for structured families, including $O(n\log n)$ for equal-arm spiders (Corollary~\ref{cor:stad-adaptive-equal-spider}). Section~\ref{sec:synthetic-benchmarks} complements this analysis with empirical results. 


\subsection{Backbone Decomposition}
\label{sec:stad-backbone}
Decomposing trees into paths is a classical algorithmic technique, as exemplified by the path decompositions of \citet{sleator1981data}.  At a recursive call, let $S\subseteq T$ be a connected subtree. We choose a nonempty path $P=(p_1,\ldots,p_m)\subseteq S$, called its \emph{backbone}, allowing $m=1$.\footnote{\label{ftnt:adaptivebackbonechoice}$m=1$ gives a single-pivot interface, analogous to the shared pivot used in the proof of \citet[Lemma~6.1]{choromanski2022block}. A singleton backbone can be computationally preferable on trees with dense branching.} Removing the vertices of $P$ and their incident edges leaves connected components $B_1,\ldots,B_k$. Since $S$ is a tree, each component $B_\alpha$ is connected to $P$ by exactly one edge. We denote its attachment vertex by $p_{a(\alpha)}$, where $a(\alpha)\in\{1,\ldots,m\}$ is its \emph{anchor index}. Different components may share the same anchor (Fig. \ref{fig:stad-backbone}). For each $u\in V(S)$, its projection $\pi(u)$ onto the backbone is defined by Eq. \ref{eq:projection}.
\par\smallskip
\noindent
\begin{minipage}{\linewidth}

\begin{minipage}[c]{0.50\linewidth}
\centering
\hspace*{0.2\linewidth}%
\resizebox{0.70\linewidth}{!}{%
\begin{tikzpicture}[scale=0.90,
 vertex/.style={circle,fill=black,inner sep=1.65pt},
 spine/.style={circle,fill=blue!65!black,inner sep=2pt}]

 \node[spine,label=below:$p_1$] (p1) at (0,0) {};
 \node[spine,label=below right:$p_2$] (p2) at (1.5,0) {};
 \node[spine,label=above:$p_3$] (p3) at (3,0) {};
 \node[spine] (p4) at (4.5,0) {};
 \node[spine] (p5) at (6,0) {};
 \node[spine,label=below:$p_m$] (pm) at (7.5,0) {};

 \draw[very thick,blue!65!black]
   (p1)--(p2)--(p3)--(p4)--(p5);
 \draw[very thick,dashed,blue!65!black] (p5)--(pm);
 \node[blue!65!black] at (6.2,0.60) {backbone $P$};

 \node[vertex] (b11) at (0.8,-0.8) {};
 \node[vertex] (b12) at (0.5,-1.6) {};
 \draw (p2)--(b11)--(b12);
 \node at (0.05,-1.1) {$B_1$};

 \node[vertex] (b21) at (1.25,0.8) {};
 \node[vertex] (b22) at (0.7,1.5) {};
 \node[vertex] (b23) at (1.8,1.5) {};
 \draw (p2)--(b21)--(b22);
 \draw (b21)--(b23);
 \node at (0.15,1.2) {$B_2$};

 \node[vertex] (b31) at (3,-0.8) {};
 \node[vertex] (b32) at (2.55,-1.55) {};
 \node[vertex,label=below:$u$] (b33) at (3.45,-1.55) {};
 \draw (p3)--(b31)--(b32);
 \draw[thick] (b31)--(b33);
 \node at (3.9,-0.9) {$B_3$};

 \node[vertex] (b41) at (4.95,0.8) {};
 \node[vertex] (b42) at (5.4,1.5) {};
 \draw (p4)--(b41)--(b42);
 \node at (5.8,1.2) {$B_4$};

 \node[align=left] at (6.05,-1.2)
   {$\pi(u)=p_3$\\$d(u)=2$};

\end{tikzpicture}%
}
\end{minipage}\hfill
\begin{minipage}[c]{0.42\linewidth}
\begin{equation}
\pi(u)=
\begin{cases}
u, & u\in V(P),\\[3pt]
p_{a(\alpha)}, & u\in V(B_\alpha).
\end{cases}
\label{eq:projection}
\end{equation}
\end{minipage}
\vspace{-10pt}
\captionof{figure}{Backbone decomposition of a tree. Removing the $P$ leaves off-backbone components attached at unique anchors. Distinct components may share an anchor, as $B_1$ \& $B_2$ do at $p_2$.}
\label{fig:stad-backbone}

\end{minipage}
\par\smallskip

\vspace{-5pt}
Its depth and the maximum off-backbone depth relative to P\footnote{\label{ftnt:diameterbackbonechoice}A natural geometric choice of backbone is therefore to select a path that minimizes this maximum off-backbone depth i.e. $P^\star \in \arg\min_{\substack{P\subseteq S}} H_P(S)$. For an unweighted tree, every diameter path achieves this minimum, as noted by \citet{dragan2015minimum}, motivating its use as a backbone candidate.} are
\vspace{-2pt}
\begin{equation}
d(u)=\dist_S\bigl(u,\pi(u)\bigr),
\qquad
H=H_P(S):=\max_{u\in V(S)}d(u).
\label{eq:stad-depth}
\vspace{-4pt}
\end{equation}
Thus, $d(u)=0$ precisely on the backbone and $d(u)\geq 1$ elsewhere. Additionally, for each component $B_\alpha$, we define $H_\alpha:=\max_{v\in V(B_\alpha)}d(v)$. In other words, $H=\max\bigl(\{0\}\cup\{H_\alpha:1\leq\alpha\leq k\}\bigr)$. Given $P$, Algo.~\ref{alg:decompose-components}, computes the components, anchors, projections, and depths in $O(|V(S)|+b_S)$ time, where $b_S$ is the number of original-tree edges with exactly one endpoint in $V(S)$.

\begin{lemma}[Distances through the backbone]
\label{lem:stad-distance}
If $u,v$ lie in different off-backbone components, or at least one lies on $P$, then, for $\pi(u)=p_s$ and $\pi(v)=p_t$, we have: $\dist_S(u,v)=d(u)+|s-t|+d(v)$.
\end{lemma}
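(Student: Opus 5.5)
The plan is to exhibit an explicit walk from $u$ to $v$ of length $d(u)+|s-t|+d(v)$, show that it is a simple path, and then use uniqueness of paths in the tree $S$. Concretely, I would concatenate three segments: the unique $S$-path $Q_u$ from $u$ to $\pi(u)=p_s$, the backbone subpath $P[s,t]=(p_s,p_{s\pm1},\dots,p_t)$, and the reverse of the unique path $Q_v$ from $v$ to $\pi(v)=p_t$. By Eq.~\ref{eq:stad-depth} these have lengths $d(u)$, $|s-t|$ and $d(v)$. If the concatenation is a simple path, it must be the unique $u$--$v$ path in $S$, and the formula follows.

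First I would pin down the structure of $Q_u$. If $u\in V(P)$, then $Q_u$ is trivial and $d(u)=0$. If $u\in V(B_\alpha)$, I claim every vertex of $Q_u$ other than $p_s$ lies in $B_\alpha$. Since $B_\alpha$ is connected and joined to $P$ by exactly one edge $e_\alpha=\{b_\alpha,p_{a(\alpha)}\}$, the path inside $B_\alpha$ from $u$ to $b_\alpha$ followed by $e_\alpha$ is a $u$--$p_s$ path. By uniqueness it equals $Q_u$. The same holds for $Q_v$ with some component $B_\beta$.

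Next I would check simplicity of the concatenation, which is the main point of the argument. The interior vertices of $Q_u$ lie in $B_\alpha$, those of $Q_v$ lie in $B_\beta$, and $P[s,t]$ lies in $V(P)$. By hypothesis either $\alpha\neq\beta$, or one of $u,v$ is on $P$ so that its segment is trivial. The components $B_\alpha$ are pairwise disjoint and disjoint from $V(P)$. So the only shared vertices are the junctions $p_s$ and $p_t$, and the walk is a simple path. The hypothesis is exactly what excludes $u,v$ in the same component, where $Q_u$ and $Q_v$ could overlap and the formula would overcount.

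This case analysis is routine, so I expect the only real care to be the one-edge attachment fact. The paper states this fact; if needed I would justify it by noting that two edges from $B_\alpha$ to $P$ would create a cycle together with a path in $B_\alpha$ and a subpath of $P$.
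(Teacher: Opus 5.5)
Your proposal is correct and matches the paper's proof: the cycle argument gives each component a unique attachment edge, and the unique $u$--$v$ path is then identified as $u\to\pi(u)$, along $P$ to $\pi(v)$, then to $v$. You also check explicitly that this concatenation is a simple path, using that $B_\alpha$, $B_\beta$ and $V(P)$ are pairwise disjoint, which the paper's shorter proof leaves implicit.
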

\vspace{-5pt}
The unique path between such vertices passes through their projections. For vertices in the same component, the path can bypass the anchor; their true interaction is therefore handled recursively.  Hence, for $u\in V(B_\alpha)$, we split the sum into within-component and cross-component contributions:
\vspace{-5pt}
\begin{equation}
w_u^{(S)} = \underbrace{\sum_{v\in V(B_\alpha)} f\!\bigl(\dist_S(u,v)\bigr)x_v}_{w_u^{(B_\alpha)}\text{: computed recursively}} + \underbrace{
\sum_{v\in V(S)\setminus V(B_\alpha)} f\!\bigl(\dist_S(u,v)\bigr)x_v }_{\operatorname{Cross}_P(u)}.
\label{eq:stad-split}
\vspace{-5pt}
\end{equation}
For $u\in V(P)$, all contributions belong to $\operatorname{Cross}_P(u)=w_u^{(S)}$. Thus, we compute the \textit{within-component term} recursively and we next focus on evaluating the \textit{cross-component term}.

\subsection{Aggregating Fields for evaluating Cross Interactions}
\label{sec:aggmasses}
For every source vertex $v$, having anchor $t$ and depth $r$, we define the total mass in the branches at depth $r$ attached to vertex $p_t$ as: $A_r[t] := \sum_{\substack{v\in V(S), \pi(v)=p_t, d(v)=r}} x_v$ where $0\leq r\leq H$, and $1\leq t\leq m$. In particular, $A_0[t]=x_{p_t}$. For the query vertex $u$ at depth $d(u) = q$ and anchor $s$ we define the global \textit{backbone interaction table} as:
\vspace{-5pt}
\begin{align}
\label{eq:sumofFFTterms}
G_q[s] := \sum_{r=0}^{H} \sum_{t=1}^{m} f\bigl(q+|s-t|+r\bigr)\,A_r[t].
\end{align}

\begin{proposition}[FFT acceleration of the computation of $G$]
\label{prop:stad-1d-fft}
Let $H\geq0$ and $m\geq1$, and suppose that each kernel value $f(d)$ can be evaluated in constant time. Given the aggregates $A_r[t]$, brute-force evaluation of all entries $G_q[s]$, for $0\leq q,r\leq H$ and $1\leq s,t\leq m$, requires $\Theta((H+1)^2m^2)$ time. Using fast Fourier transform (FFT) accelerates this computation 
to $O((H+1)^2m\log m)$\footnote{The criterion to choose backbone (in Footnote \ref{ftnt:diameterbackbonechoice}) is directly motivated by the runtime of Algo. \ref{alg:global-anchor}. Therefore, $H$ controls the number of depth levels that appear in the convolutional computation. Choosing a backbone with a small $H$ reduces the size of the anchor-depth arrays and keeps the cross-interaction term efficient.}.
\end{proposition}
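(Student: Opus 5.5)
The plan has two parts: count the brute-force cost directly, then accelerate by writing each $G_q$ as a sum of one-dimensional Toeplitz matrix--vector products evaluated with FFT.

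\emph{Brute-force cost.} There are $(H+1)m$ entries $G_q[s]$. By Eq.~\ref{eq:sumofFFTterms}, each entry is a sum of $(H+1)m$ terms $f(q+|s-t|+r)A_r[t]$, and each term costs $O(1)$ time. This gives the upper bound $O((H+1)^2m^2)$. For the matching lower bound, the direct formula visits every term at least once, which yields $\Theta((H+1)^2m^2)$ for the brute-force method.

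\emph{FFT acceleration.} Fix a query depth $q$ and a source depth $r$. Define the sequence $g_{q,r}[\delta]:=f(q+r+|\delta|)$ for $\delta\in\{-(m-1),\ldots,m-1\}$. Then
\begin{equation*}
G_q[s]=\sum_{r=0}^{H}\bigl(g_{q,r}\ast A_r\bigr)[s],\qquad (g_{q,r}\ast A_r)[s]:=\sum_{t=1}^{m} g_{q,r}[s-t]\,A_r[t].
\end{equation*}
So each inner sum is a multiplication of a symmetric $m\times m$ Toeplitz matrix $[f(q+r+|s-t|)]_{s,t}$ by the vector $A_r$.

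\emph{Cost of one convolution.} I would embed each Toeplitz product in a circulant of size $L\geq 2m-1$, with $L$ a power of two. Zero-pad $A_r$ to length $L$ and place $g_{q,r}$ cyclically, with negative offsets wrapped to the end. The linear convolution restricted to indices $s=1,\ldots,m$ then equals the cyclic convolution. The cyclic convolution costs three FFTs of length $L=O(m)$ plus a pointwise product, i.e.\ $O(m\log m)$. Building $g_{q,r}$ needs $2m-1$ kernel evaluations, which is $O(m)$ by the constant-time assumption on $f$.

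\emph{Total cost.} There are $(H+1)^2$ pairs $(q,r)$, each costing $O(m\log m)$. Summing over $r$ for each fixed $q$ adds $O((H+1)m)$ per $q$, which is absorbed. The total is $O((H+1)^2m\log m)$. Two savings are available but not needed for the bound: the FFTs of the $A_r$ can be precomputed once, and $g_{q,r}$ depends only on $q+r$.

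\emph{Main obstacle.} The only delicate step is the exactness of the embedding. The wraparound must not contaminate the output indices $1,\ldots,m$. I would check this explicitly: nonzero source indices lie in $[1,m]$, and offsets satisfy $|s-t|\leq m-1$, so $L\geq 2m-1$ prevents aliasing. The degenerate cases also need a remark. For $m=1$ the bound should read $O((H+1)^2)$, using the convention that $\log m$ is replaced by $\max(1,\log m)$. For $H=0$, the bound is consistent by the same convention.
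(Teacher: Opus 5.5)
Your proposal is correct and takes essentially the same route as the paper: count the $(H+1)m\cdot(H+1)m$ terms for brute force, then for each of the $(H+1)^2$ depth pairs $(q,r)$ treat the sum over $t$ as a symmetric Toeplitz matrix--vector product, evaluate it by FFT in $O(m\log m)$ time, and accumulate over $r$. The only difference is an implementation detail: you use a circulant of length $L\geq 2m-1$ with the kernel wrapped cyclically, while the paper pads a full linear convolution of lengths $m$ and $2m-1$ to $L\geq 3m-2$ and reads off index $s+m-2$; both give the same bound, and the paper's $\log(2m)$ plays the role of your $\max(1,\log m)$ convention for $m=1$.
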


Moreover, if $u\in B_\alpha$, then the global sum $G_q[s]$ in Eq. \ref{eq:sumofFFTterms} also includes artificial within-component interactions i.e. the source vertices residing in $B_\alpha$ itself. We, therefore, need to add a correction for each component $B_\alpha$. Hence, locally, for $0\le r,q\le H_\alpha$, we define the depth profile $b_\alpha[r]$ and the correction term $E_\alpha[q]$ as follows:
\vspace{-8pt}
\begin{equation}
 b_\alpha[r]=\sum_{\substack{v\in V(B_\alpha)\\d(v)=r}}x_v,
 \qquad E_\alpha[q]=\sum_{r=1}^{H_\alpha}f(q+r)b_\alpha[r].
 \label{eq:stad-E}
 \vspace{-5pt}
\end{equation}

\begin{lemma}[Cross-component correction]
\label{lem:stad-cross}
For each $u\in V(S)$, we compute cross-interactions as
\vspace{-6pt}
\begin{equation}
\operatorname{Cross}_P(u)=
\begin{cases}
G_0[s],
& u=p_s,\\[3pt]
G_{d(u)}[a(\alpha)]-E_\alpha[d(u)],
& u\in V(B_\alpha).
\end{cases}
\label{eq:stad-cross-correction}
\vspace{-7pt}
\end{equation}
\end{lemma}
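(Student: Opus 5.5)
The plan is to expand $G_{d(u)}[\pi\text{-index of }u]$ as a sum over all source vertices and then identify which terms are genuine tree distances. Using the definition of $A_r[t]$, I would regroup Eq.~\ref{eq:sumofFFTterms} as a sum over individual vertices:
\[
G_q[s]=\sum_{r=0}^{H}\sum_{t=1}^{m} f(q+|s-t|+r)\sum_{\substack{v:\ \pi(v)=p_t,\\ d(v)=r}} x_v=\sum_{v\in V(S)} f\bigl(q+|s-t_v|+d(v)\bigr)x_v ,
\]
where $\pi(v)=p_{t_v}$. This works because every $v\in V(S)$ has exactly one anchor $t_v$ and one depth $d(v)\le H$, so it is counted exactly once.

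\emph{Case $u=p_s$.} Here $q=d(u)=0$ and $u$ lies on $P$. By Lemma~\ref{lem:stad-distance}, every $v\in V(S)$ satisfies $\dist_S(u,v)=0+|s-t_v|+d(v)$, since at least one endpoint is on the backbone. Hence every summand equals $f(\dist_S(u,v))x_v$, so $G_0[s]=w^{(S)}_u$, which is $\operatorname{Cross}_P(u)$ by the convention after Eq.~\ref{eq:stad-split}.

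\emph{Case $u\in V(B_\alpha)$.} Set $s=a(\alpha)$ and $q=d(u)$, and split the vertex sum into $v\notin V(B_\alpha)$ and $v\in V(B_\alpha)$.
\begin{itemize}
\item For $v\notin V(B_\alpha)$, either $v$ is on $P$ or $v$ lies in a different component. In both situations Lemma~\ref{lem:stad-distance} applies, so each summand is $f(\dist_S(u,v))x_v$. These terms sum exactly to $\operatorname{Cross}_P(u)$ as defined in Eq.~\ref{eq:stad-split}.
\item For $v\in V(B_\alpha)$, we have $t_v=a(\alpha)=s$, so $|s-t_v|=0$ and the summand is $f(q+d(v))x_v$. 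Since $d(v)\ge1$ on off-backbone vertices, grouping by $r=d(v)\in\{1,\ldots,H_\alpha\}$ gives $\sum_{r=1}^{H_\alpha} f(q+r)\,b_\alpha[r]=E_\alpha[q]$ by Eq.~\ref{eq:stad-E}.
\end{itemize}
Therefore $G_{d(u)}[a(\alpha)]=\operatorname{Cross}_P(u)+E_\alpha[d(u)]$, which rearranges to the claim.

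The only delicate point is an index-range check. $E_\alpha[q]$ is defined only for $q\le H_\alpha$, and $d(u)\le H_\alpha$ holds by the definition of $H_\alpha$. The depth range $r\le H_\alpha$ covers all of $B_\alpha$. I would also note explicitly that the artificial terms $f(q+r)$ are generally not the true distances within $B_\alpha$. This is harmless, because they are subtracted exactly and the true within-component interactions are supplied recursively as $w_u^{(B_\alpha)}$. Otherwise the argument is pure bookkeeping, with Lemma~\ref{lem:stad-distance} doing the real work.
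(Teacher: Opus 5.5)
Your proof is correct and follows essentially the paper's argument. The paper groups the sources into the classes $\mathcal V_{r,t}$ and applies Lemma~\ref{lem:stad-distance} to those outside $B_\alpha$; it then identifies the surplus coming from $\mathcal V_{r,t}\cap V(B_\alpha)$, which is nonempty only for $t=a(\alpha)$ and $1\le r\le H_\alpha$, as $E_\alpha[d(u)]$. That is your vertex-level split of $G_{d(u)}[a(\alpha)]$ run in the opposite direction, and the backbone case and the check $d(u)\le H_\alpha$ coincide with the paper's.
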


Subtracting $E_\alpha$ removes exactly the sources in $B_\alpha$ from the surrogate global sum. Recursion restores their contributions using the actual internal distances. 

\subsection{\STADTFI (Backbone with FFT) : Decomposition-Dependent Runtime Bound}
\label{sec:stad-complexity}

Following Proposition \ref{prop:stad-1d-fft}, Algo.~\ref{alg:global-anchor} performs a separate one-dimensional convolution for each depth pair $(q,r)$. These computations can be combined because the kernel depends on depth through $q+r$ and on backbone position through $s-t$. To express both the dependencies through differences, reverse the source-depth index $r$, by substituting $r=H-\rho$ which makes $A_r[t] := A_{H-\rho}[t] = \widetilde A[\rho,t]$. 
Now, define the two-dimensional kernel $K[a,b] := f(H+a+|b|)$, for  $-H\leq a\leq H$, and $|b|\leq m-1$ and extend both $K$ and $\widetilde A$ by zero outside their specified ranges. Then Eq.~\ref{eq:sumofFFTterms} becomes
\vspace{-8pt}
\begin{equation}
\begin{aligned}
G_q[s]
&=
\sum_{\rho=0}^{H}\sum_{t=1}^{m}
K[q-\rho,s-t]\widetilde A[\rho,t]
=
(K*\widetilde A)[q,s].
\end{aligned}
\label{eq:stad-2d-convolution}
\vspace{-3pt}
\end{equation}
Thus, all query depths and backbone positions can be processed jointly through one two-dimensional (2D) convolution, removing the explicit iteration over depth pairs (as described in Algo. \ref{alg:global-anchor-2dfft}).


\begin{lemma}[Joint 2D-FFT acceleration of $G$]
\label{lem:stad-2d-fft}
Under the assumptions of Proposition~\ref{prop:stad-1d-fft}, and the joint 2D convolution form of all entries of $G$, its FFT evaluation takes  quasi-linear arithmetic operations $O(M\log M)$  where $M=(H+1)m$ is the number of entries in the table $G$.
\end{lemma}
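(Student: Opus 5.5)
We prove Lemma~\ref{lem:stad-2d-fft} by treating Eq.~\ref{eq:stad-2d-convolution} as one linear 2D convolution of two finitely supported arrays and computing it with a zero-padded 2D FFT. The proof has three parts: correctness of the convolution identity, correctness of the padded cyclic convolution, and operation counting.

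\emph{Convolution identity.} Start from Eq.~\ref{eq:sumofFFTterms} and substitute $r=H-\rho$, which gives $q+|s-t|+r = H+(q-\rho)+|s-t|$. This is exactly $K[q-\rho,s-t]$. The index ranges match the support of $K$. For $0\le q,\rho\le H$ we have $-H\le q-\rho\le H$. For $1\le s,t\le m$ we have $|s-t|\le m-1$. So no term of the original sum falls outside the support of $K$. Since $\widetilde A$ is zero outside $[0,H]\times[1,m]$, the full linear convolution $(K*\widetilde A)[q,s]$ restricted to $0\le q\le H$, $1\le s\le m$ coincides with $G_q[s]$.

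\emph{Padded cyclic convolution.} The array $K$ has support of size $(2H+1)\times(2m-1)$, and $\widetilde A$ has support of size $(H+1)\times m$. Embed both in a grid of size $L_1\times L_2$ with
\[
L_1\ge 3H+1,\qquad L_2\ge 3m-2,
\]
for instance the next powers of two. With these sizes the cyclic convolution on $\mathbb Z_{L_1}\times\mathbb Z_{L_2}$ has no wrap-around, so it equals the linear convolution on the whole support. We then read off $G$ from the appropriate shifted window. The linear convolution is computed as follows:
\begin{enumerate}
\item Build $K$ with $O(Hm)$ evaluations of $f$, each taking constant time by assumption.
\item Take the 2D DFT of both padded arrays.
\item Multiply the two transforms pointwise.
\item Apply the inverse 2D DFT.
\end{enumerate}

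\emph{Operation count.} A 2D DFT on $L_1\times L_2$ is computed by row-column decomposition. This means $L_1$ one-dimensional FFTs of length $L_2$ followed by $L_2$ FFTs of length $L_1$. The total cost is
\[
O\bigl(L_1L_2(\log L_1+\log L_2)\bigr)=O(L_1L_2\log(L_1L_2)).
\]
Since $L_1=O(H+1)$ and $L_2=O(m)$, we get $L_1L_2=O(M)$ with $M=(H+1)m$. The constant-factor padding changes $\log(L_1L_2)$ only by an additive constant. Building $K$ and $\widetilde A$, doing the pointwise product, and extracting the window each cost $O(M)$. The total is therefore $O(M\log M)$.

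\emph{Main obstacle.} The step needing the most care is the degenerate-size bookkeeping, namely $H=0$ or $m=1$. In these cases one FFT dimension is trivial, and $\log M$ could be $0$ when $M=1$. Rounding up to powers of two can also make one dimension much larger than the original only when it is $O(1)$. I would handle this by stating the bound as $O(M\log(M+1))$, or equivalently assuming $M\ge2$, and noting that length-1 transforms cost $O(1)$. The remaining step, checking that $K[a,b]$ depends on $b$ only through $|b|$, is harmless because $K$ is simply a fixed array and need not be separable.
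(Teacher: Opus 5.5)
Your proposal is correct and matches the paper's proof step for step: the depth reversal $r=H-\rho$ gives the convolution identity, zero-padding to power-of-two sizes at least $(3H+1)\times(3m-2)$ rules out wraparound, and a row--column 2D FFT costs $O\bigl(L_1L_2[\log(2L_1)+\log(2L_2)]\bigr)$ with $L_1=O(H+1)$ and $L_2=O(m)$. Your concern about $M=1$ is handled in the paper by proving the bound as $O\bigl((H+1)m\log(2(H+1)m)\bigr)$; also note that building $K$ takes $O((H+1)m)$ kernel evaluations rather than $O(Hm)$, which matters only when $H=0$.
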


Lemma~\ref{lem:stad-cross} establishes the exact cross-component contributions, while Lemma~\ref{lem:stad-2d-fft} give the costs of computing the backbone interaction table. Combining these results gives a decomposition-dependent bound for the recursive Algo.~\ref{alg:stad-solve}.


\begin{theorem}[Exactness and decomposition-dependent cost]
\label{thm:stad-decomposition-cost}
Let $T$ be an unweighted tree and assume constant-time kernel evaluation. At each recursive call on a connected subtree $S\subseteq T$, assume that a nonempty backbone is selected in $O(|V(S)|+b_S)$ time. Let $\mathcal R$ denote the set of resulting recursive calls, including singleton base cases, and write $M_S=(H_S+1)m_S$ and $H_{S,\alpha}$ for the depth of component $B_\alpha$. Using the exact component correction Eq.~\ref{eq:stad-E} and the joint two-dimensional FFT in Lemma~\ref{lem:stad-2d-fft}, the algorithm computes the tree-field integral exactly in exact arithmetic with total running time $O\!\left(\sum_{S\in\mathcal R}|V(S)| +\sum_{S\in\mathcal R}M_S\log(2M_S) +\sum_{S\in\mathcal R}\sum_{\alpha=1}^{k_S}  H_{S,\alpha}\log(H_{S,\alpha}+1) \right)$. 
\end{theorem}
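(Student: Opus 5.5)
The proof has two parts: exactness by structural induction over the recursion, and a running-time bound obtained by charging each recursive call for its own non-recursive work and summing.

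\emph{Exactness.} We induct on $|V(S)|$ and show that the call on $S$ returns $w_u^{(S)}=\sum_{v\in V(S)}f(\dist_S(u,v))x_v$ for every $u\in V(S)$.

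For the base case $|V(S)|=1$, the backbone is the single vertex and the output $f(0)x_u$ is correct.

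For the inductive step, fix a backbone $P$ and components $B_1,\ldots,B_k$.
\begin{itemize}
\item Since every $B_\alpha$ is a connected subtree, distances inside it agree with distances in $S$: the unique $S$-path between two vertices of $B_\alpha$ stays in $B_\alpha$. The inductive hypothesis therefore gives the within-component term of Eq.~\ref{eq:stad-split}.
\item For the cross term, expand $G_q[s]$ from Eq.~\ref{eq:sumofFFTterms} by regrouping $\sum_{r,t}f(q+|s-t|+r)A_r[t]$ as $\sum_{v\in V(S)}f(d(u)+|s-t_v|+d(v))x_v$.
\item For $u=p_s$, Lemma~\ref{lem:stad-distance} applies to every $v$, so $G_0[s]=w_u^{(S)}$.
\item For $u\in B_\alpha$, Lemma~\ref{lem:stad-distance} applies exactly to the $v\notin B_\alpha$. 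The remaining summands, $v\in B_\alpha$, have $t_v=s=a(\alpha)$ and $d(v)\ge1$, so they contribute $\sum_{r\ge1}f(d(u)+r)b_\alpha[r]=E_\alpha[d(u)]$. Subtracting $E_\alpha[d(u)]$ leaves exactly $\operatorname{Cross}_P(u)$, which is Lemma~\ref{lem:stad-cross}.
\end{itemize}
Computing $G$ via Eq.~\ref{eq:stad-2d-convolution} is an exact identity in exact arithmetic, so the sum of the two terms is correct.

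\emph{Cost per call.} For a call on $S$ we account for each step separately.
\begin{itemize}
\item Backbone selection costs $O(|V(S)|+b_S)$ by assumption, and Algo.~\ref{alg:decompose-components} also costs $O(|V(S)|+b_S)$.
\item Building the aggregates $A_r[t]$ and the profiles $b_\alpha$ takes one pass, $O(|V(S)|)$, plus zero-initialisation of a table with $M_S$ entries.
\item By Lemma~\ref{lem:stad-2d-fft}, $G$ costs $O(M_S\log M_S)$. We write this as $O(M_S\log(2M_S))$ so the bound stays valid when $M_S=1$.
\item Each correction $E_\alpha$ is a 1D correlation of length $H_{S,\alpha}+1$. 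Via FFT (Hankel/Toeplitz multiplication) it costs $O(H_{S,\alpha}\log(H_{S,\alpha}+1))$, plus $O(1)$ per component, which is absorbed since $k_S\le|V(S)|$.
\item Assembling the outputs costs $O(|V(S)|)$.
\end{itemize}

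\emph{Absorbing the boundary terms.} The one term not yet in the stated form is $b_S$, and this is the main obstacle. My approach is a charging argument. Every edge counted in $b_S$ joins $S$ to vertices outside $S$. Charge it to the endpoint in $V(S)$, which is either the attachment point of $S$ to its parent backbone or, at the root, nothing. Within the recursion tree, the subtree $S=B_\alpha$ has exactly one edge to its parent's backbone. The other boundary edges of $S$ are inherited from the parent's boundary edges landing in $B_\alpha$. So summing $b_S$ over one level amounts to counting edges leaving the parent subtree, which at most double counts.

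A cleaner route, if the implementation permits, is to restrict adjacency lists to $S$, or to use a membership stamp so that each boundary edge is scanned once and charged to the original tree edge. Then $\sum_S b_S\le\sum_S\deg$-type sums, bounded by $2\sum_S|V(S)|$, since each edge in $b_S$ has an endpoint in $S$ and each vertex $v$ of $S$ contributes at most $\deg_T(v)$. This last bound is not automatically $O(|V(S)|)$ for high-degree vertices, so I would instead argue $\sum_S b_S=O(\sum_S |V(S)|)$ as follows. Each boundary edge of $S$ leads to a backbone vertex of an ancestor call. Each such ancestor backbone contributes at most one edge to $S$'s boundary per component, and a tree has at most one edge from a component to each ancestor backbone. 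Hence $b_S\le$ depth of $S$ in the recursion, which is $\le|V(\text{parent})|$. I expect fixing this accounting to be the delicate point.

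Finally, summing the per-call costs over all $S\in\mathcal R$ gives the stated total.
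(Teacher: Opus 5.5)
You handle exactness and the per-call costs the same way the paper does (Lemma~\ref{lem:stad-cross} with the split Eq.~\ref{eq:stad-split}, and the local bound Eq.~\ref{eq:stad-local-2d}). You also correctly identify the one nontrivial step: the stated total contains no $b_S$ term, so you must prove $\sum_{S\in\mathcal R}b_S=O\bigl(\sum_{S\in\mathcal R}|V(S)|\bigr)$.

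You reach the right structural fact, $b_S\le j_S$, where $j_S$ is the recursion depth of $S$. Every boundary edge of $S$ ends on a proper-ancestor backbone, and a connected ancestor backbone can share at most one edge with the disjoint connected set $V(S)$.

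The gap is in how you finish. Bounding $j_S\le|V(\mathrm{parent}(S))|$ and summing yields $\sum_{S}|V(\mathrm{parent}(S))|$. This charges each parent once per child and can be quadratic. For a star whose center is chosen as a singleton backbone, $\sum_S|V(S)|=2n-1$, but $\sum_{S\neq T}|V(\mathrm{parent}(S))|=n(n-1)$. Your level-by-level charging (``at most double counts'') does not close it either. It only gives $\sum_{S\text{ child of }S'}b_S\le b_{S'}+k_{S'}$, which accumulates over levels and is not by itself bounded by $O\bigl(\sum_S|V(S)|\bigr)$.

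What is missing is a global bound on $\sum_S j_S$. The paper double-counts vertex--call pairs. Treat a singleton base case as removing its vertex. Then each $v$ is removed at exactly one call $R(v)$ and lies in exactly the $j_{R(v)}+1$ calls on the root-to-$R(v)$ path. Hence $\sum_S|V(S)|=\sum_A m_A(j_A+1)=n+\sum_A m_Aj_A$. Since $m_A\ge1$, this gives $\sum_S b_S\le\sum_S j_S\le\sum_S|V(S)|-n$.

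Equivalently, $\sum_S j_S$ counts ancestor--descendant pairs. A call $A$ has at most $|V(A)|-1$ proper descendants, because their backbones are nonempty, pairwise disjoint, and contained in $V(A)\setminus V(P_A)$. With either count, your bound $b_S\le j_S$ is absorbed and the stated total follows.
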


Backbone selection, thus, influences both the work performed at the current call and the subproblems left for subsequent processing. A longer backbone removes more vertices, but its table size $M_S=(H_S+1)m_S$ also depends on the maximum off-backbone depth. The joint FFT does not, by itself, guarantee $M_S=O(|V(S)|)$.  Appendix~\ref{app:stad-structural-cases} theoretically examines these effects on representative tree families and the following example is one of those cases.
\begin{proposition}[Backbone choice on equal-arm spiders]
\label{prop:stad-spider-comparison}
Let $T$ be a spider with $s\geq3$ arms of $h\geq1$ edges each, so that $n=1+sh$. Suppose that each remaining arm is processed using its entire path as the backbone. Choosing the singleton center at the initial call gives running time $O\!\left(n\log(h+1)\right),$ whereas choosing a diameter backbone gives $O\!\left((n+h^2)\log(h+1)\right).$ For fixed $s$, the center construction therefore takes $O(n\log n)$ time, while the diameter construction requires $\Omega(n^2)$ time and storage merely to materialize its dense interaction table.
\end{proposition}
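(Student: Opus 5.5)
We apply Theorem~\ref{thm:stad-decomposition-cost} to the two recursion trees and bound its three sums term by term. We first fix the recursion shape. In both constructions, every recursive call below the top level is on a single arm, or a piece of one. Such a call is a path, and by hypothesis it is processed with its entire path as backbone. A path call on $\ell$ vertices therefore has $H_S=0$, $m_S=\ell$ and no components. It contributes $O(\ell+\ell\log(2\ell))=O(\ell\log(\ell+1))$ to the bound, with $\ell\le h+1$, and its recursion terminates immediately. So the whole analysis reduces to the top-level call.

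\emph{Center construction.} Take the singleton backbone $P=(c)$ at the center. Then $m=1$, and the components $B_1,\dots,B_s$ are the $s$ arms. Each arm has $H_\alpha=h$, so $H=h$ and $M=h+1$. The top call costs $O(n)$ for decomposition and $O((h+1)\log(2h+2))$ for the 2D FFT of Lemma~\ref{lem:stad-2d-fft}. The correction term costs $\sum_\alpha H_\alpha\log(H_\alpha+1)=O(sh\log(h+1))=O(n\log(h+1))$. The $s$ recursive arm calls are paths of $h$ vertices, costing $s\cdot O(h\log(h+1))$. Summing gives $O(n\log(h+1))$. For fixed $s$ we have $h=\Theta(n)$, which yields $O(n\log n)$.

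\emph{Diameter construction.} A diameter path consists of two full arms through the center, so $m=2h+1$. The remaining $s-2$ arms hang off the center anchor $p_{h+1}$ with depth $h$, giving $H=h$. The table size is therefore $M=(h+1)(2h+1)=\Theta(h^2)$. The FFT term is $O(h^2\log(h+1))$. The linear term and the correction sum $(s-2)h\log(h+1)$ are $O(n\log(h+1))$. The recursive path calls on the $s-2$ arms add another $O(n\log(h+1))$. The total is $O((n+h^2)\log(h+1))$.

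For the lower bound we do not rely on the upper-bound theorem. The algorithm explicitly materializes the zero-padded table $\widetilde A$ and the output $G$, each with $M=(H+1)m$ entries, and writing them takes $\Omega(M)=\Omega(h^2)$ time and storage. For fixed $s$, $h=(n-1)/s=\Theta(n)$, so this is $\Omega(n^2)$.

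The main subtle point is this lower-bound claim. It must be phrased as a property of the dense-table construction (any implementation that forms $G$ as in Eq.~\ref{eq:stad-2d-convolution}), not as a limitation of the problem itself. It also requires checking that $H$ really equals $h$ once $s\ge3$, i.e., that at least one arm is off the backbone. That holds because a diameter path can use only two of the $s\ge 3$ arms. The remaining steps are direct substitution into Theorem~\ref{thm:stad-decomposition-cost}.
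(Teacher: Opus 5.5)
Your proposal is correct and follows essentially the same argument as the paper, which invokes Lemmas~\ref{lem:stad-spider-center} and~\ref{lem:stad-spider-diameter} to do your top-level-call-plus-path-recursion accounting and obtains the $\Omega(n^2)$ claim from the explicit $\Theta(h^2)$-entry table built by Algo.~\ref{alg:global-anchor-2dfft}, stated for the dense implementation. Substituting directly into Theorem~\ref{thm:stad-decomposition-cost} instead of citing those lemmas is only a cosmetic difference.
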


Both constructions have maximum off-backbone depth $h$ and at most two recursion levels. Their initial interaction tables, however, contain $h+1$ and $(h+1)(2h+1)$ entries, respectively. Thus, minimizing $H =h$ alone does not control the cost. These observations motivate a selection that accounts for (i) backbone length, (ii) component depths, and (iii) remaining component sizes.

\subsection{\STADTFI{} (Adaptive with FFT): Near-Linear Runtime}
\label{sec:stad-adaptive}
\vspace{-5pt}
To control the decomposition-dependent cost in Theorem~\ref{thm:stad-decomposition-cost}, we now select a backbone using both its local computational cost and the sizes of the remaining components. For each connected subtree $S$ with $|V(S)|\geq2$, we consider two candidate backbones $\mathcal P(S)=\{P_{\mathrm{diam}},(c_S)\},$ where $P_{\mathrm{diam}}$ is a diameter path of $S$ and ($c_S$) is a backbone consisting of a single centroid vertex. The diameter candidate removes an extended path through thte tree, while deleting the centroid leaves components containing at most $|V(S)|/2$ vertices.

\vspace{-5pt}
\hspace{20pt} For a candidate backbone $P \in \mathcal P(S)$, we write $m=|V(P)|$, $H=H_P(S)$ and $M=(H+1)m$. Let $B_1,\ldots,B_k$ be the connected components of $S\setminus V(P)$, with $n_\alpha=|V(B_\alpha)|$. Motivated by the computational cost in Theorem~\ref{thm:stad-decomposition-cost}, and using natural logarithm, we assign the following \textit{score} to the candidate $P$: 
\vspace{-7pt}
\begin{equation}
\begin{aligned}
\mathcal C(S,P)
={}&|V(S)|
+M\log\!\bigl(\max\{M,2\}\bigr) 
+\sum_{\alpha=1}^{k}H_\alpha\log\!\bigl(H_\alpha+1\bigr)
+\sum_{\alpha=1}^{k} n_\alpha\log(n_\alpha+1).
\end{aligned}
\label{eq:stad-score}
\vspace{-11pt}
\end{equation}

The first three terms model the nonrecursive computation: processing vertices, computing $G$ by the 2D convolution of Lemma~\ref{lem:stad-2d-fft}, and evaluating the component corrections. The final term penalizes large remaining components to account approximately for subsequent recursive work. Therefore, we select the backbone according to : $P_S=(c_S)$ if $\mathcal C(S,(c_S))<\mathcal C(S,P_{\mathrm{diam}})$, and $P_S=P_{\mathrm{diam}}$ otherwise.
The rule is applied at every nontrivial recursive call where the selection depends only on the tree structure: evaluating the candidates requires their geometric statistics, and FFT computations are performed only after the backbone has been selected.

\begin{lemma}[Adaptive selection and local cost]
\label{lem:stad-selection}
The two candidates, their scores Eq.~\ref{eq:stad-score}, and the choice can be computed in $O(|V(S)|+b_S)$ time. The selected backbone satisfies $M_S=O(|V(S)|)$. Consequently, all nonrecursive work at the call, including selection and original-adjacency scanning, takes $O(|V(S)|\log(|V(S)|)+b_S)$ time. 
\end{lemma}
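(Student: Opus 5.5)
The proof has three parts, taken in the order of the statement: computing the candidates and their scores, bounding $M_S$, and bounding the total local work.

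\textbf{Computing the candidates and scores.} Write $n_S=|V(S)|$. We find a diameter path by double BFS restricted to $V(S)$. Each BFS scans original adjacency lists and discards the $b_S$ edges that leave $S$, so it costs $O(n_S+b_S)$. A centroid $c_S$ comes from one rooted DFS that computes subtree sizes, at the same cost. For each candidate $P$ we run Algo.~\ref{alg:decompose-components}, which the paper states takes $O(n_S+b_S)$. It returns the components $B_\alpha$, the sizes $n_\alpha$, the depths $d(u)$, the values $H_\alpha$, $H$ and $m$. Every term of Eq.~\ref{eq:stad-score} is then a sum of at most $k+2\le n_S+2$ closed-form quantities, each taking $O(1)$ arithmetic operations to evaluate. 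Comparing the two scores is $O(1)$. This gives the first claim.

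\textbf{Bounding $M_S$.} This is the main point. We split into the two possible outcomes of the rule.
\begin{itemize}
\item If the centroid is selected, then $m=1$ and $H\le n_S-1$, so $M_S\le n_S$.
\item If the diameter is selected, then $\mathcal C(S,P_{\mathrm{diam}})\le\mathcal C(S,(c_S))$. We bound the right-hand side by $O(n_S\log n_S)$:
  \begin{itemize}
  \item $M_c\le n_S$, so its term is $O(n_S\log n_S)$.
  \item $\sum_\alpha H_\alpha\le\sum_\alpha n_\alpha\le n_S$, so that sum is $O(n_S\log n_S)$.
  \item $\sum_\alpha n_\alpha\log(n_\alpha+1)\le n_S\log(n_S+1)$.
  \end{itemize}
  Hence $M\log\max\{M,2\}\le C\,n_S\log(n_S+1)$. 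The function $x\mapsto x\log\max\{x,2\}$ is increasing, so this forces $M\le C'n_S$.
\end{itemize}
The step I expect to be delicate is getting a clean constant for $M\le C'n_S$. Suppose $M>C'n_S$ with $C'\ge 2C$. Then $M\log M> 2Cn_S\log(C'n_S)\ge Cn_S\log(n_S+1)$ once $n_S\ge2$, a contradiction. Small cases are absorbed into constants.

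\textbf{Total local work.} Lemma~\ref{lem:stad-2d-fft} gives $O(M_S\log M_S)=O(n_S\log n_S)$ for the 2D convolution. The corrections $E_\alpha$ are 1D correlations of length $H_\alpha+1$. Computed by FFT they cost $O(H_\alpha\log(H_\alpha+1)+1)$ each, and their sum is $O(n_S\log n_S)$ because $\sum_\alpha (H_\alpha+1)\le n_S$. Aggregation of $A_r[t]$ and $b_\alpha$, and the final assembly by Lemma~\ref{lem:stad-cross}, are linear in $n_S+M_S$. Adding the selection cost gives $O(n_S\log n_S+b_S)$.
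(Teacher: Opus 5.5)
Your proposal is correct and follows essentially the same route as the paper: bound the selected score by the centroid candidate's score, which is $O(n_S\log n_S)$ since $M_c\le n_S$, $H_\alpha\le n_\alpha$ and $\sum_\alpha n_\alpha\le n_S$, then invert $x\log\max\{x,2\}$ to obtain $M_S=O(n_S)$ and substitute into the per-call cost. One slip, which does not affect the result: $\sum_\alpha(H_\alpha+1)\le n_S$ is false in general (a star with its centre as backbone gives $2(n_S-1)$), but $H_\alpha\le n_\alpha$ yields $\sum_\alpha(H_\alpha+1)\le 2(n_S-m)$, which is all your correction bound needs.
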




Controlling the total cost additionally requires accounting for repeated processing across recursive calls. Define $\ell_S=\max\!\left( \{0\}\cup\{|V(B)|:B\text{ is a child of }S\}  \right)$, and  $r_S=|V(S)|-\ell_S$. Thus $r_S$ counts the vertices outside one largest child, including the vertices removed in the backbone.

\begin{theorembox}
\begin{theorem}[Adaptive worst-case guarantee in near-linear time]
\label{thm:stad-adaptive-complexity}
Let $T$ be an unweighted tree with $n$ vertices. At each nontrivial recursive call, select the backbone by using Eq.~\ref{eq:stad-score}, compute its interaction table using the joint 2D FFT Eq.\ref{eq:stad-2d-convolution}, and compute each correction at its component-local depth Eq.\ref{eq:stad-E}. Under the stated computational assumptions, the algorithm computes the integral Eq.\ref{eq:stad-target} in a total running time of $O\!\left(n\log^2 n\right)$.
\end{theorem}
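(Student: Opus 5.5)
Exactness comes directly from Theorem~\ref{thm:stad-decomposition-cost}, since the adaptive rule always selects a nonempty backbone. So the work is in bounding the running time. By Lemma~\ref{lem:stad-selection}, the nonrecursive work at a call on $S$ is $O(|V(S)|\log|V(S)|+b_S)$, and the selected backbone has $M_S=O(|V(S)|)$. The total cost is therefore
\[
O\Bigl(\sum_{S\in\mathcal R}|V(S)|\log n+\sum_{S\in\mathcal R}b_S\Bigr).
\]
The boundary term is harmless: each edge of $T$ has at most two endpoints, and at any recursion depth the subtrees are disjoint. So $\sum_S b_S$ is at most $2(n-1)$ times the number of levels in which an edge is cut, and that is dominated by the vertex sum below. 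It therefore suffices to show $\sum_{S\in\mathcal R}|V(S)|=O(n\log n)$.

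I would prove this by charging. A vertex $v$ contributes $1$ to $|V(S)|$ for every call $S$ on its root-to-leaf path in the recursion tree. Using $\ell_S$ and $r_S$, charge $|V(S)|$ as follows: if $S$ has a child of size $\ell_S\le |V(S)|/2$, then every vertex of $S$ is in a set at least halving in size. Otherwise, charge $|V(S)|$ to the $r_S$ vertices outside the largest child. The cleanest route is a potential argument: prove by induction on $|V(S)|$ that the total size work $W(S)=\sum_{S'\subseteq\text{rec}(S)}|V(S')|\le c\,|V(S)|\log_2(2|V(S)|)$. When the centroid is selected, all children have size at most $|V(S)|/2$, and the induction closes immediately, as in the standard $O(n\log n)$ centroid bound. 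The difficulty is the case where the diameter is selected and leaves a large child.

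The main obstacle is that case: a diameter backbone may leave a child of size close to $|V(S)|$. I would exploit the score. The diameter is chosen only when $\mathcal C(S,P_{\mathrm{diam}})\le\mathcal C(S,(c_S))$. The penalty $\sum_\alpha n_\alpha\log(n_\alpha+1)$ for the centroid is at most $(|V(S)|-1)\log(|V(S)|/2+1)$, while the diameter's penalty includes $\ell_S\log(\ell_S+1)$. Comparing the two should yield an inequality of the form $\ell_S\log(\ell_S+1)\le |V(S)|\log(|V(S)|/2+1)+O(|V(S)|)$. More usefully, the convexity of $x\log x$ gives $\sum_\alpha n_\alpha\log n_\alpha\le |V(S)|\log|V(S)|-\Omega(r_S)$ only when $r_S$ is large. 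I would try to carry the induction with the potential $\Phi(S)=|V(S)|\log(|V(S)|+1)$ directly. The score comparison says that the diameter choice's children have total potential at most that of the centroid's children plus the local-cost difference. The centroid's children have total potential at most $\Phi(S)-|V(S)|\log 2+O(1)$. This gives $\sum_{\text{children}}\Phi\le \Phi(S)-\Omega(|V(S)|)+O(\text{local terms})$.

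If the local terms (the $M\log M$ and correction terms, which are $O(|V(S)|\log|V(S)|)$) spoil this drop, I would fall back on a weaker claim. The inequality would show that the potential drops by $\Omega(|V(S)|)$ up to an additive $O(|V(S)|\log n)$ paid directly. The recursion depth along any vertex's chain would then be $O(\log n)$ levels where it is charged, giving $\sum_S|V(S)|\log n=O(n\log^2 n)$ in either case.
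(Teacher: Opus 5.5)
\textbf{Verdict: there is a genuine gap.} The case you flag (diameter selected, large child) is the whole difficulty, and neither your main argument nor your fallback resolves it.

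Your reduction is sound. By Lemma~\ref{lem:stad-selection}, the nonrecursive work at $S$ is $O(|V(S)|\log|V(S)|+b_S)$, so $\sum_{S\in\mathcal R}|V(S)|=O(n\log n)$ would finish the proof. That estimate is in fact true: it is Eq.~\ref{eq:stad-specialized-volume}, established inside the proof of Theorem~\ref{thm:stad-adaptive-geometry}. But you do not prove it.

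Write $s=|V(S)|$, $m=|V(P)|$, $h=H_P(S)$, $\varphi(t)=t\log(t+1)$, and let $n_{c,j},H_{c,j}$ be the centroid candidate's component sizes and depths. When the diameter wins and leaves a child with more than $s/2$ vertices, $c\notin V(P)$, and the score comparison only gives
\[
\sum_\alpha\varphi(n_\alpha)\le\sum_j\varphi(n_{c,j})+\Bigl[M_c\log\max\{M_c,2\}+\sum_j H_{c,j}\log(H_{c,j}+1)\Bigr]-\Bigl[M\log M+\sum_\alpha H_\alpha\log(H_\alpha+1)\Bigr].
\]
The first bracket can be $\Theta(s\log s)$, far more than the $s\log 2$ drop supplied by the centroid's children. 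So the induction closes only if the diameter's table term $M\log M$ pays for that bracket. Proving this needs geometry absent from your sketch:
\begin{itemize}
\item $P$ lies in a single component $A$ of $S\setminus\{c\}$.
\item The diameter bounds the eccentricity of $c$, so the centroid's table term plus its correction on $A$ is at most $2m\log(m+1)$.
\item Every other component $C_i$, of size $a_i$, has depth at most $\min\{a_i,h\}$, because its paths to $P$ pass through $c$.
\item $h\le(m-1)/2$, so $(h+1)^2\le M$, and $M\log M-2m\log(m+1)\ge(h+1)^2$ once $h\ge2$.
\item A merging estimate in the spirit of Lemma~\ref{lem:stad-merge-penalty}, using that all centroid parts are at most $s/2$, bounds $\sum_i\min\{a_i,h\}\log(h+1)+\varphi(|V(A)|)+\sum_i\varphi(a_i)-\varphi(s)$ by $(h+1)^2-s\log 2$.
\end{itemize}
This is Case~2 in the proof of Theorem~\ref{thm:stad-adaptive-geometry}. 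The paper's proof of the present theorem uses the same ingredients for a different local statement, Lemma~\ref{lem:stad-call-charge}: all local work at $S$ is $O(r_S\log(2|V(S)|))$. It then charges $\log(2|V(S)|)$ to each of the $r_S$ vertices outside a largest child, and a halving argument limits every vertex to $1+\lfloor\log_2 n\rfloor$ charges.

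Your fallback does not replace this step.
\begin{itemize}
\item A potential drop of $\Omega(|V(S)|)$ ``up to an additive $O(|V(S)|\log n)$'' is vacuous, since $\sum_\alpha\varphi(n_\alpha)\le\varphi(|V(S)|)$ holds for every backbone.
\item Paying $O(|V(S)|\log n)$ directly at each call is the very sum you are trying to bound.
\item The halving argument does limit each vertex to $O(\log n)$ calls in which it lies outside a largest child. But charging $|V(S)|$ to those $r_S$ vertices gives each of them $|V(S)|/r_S$, and nothing in your sketch bounds that ratio when the largest child is big. Lemma~\ref{lem:stad-call-charge} supplies exactly this bound, for the whole local work and not just $|V(S)|$.
\end{itemize}

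Separately, your boundary estimate (a per-level count times the number of levels) is not dominated by $\sum_S|V(S)|$ unless the recursion depth is $O(\log n)$, which you have not shown. Use instead that each ancestor backbone sends at most one edge into $S$. Then $b_S\le j_S\le m_Sj_S$, where $j_S$ is the recursion depth of $S$, and double counting gives $\sum_S b_S\le\sum_S|V(S)|-n$.
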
 
\end{theorembox}

The key step, proved in Lemma~\ref{lem:stad-call-charge} in Appendix~\ref{app:proofs}, is the stronger local estimate of $|V(S)|+M_S\log(2M_S) +\sum_{\alpha=1}^{k_S}H_{S,\alpha}\log(H_{S,\alpha}+1)  =O\!\left(r_S\log(|V(S)|)\right).$ It follows from the comparison of the complete candidate scores.
This universal bound can be refined by retaining the geometry of the backbones selected throughout the recursion.
\begin{theorembox}
\begin{theorem}[Adaptive complexity in terms of backbone geometry]
\label{thm:stad-adaptive-geometry}
Under the assumptions of Theorem~\ref{thm:stad-adaptive-complexity}, let $\mathcal R$ be the resulting recursive calls and write $m_S=|V(P_S)|$, $H_S=H_{P_S}(S)$, and $M_S=(H_S+1)m_S$, with $m_S=1$ and $H_S=0$ for singleton calls. The total running time satisfies $ T_{\mathrm{adaptive}} = O\!\left(\left(n+\sum_{S\in\mathcal R}m_SH_S\right)\log n \right).$
\end{theorem}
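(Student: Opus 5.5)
Our starting point is the decomposition-dependent bound of Theorem~\ref{thm:stad-decomposition-cost}. With the adaptive rule, backbone selection at a call $S$ costs $O(|V(S)|+b_S)$ by Lemma~\ref{lem:stad-selection}, so that theorem applies and the total cost is
\[
O\Bigl(\sum_{S\in\mathcal R}(|V(S)|+b_S)+\sum_{S\in\mathcal R}M_S\log(2M_S)+\sum_{S\in\mathcal R}\sum_{\alpha}H_{S,\alpha}\log(H_{S,\alpha}+1)\Bigr).
\]
We bound each of the three sums separately by $O\bigl((n+\sum_S m_SH_S)\log n\bigr)$.

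\textbf{The table term.} Lemma~\ref{lem:stad-selection} gives $M_S=O(|V(S)|)\le O(n)$, so $\log(2M_S)=O(\log n)$. Also $M_S=(H_S+1)m_S=m_SH_S+m_S$. The backbones $P_S$ over all calls are vertex-disjoint: each vertex is removed exactly once, as part of the backbone of the unique call in which it is deleted, with singleton calls removing their single vertex. Hence $\sum_S m_S=n$, and the table term is $O\bigl((n+\sum_S m_SH_S)\log n\bigr)$.

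\textbf{The correction term.} For each child $B_\alpha$ of $S$ we have $H_{S,\alpha}\le |V(B_\alpha)|\le n$, so its logarithmic factor is $O(\log n)$. It remains to bound $\sum_\alpha H_{S,\alpha}$. Each component $B_\alpha$ contains a vertex at depth exactly $r$ for every $1\le r\le H_{S,\alpha}$, namely the vertices on a path from its attachment down to its deepest vertex. Therefore $H_{S,\alpha}\le |V(B_\alpha)|$, and summing over components gives $\sum_\alpha H_{S,\alpha}\le |V(S)|$. This alone is too weak, since it only bounds the term by the total subproblem size. We therefore charge differently. Components sharing an anchor $p_t$ of depth $H_{S,\alpha}$ could be numerous, so we cannot simply bound $\sum_\alpha H_{S,\alpha}$ by $m_SH_S$. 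Instead we use the first term, the sum of the $|V(S)|$, which we must bound anyway.

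\textbf{The vertex term: the main obstacle.} Naively, $\sum_S |V(S)|$ equals $n$ times the recursion depth, and the depth need not be $O(\log n)$ when diameters are chosen. This is where I would invoke the local estimate behind Theorem~\ref{thm:stad-adaptive-complexity} (Lemma~\ref{lem:stad-call-charge}): the nonrecursive cost at $S$ is $O(r_S\log|V(S)|)$, where $r_S=|V(S)|-\ell_S$. Summing $r_S$ over calls is the standard heavy-path-style charge. Each vertex outside the largest child is charged at $S$, and each time a vertex is charged, the subproblem containing it shrinks to at most $|V(S)|/2$ only on light steps. This bounds the total by $n\log n$ only, giving $n\log^2 n$, which is too weak here.

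The refinement I would aim for is to show that $|V(S)|+\sum_\alpha H_{S,\alpha}\log(H_{S,\alpha}+1)$ is charged to $m_S+m_SH_S$ plus the vertices removed, up to a $\log n$ factor. The attempt runs through the score comparison.
\begin{itemize}
\item If the centroid is chosen, then $m_S=1$. The components have size at most $|V(S)|/2$, and the centroid case contributes $|V(S)|$ over $O(\log n)$ levels along any root-to-leaf chain of centroid calls.
\item If the diameter is chosen, its score is no larger than the centroid's. We would use this inequality to bound $|V(S)|+\sum_\alpha n_\alpha\log(n_\alpha+1)$ for the diameter by the centroid score, which in turn is controlled in terms of $m_SH_S$ and $|V(S)|\log|V(S)|$.
\end{itemize}

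I expect this step, converting the score inequality into an amortized charge of $|V(S)|$ against $m_SH_S\log n$ plus a per-vertex $O(\log n)$ budget, to be the real difficulty. I would first try a potential $\Phi(S)=|V(S)|\log(|V(S)|+1)$. The recursive penalty term $\sum_\alpha n_\alpha\log(n_\alpha+1)$ in the score is designed precisely so that $\Phi(S)-\sum_\alpha\Phi(B_\alpha)$ pays for the local work. The aim is to telescope this over the recursion to $\Phi(T)=O(n\log n)$, with all residual terms absorbed into $\sum_S m_SH_S\log n$.
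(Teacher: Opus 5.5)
Your table-term bound matches the paper's, and the potential $\varphi(t)=t\log(t+1)$ you name is the one the paper uses. There are, however, two genuine gaps.

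\textbf{Correction term.} You bound $\sum_\alpha H_{S,\alpha}\log(H_{S,\alpha}+1)$ by $|V(S)|\log n$ and then ``use the first term''. Summed over calls this gives $\log n\cdot\sum_S|V(S)|$, so you would need $\sum_S|V(S)|=O(n+\sum_S m_SH_S)$. That is false. On a complete binary tree, each call (hub or diameter) is the full subtree below a distinct vertex and has $m_S,H_S=O(\log|V(S)|)$, so $\sum_S m_SH_S=O(n)$. Yet $\sum_S|V(S)|=\Theta(n\log n)$ for either choice of candidate. Your route therefore yields $n\log^2 n$ (or $n\log n\log\log n$ if you keep $\log(H+1)$), where the theorem asserts $O(n\log n)$.

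Your last paragraph's alternative, paying the corrections at $S$ from the potential drop plus a residual $(m_S+m_SH_S)\log n$, also fails. Take a spider with $k\approx\log h$ arms of length $h$; the root selects the hub. Its corrections total $kh\log(h+1)\approx h\log^2 h$, while $\varphi(n)-k\varphi(h)\approx kh\log k$ and $(1+H_S)m_S\log n\approx h\log h$.

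The missing idea is to charge each component's correction to the table of the \emph{child call} it spawns. Let $r\in V(B)$ be the vertex adjacent to the parent backbone. Then $H_{S,B}=1+\max_{v\in V(B)}\operatorname{dist}_B(r,v)\le\operatorname{diam}(B)+1$. If $B$ selects a diameter, $m_B=\operatorname{diam}(B)+1$. If it selects a centroid, $\operatorname{diam}(B)\le 2H_B$ and $M_B=H_B+1$. In every case, singletons included, $H_{S,B}+1\le 2M_B$. So all corrections together cost at most $2\sum_{B\neq T}M_B\log(2M_B)$, which your table-term bound already covers.

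\textbf{Vertex term.} Here you stop at a plan. Write $s=|V(S)|$. The substance is the per-call inequality $D_S:=\varphi(s)-\sum_B\varphi(|V(B)|)\ge s\log 2$. It pays for $s$ alone, with no residual, and telescopes to $\sum_S|V(S)|\le n\log_2(n+1)$.

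When the comparison centroid $c$ lies on the selected backbone (in particular when $(c)$ is selected), the inequality is elementary: each child has at most $s/2$ vertices and all children together have at most $s-1$.

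The real case is a selected diameter $P$ with $c\notin V(P)$, so $2\le h\le(m-1)/2$. Bound the centroid's table and correction terms by $2m\log(m+1)+\sum_i\min\{a_i,h\}\log(h+1)$, where the $a_i$ are the sizes of the components of $S\setminus\{c\}$ avoiding $P$. Combined with $\mathcal C(S,P)\le\mathcal C(S,(c))$, this gives $D_S\ge M\log M-2m\log(m+1)-\widehat\Delta_h$. Here $\widehat\Delta_h=\sum_j\bigl(\min\{t_j,h\}\log(h+1)+\varphi(t_j)\bigr)-\varphi(s)$, and $t_1,\dots,t_q$ are $1$ together with the sizes of all components of $S\setminus\{c\}$.

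You then still need two estimates: a sharpened merge bound $\widehat\Delta_h\le(h+1)^2-s\log 2$, which uses that every $t_j\le s/2$, and $M\log M-2m\log(m+1)\ge(h+1)^2$. Your count of centroid levels along a chain does not survive interleaving with diameter calls. Without this inequality, the $O(n\log n)$ bound on $\sum_S|V(S)|$, and with it the theorem, remains unproved.
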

\end{theorembox}

The quantity $\sum_S m_SH_S$ retains the contribution of each selected backbone instead of replacing all depths by their maximum. In particular, the bound becomes $O(n\log n)$ whenever this sum is $O(n)$. Experiments in Section~\ref{sec:synthetic-benchmarks} provide empirical support for this analysis by examining runtime across tree families and illustrate the practical speedups.

\begin{corollary}[Adaptive integration on equal-arm spiders]
\label{cor:stad-adaptive-equal-spider}
Let $T$ be a spider with $s\geq3$ arms of $h\geq1$ edges each, so that $n=1+sh$. The adaptive rule selects the singleton center at the initial call and the entire path at each remaining arm. Its total running time is $O\!\left(n\log(h+1)\right)\subseteq O\!\left(n\log n\right)$.
\end{corollary}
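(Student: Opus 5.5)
The plan is to establish the two selection claims by evaluating the score Eq.~\ref{eq:stad-score} directly, and then to read off the running time from Proposition~\ref{prop:stad-spider-comparison}. That proposition already gives $O(n\log(h+1))$ for the center-then-arms construction, so only the selections need proof. I will \emph{not} use Theorem~\ref{thm:stad-adaptive-geometry} for the cost, because it only yields $O(n\log n)$ rather than the sharper $\log(h+1)$ factor.

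\textbf{Step 1: the candidates at the initial call.} First I identify the centroid. Deleting the center leaves $s$ arms of $h\le n/2$ vertices. Deleting any other vertex leaves a component containing the center and at least $s-1\geq2$ full arms, hence more than $n/2$ vertices. So the centroid is unique and equals the center. Every diameter path runs along two full arms, so $m=2h+1$ and $H=h$; the score is the same for every choice of diameter path.

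\textbf{Step 2: the comparison at the initial call.}
\begin{itemize}[leftmargin=*]
\item For the center: $M=h+1$, and the components are $s$ arms with $n_\alpha=H_\alpha=h$. The score is $n+(h+1)\log(\max\{h+1,2\})+2sh\log(h+1)$.
\item For the diameter: $M=(h+1)(2h+1)$, and the components are $s-2$ arms with $n_\alpha=H_\alpha=h$. The score is $n+M\log M+2(s-2)h\log(h+1)$.
\end{itemize}
The center is therefore strictly preferred if
\[
(h+1)\log(\max\{h+1,2\})+4h\log(h+1)<(h+1)(2h+1)\log\bigl((h+1)(2h+1)\bigr).
\]
This follows from $(h+1)(2h+1)\geq 5h+1$ for $h\ge1$, together with $\log\bigl((h+1)(2h+1)\bigr)>\log(\max\{h+1,2\})$.

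\textbf{Step 3: the calls on the remaining arms.} Each arm is a path $S$ on $h$ vertices; for $h=1$ it is a singleton and no choice is made. For $h\ge2$:
\begin{itemize}[leftmargin=*]
\item The diameter backbone is the whole path, with $m=h$, $H=0$ and no components. Its score is $h+h\log(\max\{h,2\})$.
\item The centroid is a middle vertex, with $M=1$. It leaves components of sizes $a,b$ with $a+b=h-1$ and $a,b\ge\lfloor (h-1)/2\rfloor$, each having $H_\alpha=n_\alpha$. Its score is $h+\log 2+2\sum_\alpha n_\alpha\log(n_\alpha+1)$.
\end{itemize}
Ties go to the diameter, so it suffices to show
\[
h\log(\max\{h,2\})\le \log 2+2\bigl(a\log(a+1)+b\log(b+1)\bigr).
\]
By convexity, the right-hand side is at least $\log 2+2(h-1)\log\bigl((h+1)/2\bigr)$. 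This dominates $h\log h$ once $h$ is moderately large, since $2(h-1)\log((h+1)/2)\sim 2h\log h$. The small cases $h=2,3,4,\dots$ up to that threshold are checked by hand; for example, at $h=2$ the scores are $2+2\log2$ against $2+3\log 2$. After this, the arm calls have no components, so the recursion ends.

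\textbf{Step 4: the cost.} With both selections established, the recursion is exactly the center construction of Proposition~\ref{prop:stad-spider-comparison}. Equivalently, Theorem~\ref{thm:stad-decomposition-cost} gives
\[
O\bigl(n+(h+1)\log(h+1)+sh\log(h+1)+s\,h\log(2h)\bigr)=O\bigl(n\log(h+1)\bigr),
\]
and $\log(h+1)\le\log n$ gives the inclusion in $O(n\log n)$.

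\textbf{Main obstacle.} The delicate step is Step 3. The asymptotic comparison is easy, but a clean inequality valid for all $h\ge2$ needs care with floors and small $h$. I would first try the convexity bound, and fall back to a finite direct check below the threshold where $2(h-1)\log((h+1)/2)\ge h\log h$ starts to hold.
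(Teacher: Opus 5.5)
Your proposal follows the paper's proof. The root comparison reduces, exactly as there, to $M\log M>(5h+1)\log(h+1)$ via $M-(5h+1)=2h(h-1)\geq 0$. The arm comparison shows the whole path is never beaten, with ties going to the diameter. The cost is then read off from the center construction of Proposition~\ref{prop:stad-spider-comparison}.

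There is one slip in Step 3: the singleton centroid of an arm has $m=1$ but $H=b$, so its table size is $M=b+1$, not $1$. For example, at $h=2$ its score is $2+4\log 2$, not $2+3\log 2$. Because your value underestimates the centroid's score, your inequality still implies the selection, so the conclusion stands. With the correct term $(b+1)\log(b+1)$, however, the paper gets the chain $(2a+3b+1)\log(a+1)\geq 2t\log(a+1)\geq t\log t$, valid for all $t\geq 3$. That chain removes the need for your convexity bound and the finite case check, which you left only sketched.
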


For kernels of the form : sums of $J$ exponentials i.e. $f(d)=\sum_{\ell=1}^{J}c_\ell\lambda_\ell^d$, Appendix~\ref{sec:stad-specialized} replaces FFT with sweep recurrences and yields an $O(Jn\log n)$ adaptive bound.

%% file: sections/synthetic_experiments.tex
We benchmark scalar tree-field integration with the kernel $f(d)=1/(1+d)$. We compare brute force evaluation and the centroid-based FTFI method~\cite{choromanski2024fast} against the variants of \STADTFI defined in Section \ref{sec:algo}. Fig.~\ref{fig:tree-family-runtime} shows four representative tree families chosen to isolate the effects of (i) the backbone decomposition, (ii) the 2D convolution, and (iii) adaptive separator selection. Appendix~\ref{app:synthetic-benchmarks} shows complementary results for other trees.

\vspace{-7pt}
\begin{figure}[htbp]
    \centering
    \includegraphics[width=0.9\linewidth]{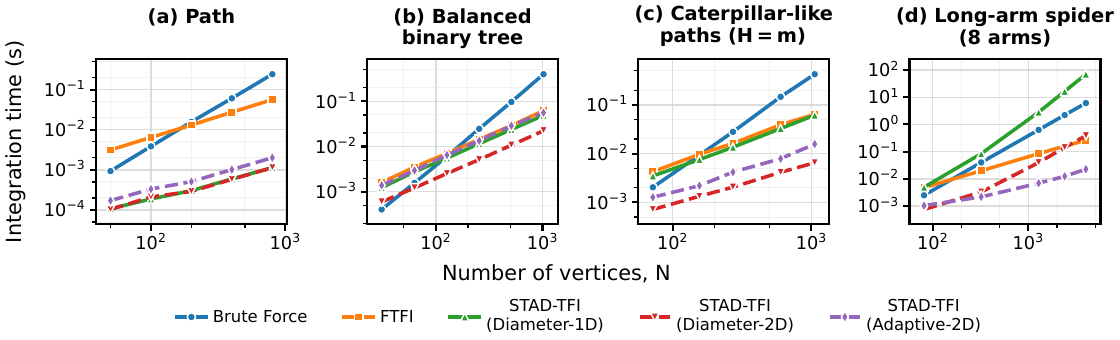}
    \vspace{-8pt}
    \caption{Integration time as a function of number of vertices for kernel $f(d)=1/(1+d)$. Diameter-1D versus Diameter-2D isolates the convolution strategy; Diameter-2D versus Adaptive-2D measures the effect of separator selection. See Table \ref{tab:synthetic-largest} in Appendix \ref{app:synthetic-largest} for detailed numbers.}
    \label{fig:tree-family-runtime}
    \vspace{-12pt}
\end{figure}
\begin{enumerate}[noitemsep, topsep=0pt, label=\roman*)]
    \item \textit{Effect of backbone decomposition.} Fig.\ref{fig:tree-family-runtime}(a) shows that on a path of length $N=800$, Diameter-2D reduces runtime from $59.31$\,ms for FTFI to $1.15$\,ms, a $51.42\times$ speedup. Since Diameter-2D is only  $1.03\times$ faster then Diameter-1D, nearly all of the improvement comes from exploiting the full path as a backbone. 
    \item \textit{Effect of 2D convolution.} On the largest balanced binary tree ($N=1023$) Fig.\ref{fig:tree-family-runtime}(b) and caterpillar with long hanging paths ($N=1056$) Fig.\ref{fig:tree-family-runtime}(c), replacing 1D convolution with the joint 2D FFTs yields $2.08\times$ and $9.40\times$ speedups, respectively.These comparisons isolate the computational gain from 2D convolution.
    \item \textit{Effect of adaptive separator selection.} On the eight-arm spider Fig.\ref{fig:tree-family-runtime}(d) with $N=4001$: Adaptive-2D takes $20.33$\,ms, compared with $259.52$\,ms for FTFI, giving $12.77\times$ speedups, attributable to choosing a more suitable separator. However this benefit is structure dependent: on the largest path, balanced binary tree, and caterpillar, it takes $1.81\times$, $2.38\times$, and $2.50\times$ as long as Diameter-2D, respectively, exposing the constant overhead of adapative selection when the diameter backbone is already well suited to the tree. 
\end{enumerate}

%% file: sections/sinkhornexp.tex
In this section, we study a practical application in which repeated kernel-vector products are a computational bottleneck: \textit{entropically regularized optimal transport (OT) on weighted mesh graphs}~\citep{montesuma2024recent}. For a mesh graph $G$ with $n$ vertices and Euclidean edge weights, the transportation cost is its graph-geodesic distance $d_G$. We denote $D_G=\left[d_G(i,j)\right]_{i,j=1}^{n}$ as the corresponding all-pairs graph geodesic distance matrix. The corresponding Sinkhorn kernel is $K_G(i,j)=\exp(-d_G(i,j)/\varepsilon)$, where $\varepsilon>0$ is the regularization parameter. Each Sinkhorn iteration requires two kernel-vector products~\citep{cuturi2013sinkhorn}. Although $G$ is sparse, $K_G$ is dense, requiring $O(n^2)$ storage and $O(n^2)$ work per iteration under explicit evaluation~\citep{choromanski2026near}.

Replacing $d_G$ with the distance $d_T$ on a spanning tree $T\subseteq G$ turns each kernel-vector product into $(K_Tx)_i = \sum_{j=1}^{n} \exp(-d_T(i,j)/\varepsilon)x_j,$ which can be evaluated using our tree-field integrators. This replacement also changes the transportation geometry and can affect both the transport plan ($\Pi$) and its cost ($D$). We therefore evaluate separately \textit{(i) the accuracy of the spanning-tree metric approximation} and \textit{(ii) the runtime of different integrators within Sinkhorn on a fixed tree}. Our experiments use 28 Thingi10K meshes from the collection evaluated by GenusSink~\citep{choromanski2026near}, containing $2{,}771$-$12{,}472$ vertices. Appendix~\ref{app:thingi-details} provides the optimal-transport formulation, Sinkhorn updates, distribution construction, and additional experimental details and results.

\subsubsection{Thingi10K: Tree-Metric Approximation}
\label{thingi_treeapprox}
\vspace{-5pt}
For every mesh graph $G$, we compare seven spanning-tree constructions $T \subseteq G$, described in Appendix \ref{app:thingi-trees}. Within each mesh, all tree constructions are evaluated using the same source ($a$) and target ($b$) distributions and regularization parameter ($\varepsilon$); their construction is detailed in Appendix \ref{app:thingi-data}. Since a spanning tree retains only $n-1$ of the original graph edges, we have $d_T(i,j) \geq d_G(i,j)$ for all vertex pairs $i,j \in V$. 

\vspace{-5pt}
\hspace{20pt }To evaluate the optimal transport approximation results due to the shift in geometry from a mesh to a tree, we compute the reference plan $\Pi_G$ using $D_G$ and, independently for each tree, $\Pi_T$ using $D_T$. We evaluate both plans under the original mesh metric: $C_G(\Pi)=\langle \Pi,D_G\rangle$, with $C_{\mathrm{ref}}=C_G(\Pi_G)$. We report:
\vspace{-5pt}
\begin{itemize}[noitemsep, topsep=0pt]
    \item \textbf{Transport-cost error:} $E_{\mathrm{cost}}(T)=|C_G(\Pi_T)-C_{\mathrm{ref}}|$, and the relative OT error is  $E_{\mathrm{rel}}(T)=E_{\mathrm{cost}}(T)/C_{\mathrm{ref}}$. Evaluating $\Pi_T$ under $D_G$ measures whether optimizing with the approximate tree metric produces a good transport plan for the original mesh problem.; normalization supports comparisons across geometric scales.

    \item \textbf{Plan total variation:} $E_{\mathrm{TV}}(T)= \frac{1}{2}\sum_{i,j}|(\Pi_T)_{ij}-(\Pi_G)_{ij}|$. For unit-mass plans, this lies in $[0,1]$, with zero indicating identical plans.  Since both \(P_T\) and \(P_G\) are probability transport plans with total mass one, the factor \(1/2\) avoids double-counting the mass that must be reassigned when moving from one plan to the other.

    \item \textbf{Relative Sinkhorn-kernel error:} $E_K(T)= \|K_T-K_G\|_F/\|K_G\|_F$.
\end{itemize}

\vspace{-1pt}
\begin{table}
\centering
\caption{
Tree approximations for geodesic Sinkhorn on Thingi10K. Results are reported as mean $\pm$ standard deviation across the evaluated meshes. Lower is better for all metrics.
}
\label{tab:thingi_ot_tree}
\vspace{-10pt}
\small
\setlength{\tabcolsep}{4.5pt}
\renewcommand{\arraystretch}{1.15}

\begin{tabular}{lcccc}
\toprule
\textbf{Tree Approximation}
& \shortstack{\textbf{Rel. OT Error}}
& \shortstack{\textbf{Plan TV}}
& \shortstack{\textbf{Kernel Error}} \\
\midrule

Minimum Spanning Tree (MST)
& $0.028 \pm 0.027$
& $0.329 \pm 0.155$
& $0.680 \pm 0.135$\\

Random Shortest-Path Tree (Random SPT)
& $0.033 \pm 0.047$
& $0.183 \pm 0.082$
& $0.677 \pm 0.050$ \\



Diameter/Backbone Spanning Tree
& $0.021 \pm 0.020$
& $\mathbf{0.124 \pm 0.063}$
& $0.599 \pm 0.155$ \\

\citet{alon1995graph} (AKPW) Tree
& $\mathbf{0.018 \pm 0.025}$
& $0.236 \pm 0.101$
& $\mathbf{0.573 \pm 0.100}$ \\

\citet{abraham2012using} Tree
& $0.026 \pm 0.048$
& $0.211 \pm 0.100$
& $0.677 \pm 0.072$ \\

\bottomrule
\end{tabular}
\vspace{-15pt}
\end{table}


Table~\ref{tab:thingi_ot_tree} shows that Alon-Karp-Peleg-West (AKPW) based tree approximation achieves the lowest mean \textit{\textbf{relative transport-cost error}}, $1.8\%$ versus $2.8\%$ for the MST, an approximately $35.7\%$ reduction. The Backbone Tree produces the \textit{\textbf{closest transport plans}}: its mean Plan TV is $0.124$, compared with $0.329$ for the MST and $0.236$ for AKPW, reductions of approximately $62.3\%$ and $47.5\%$, respectively, while maintaining a low mean relative cost error of $2.1\%$. AKPW also gives the \textit{\textbf{most accurate kernel approximation}}, reducing the mean kernel error from $0.680$ for the MST to $0.573$, a $15.7\%$ improvement. The Backbone Tree has the second-lowest mean kernel error at $0.599$, an $11.9\%$ improvement over the MST.\footnote{Kernel error measures discrepancies across all vertex pairs, whereas transport cost depends on the mass allocated after Sinkhorn scaling. Thus, large kernel errors can coexist with small transport-cost errors.} 

\vspace{-5pt}
\hspace{20pt} The MST's combination of $2.8\%$ cost error, $0.329$ Plan TV and a kernel error of $0.680$ demonstrates that similar total transport costs $C$ can conceal substantial differences in mass allocation and kernel approximation. Since we focus on transport-cost minimization and all tested trees achieve comparable OT error, we choose the MST for downstream runtime experiments to assess whether \STADTFI[s] can achieve this accuracy with reduced computation time.


\vspace{-1pt}
\subsubsection{Thingi10K: Sinkhorn Runtime Scaling}
\label{sec:thingi_runtime}
\vspace{-5pt}
We now benchmark Sinkhorn on MSTs of all 28 Thingi10K meshes to isolate the effect of the field integrator from tree construction. We compare eight implementations: two quadratic brute-force baselines; three FTFI-based variants, FTFI~(\textit{1D-FFT}), FTFI~(\textit{SpclK-Centroid}) \citep{choromanski2024fast} and FTFI~(\emph{SpclK-LinearExactDP}) \citep[Lemma~3.5]{choromanski2022block}; and three \STADTFI{} variants, \emph{Adaptive-2D-FFT}, \emph{SpclK-Diameter}, and \emph{SpclK-Adaptive} (Appendix~\ref{app:thingi-runtime}). The centroid baseline uses recursive decomposition, whereas the linear exact-DP baseline computes the integration through two passes over the entire tree in $O(N)$ time. 

\vspace{-6pt}
\hspace{20pt}Both FFT-based methods quantize (Appendix~\ref{app:fft_quantization}) the real-valued edge weights onto an integer grid to express the cross-interaction sums as discrete convolutions that can be evaluated using FFTs. The specialized-kernel methods operate directly on the original weighted tree metric, since the Sinkhorn kernel $f_\varepsilon(d)=e^{-d/\varepsilon}$ is the $J=1$ case of Appendix~\ref{sec:stad-specialized}, with $c_1=1$ and $\lambda_1=e^{-1/\varepsilon}$.




\begin{figure}[htbp]
    \centering
    \includegraphics[width=0.9\linewidth, trim = 0 31 0 10, clip]{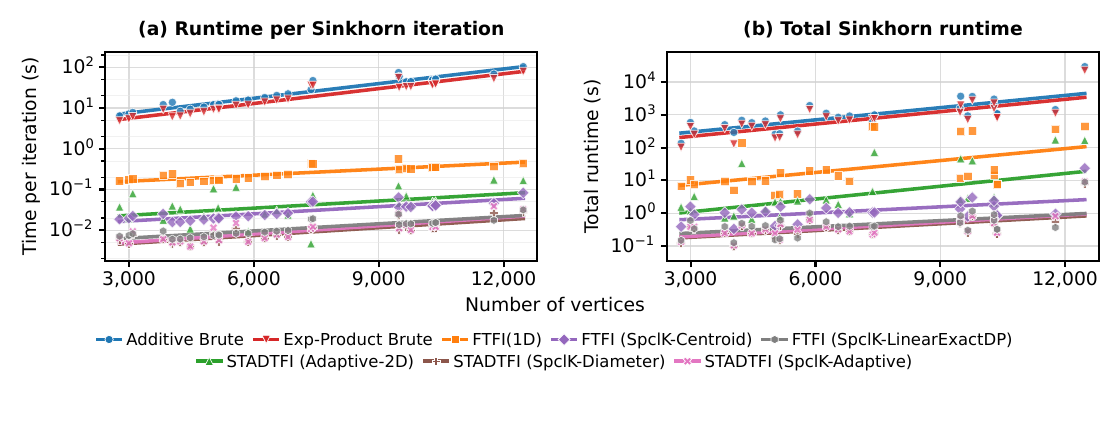}
    \vspace{-7pt}
    \caption{Runtime comparison as a function of the number of vertices. Speedup Numbers are in Table \ref{tab:sinkhorn-speedups} in Appendix~\ref{app:thingi-runtime}. \STADTFI(SpclK-Diameter) is $\approx3.35\times$ its FTFI counterpart.}
    \label{fig:sinkhorn_runtime_comparison}
    \vspace{-10pt}
\end{figure}

Fig.~\ref{fig:sinkhorn_runtime_comparison} reports per-iteration and total Sinkhorn runtimes with the corresponding speedup numbers summarized in Table~\ref{tab:sinkhorn-speedups}. The brute-force methods are slowest; exponential factorization improves their constant factor but preserves quadratic complexity. \STADTFI{}~(\emph{Adaptive-2D-FFT}) achieves a geometric-mean speedup of $6.32\times$ in total Sinkhorn runtime over FTFI~(\emph{1D-FFT}), reflecting the combined effects of separator selection and convolution strategy. The exponential-specific methods are generally fastest, using the factorization $e^{-(a+b)/\varepsilon} =e^{-a/\varepsilon}e^{-b/\varepsilon}$, where \STADTFI~(\emph{SpclK}) variants further reduce runtime relative to the FTFI~(\emph{SpclK}) variants. Total runtimes broadly preserve this ordering which also reflect iteration counts, that varies with quantization.

%% file: sections/vit.tex
\providecommand{\ftfi}{\textsc{FTFI}}
\providecommand{\dtree}{d_T}
\providecommand{\pending}[1]{\textcolor{red}{#1}}
\providecommand{\STADTFI}[1][]{\textsc{StAd-TFI}#1\xspace}
\vspace{-5pt}
We next study whether the structural advantages of \STADTFI{} translate into efficient attention for vision models. Topological Vision Transformers \citep{choromanski2024fast} incorporate spatial information through a learnable mask $M_{ij}=\exp\!\left(u\,\dtree(i,j)^2+v\,\dtree(i,j)+w\right)$, where $\dtree(i,j)$ is the hop distance between tokens $i$ and $j$ on a spanning tree of the patch grid, and $(u,v,w)$ are learned separately for each layer. The tree determines the distances used by the model and the integrator determines  the cost of evaluating the resulting mask. We, therefore, examine both (i) \textit{the predictive performance of different tree masks} and (ii) \textit{the computational cost of applying them}. For Performer attention, stacking the features $x_j=[\phi(k_j)v_j^\top\,|\,\phi(k_j)]$ allows the masked numerator and normalizer to be computed jointly through a single tree-field integration (Eq.~\ref{eq:stad-target}). The resulting field has multiple channels ($C$) per head (in our experiments, $C=4{,}160$). Consequently, for a fixed $C$, a subquadratic integrator yields masked linear attention with subquadratic complexity in the sequence length $N$ (Appendix~\ref{app:reduction}).

\vspace{-5pt}
\hspace{20pt} We conduct image-classification experiments on two datasets, CIFAR-100 and Tiny-ImageNet ($200$ classes). We \textbf{train} ViT-S models \textbf{from scratch} for 100 epochs using one fixed recipe at $224\times224$ resolution. For CIFAR-100, we evaluate patch sizes $16$ and $7$, corresponding to $N=196$ and $N{=}1{,}024$ tokens respectively. For Tiny-ImageNet, we use patch size $16$, giving $N=196$. The linear-attention baseline uses $\phi(x)=\mathrm{elu}(x)+1$ \citep{choromanski2021performer}. Following the \emph{synced} parameterization of \citet{choromanski2024fast}, mask parameters are shared across heads within each layer and initialized to zero, so that masked and unmasked models coincide at initialization. Full training details and the seed protocol appear in Appendix~\ref{app:training}.

\vspace{-5pt}
\hspace{20pt} We compare the configuration of \citet{choromanski2024fast}, which pairs \ftfi{} with a random spanning tree, with \STADTFI{} applied to a serpentine path (Fig.~\ref{fig:trees}).  Both span the unweighted $4$-neighbor patch grid, but induce different pairwise distances. The serpentine path is computationally attractive since it allows the entire tree to serve as the backbone, eliminating further recursions. Its suitability for prediction must nevertheless be assessed empirically.


\subsubsection{Accuracy Measurements}
\label{sec:vit-acc}
\vspace{-5pt}
\begin{table}[t]
\centering
\caption{Top-1 accuracy (\%, mean~$\pm$~standard deviation over
seeds) for ViT-S, \textbf{trained from scratch} under a single fixed recipe. Seed counts and protocol are given in Appendices~\ref{app:training} and~\ref{app:vit-tree-choice}.}
\vspace{-10pt}
\label{tab:vit-acc}
\begin{tabular}{lccc}
\toprule
& \multicolumn{2}{c}{CIFAR-100} & Tiny-ImageNet \\
\cmidrule(lr){2-3}\cmidrule(lr){4-4}
Configuration & $N=196$ & $N=1{,}024$ & $N=196$ \\
\midrule
Linear, unmasked            & $49.94\pm0.26$ & $51.89$
  & $36.77\pm0.49$ \\
Softmax, unmasked           & $51.71\pm0.26$ & $53.19\pm0.49$
  & $37.96\pm0.78$ \\
Linear, masked (random)     & $\mathbf{55.27\pm0.68}$
  & $\mathbf{58.02\pm0.08}$ & $\mathbf{40.14\pm0.33}$ \\
Linear, masked (serpentine) & $\mathbf{54.14\pm1.37}$
  & $\mathbf{57.84\pm0.89}$ & $\mathbf{41.35\pm0.99}$ \\
\bottomrule
\end{tabular}
\vspace{-2pt}
\end{table}


Table~\ref{tab:vit-acc} shows that both the masked linear-attention models outperform the unmasked linear and softmax baselines in all three settings. In particular, the serpentine model exceeds unmasked softmax by $2.43\%$ and $4.65\%$ on CIFAR-100 at $N=196$ and $N=1{,}024$, respectively, and by $3.39\%$ on Tiny-ImageNet. Compared with the random tree, the accuracy trails by a mere $\approx 0.18\%$-$1.13\%$ on CIFAR-100, but leads by $1.21\%$ on Tiny-ImageNet. Thus, these comparisons support three observations: (i) \textit{higher accuracy does not require a more complex spanning tree}, as demonstrated by varying the tree while keeping linear attention fixed; (ii) \textit{masked attention outperforms unmasked baselines} as shown here, and is consistent across multiple tree choices reported  in Appendix~\ref{app:vit-tree-choice}, and (iii) under the same mask, \textit{linear and softmax attention achieve comparable accuracy} (Appendix~\ref{app:training}), supporting the use of subquadratic attention.

\subsubsection{Runtime Comparisons}
\label{sec:vit-runtime}
\vspace{-5pt}
We next measure the cost of applying these masks by comparing the two integrator-tree configurations motivated above. 
The large-$N$ experiments extend the masked-attention workload; the accuracy results remain those at the trained resolutions.

Table~\ref{tab:vit-runtime} reports the runtime of the masked-attention workload with $C=4{,}160$ channels per head, ranging from the trained-model scale of $N=1{,}024$ tokens to $N{=}262{,}144$ tokens, corresponding to a $512\times512$ patch grid. We make two observations: 
\vspace{-5pt}
\begin{enumerate}[noitemsep, topsep=0pt]
    \item Dense evaluation is fastest at the trained scale, where masking costs are negligible, but it scales poorly: at $N{=}16{,}384$ it is already $4.5\times$ slower than \STADTFI{} ($36.1$ versus $8.0$\,s), and beyond approximately $4\times10^4$ tokens its $N\times N$ mask exceeds commodity memory. These cases are marked ``---'' in Table~\ref{tab:vit-runtime}. For tree-based masks, which extend beyond grids to meshes and general graphs, TFIs are consequently the only exact option at large $N$.\footnote{For masks defined by grid displacement, block-Toeplitz masking \citep{choromanski2022block} is an efficient alternative, does not extend to tree metrics.}
    \item Among tree-field integrators, \STADTFI{} on the serpentine path is the fastest configuration at every $N{\geq}4{,}096$, running $2.6$-$3.1\times$ faster than \ftfi{} on the latter's most favorable tree. On the path, \STADTFI{} evaluates the mask in $O(CN\log N)$ time, a multiplicative $O(\log N)$ overhead relative to unmasked linear attention. Table~\ref{tab:vit-crossover} in Appendix~\ref{sec:tc-runtime} compares runtimes across alternative tree structures using identical inputs.
\end{enumerate}



On a path, the distance-kernel matrix is Toeplitz, and admits established $O(N\log N)$ FFT evaluation per channel, including in attention mechanisms \citep{luo2021stable,choromanski2022block}. This makes paths a natural choice for efficient masking; among the paths screened, the serpentine traversal also attained the lowest stretch (complete tree study in Appendix~\ref{app:vit-tree-choice}). Therefore, the key contribution of \STADTFI{} is to recover this efficient path computation within a single adaptive integrator that also handles branching trees efficiently. 


\begin{table}[t]
\centering
\caption{Runtime (seconds) of the masked-attention workload as a function of $N$ (median of at least five runs;
protocol in Appendix~\ref{app:runtime}). \ftfi{} uses its most
favorable configuration (random tree).}
\label{tab:vit-runtime}
\vspace{-10pt}
\begin{tabular}{lccccc}
\toprule
Method (tree) & $1{,}024$ & $4{,}096$ & $16{,}384$ & $65{,}536$ &
$262{,}144$ \\
\midrule
Dense ($\Theta(N^2)$)   & \textbf{0.15} & 2.24 & 36.07 & --- & --- \\
\ftfi{} (random)        & 1.10 & 4.93 & 25.20 & 99.99 & 424.97 \\
\STADTFI{} (serpentine)  & 0.39 & \textbf{1.76}
  & \textbf{8.02} & \textbf{37.82} & \textbf{165.67} \\
\bottomrule
\end{tabular}
\vspace{-5pt}
\end{table}


%% file: sections/conclusion.tex
\vspace{-5pt}
We proposed in this paper the Structure-Adaptive Tree Field Integrators (\STADTFI) algorithm, providing exact integration on trees and favorable time complexity, as compared to previously most computationally efficient methods. We show a rich range of applications of \STADTFI, from geodesic Sinkhorn methods for solving Optimal Transport problems on meshes and point clouds to relative positional encoding mechanism for Vision Transformers. Our empirical evaluation was complemented with detailed algorithmic analysis of \STADTFI, confirming our empirical claims.

%% file: sections/appendix.tex
\section{Algorithmic Details and Deferred Proofs}
\label{app:stad}

\subsection{Pseudocode for Backbone Decomposition}
\label{app:stad-backbonedecomposition}

Algo.~\ref{alg:decompose-components} retains the adjacency
lists of the original tree $T$ and filters neighbors using
constant-time membership tests for $V(S)$ and $V(P)$.
Let $b_S$ denote the number of original-tree edges with exactly
one endpoint in $V(S)$. Given the backbone $P$, the algorithm
takes $O(|V(S)|+b_S)$ time and $O(|V(S)|)$ auxiliary space.

\begin{algorithm}[H]
\caption{\textsc{DecomposeOffPathComponents}$(S,P)$}
\label{alg:decompose-components}
\begin{algorithmic}[1]
\Require Connected unweighted subtree $S$, backbone
path $P=(p_1,\ldots,p_m)$, and original-tree adjacency lists
\Ensure Components $B_1,\ldots,B_k$, anchor indices $a$,
projections $\pi$, depths $d$, maximum depth $H$, and
component depths $(H_\alpha)_{\alpha=1}^{k}$

\State Initialize per-vertex records on $V(S)$; mark backbone
vertices and store their indices
\Comment{$O(|V(S)|)$}

\State Set $\pi(p_j)\gets p_j$ and $d(p_j)\gets 0$
for all $p_j\in P$
\Comment{$O(m)$}

\State Scan the original adjacency lists of backbone vertices.
Record the $k$ edges $(p_{a(\alpha)},r_\alpha)$ with
$r_\alpha\in V(S)\setminus V(P)$ and their anchor indices
$a(\alpha)$
\Comment{$O(m+\sum_{v\in V(P)}\deg_T(v))$}

\For{$\alpha=1,\ldots,k$}
    \State Run BFS from $r_\alpha$ to obtain component $B_\alpha$ and distances $\ell_\alpha(u)$ from $r_\alpha$
    \Comment{$O(\sum_{u\in V(B_\alpha)}\deg_T(u))$}

    \State Set $\pi(u)\gets p_{a(\alpha)}$ and
    $d(u)\gets\ell_\alpha(u)+1$ for all $u\in V(B_\alpha)$
    \Comment{$O(|V(B_\alpha)|)$}

    \State Record
    $H_\alpha\gets\max_{u\in V(B_\alpha)}d(u)$
    during the same pass
    \Comment{$O(|V(B_\alpha)|)$}
\EndFor

\State
$H\gets\max\bigl(\{0\}\cup
\{H_\alpha:1\leq\alpha\leq k\}\bigr)$
\Comment{$O(k+1)$}

\State \Return
$(B_1,\ldots,B_k,a,\pi,d,H,(H_\alpha)_{\alpha=1}^{k})$

\end{algorithmic}
\end{algorithm}

Each component $B_\alpha$ has exactly one edge connecting it to the backbone, with endpoints $r_\alpha$ and $p_{a(\alpha)}$. Otherwise, the connected component and backbone would form a cycle. The BFS from $r_\alpha$ rejects neighbors outside $S$ and vertices on $P$, and computes $\ell_\alpha(u)=\dist_{B_\alpha}(r_\alpha,u)$. Hence, $d(u)=\ell_\alpha(u)+1$, where the additional unit accounts for the edge connecting $r_\alpha$ to the backbone. The recorded $H_\alpha$ is therefore the maximum depth of $B_\alpha$ relative to $P$.

Different components may share the same anchor, but each BFS explores a distinct component. The backbone scan and these BFS traversals together examine the original adjacency list of every vertex in $S$ once. Since $S$ is a connected subtree, the total number of adjacency entries examined is
\[
\sum_{v\in V(S)}\deg_T(v)
=
2(|V(S)|-1)+b_S.
\]
Internal edges contribute two entries, while edges leaving $S$ contribute one. All remaining operations take $O(|V(S)|)$ time, because $m+\sum_{\alpha=1}^{k}|V(B_\alpha)|=|V(S)|$. Thus the total running time is $O(|V(S)|+b_S)$. The component lists, per-vertex records, and BFS queues require $O(|V(S)|)$ auxiliary space.

\subsection{PseudoCode for computing the global term $G$ for evaluating Cross Interactions}
\label{app:stad-fft}

\begin{algorithm}[H]
\caption{\textsc{ComputeGTerm}$(A,f,H,m)$}
\label{alg:global-anchor}
\begin{algorithmic}[1]
\Require Aggregates $A_r[t]$, kernel $f$, maximum depth $H$,
         and number of backbone vertices $m$
\Ensure Backbone interaction table $G_q[s]$

\State Initialize $G_q[s]\gets0$ for all $q,s$
       \Comment{$O((H+1)m)$}
\State Choose $L=2^{\lceil\log_2(3m-2)\rceil}$

\For{$q=0,\ldots,H$}
    \For{$r=0,\ldots,H$}
        \State Form $a[j]\gets A_r[j+1]$,
               $j=0,\ldots,m-1$
               \Comment{$O(m)$}
        \State Form
               $b[j]\gets f(q+r+|j-(m-1)|)$,
               $j=0,\ldots,2m-2$
               \Comment{$O(m)$}
        \State Compute linear convolution
               $y=a*b$ using length-$L$ FFTs with zero padding
               \Comment{$O(m\log(2m))$}
        \State Update
               $G_q[s]\gets G_q[s]+y[s+m-2]$
               for all $s=1,\ldots,m$
               \Comment{$O(m)$}
    \EndFor
\EndFor

\State \Return $G$
\end{algorithmic}
\end{algorithm}


\begin{algorithm}[H]
\caption{\textsc{ComputeGTerm2DFFT}$(A,f,H,m)$}
\label{alg:global-anchor-2dfft}
\begin{algorithmic}[1]
\Require Aggregates $A_r[t]$, kernel $f$, maximum depth $H$,
         and number of backbone vertices $m$
\Ensure Backbone interaction table $G_q[s]$

\State Form the depth-reversed array
       $\overline A[\rho,j]\gets A_{H-\rho}[j+1]$
       for $0\leq\rho\leq H$, $0\leq j\leq m-1$
       \Comment{$O((H+1)m)$}

\State Form the shifted kernel
       $\overline K[i,j]\gets f(i+|j-(m-1)|)$
       for $0\leq i\leq2H$, $0\leq j\leq2m-2$
       \Comment{$O((H+1)m)$}

\State Zero-pad both arrays to power-of-two dimensions
       at least $(3H+1)\times(3m-2)$
       \Comment{$O((H+1)m)$}

\State Compute $Y\gets\overline K*\overline A$
       using two-dimensional FFT convolution
       \Comment{$O((H+1)m\log(2(H+1)m))$}

\State Extract $G_q[s]\gets Y[q+H,s+m-2]$
       for $0\leq q\leq H$, $1\leq s\leq m$
       \Comment{$O((H+1)m)$}

\State \Return $G$
\end{algorithmic}
\end{algorithm}

\subsection{PseudoCode for computing the Cross Terms of Eq. \ref{eq:stad-split}}
\label{app:stad-cross}

\begin{algorithm}[H]
\caption{\textsc{CrossSubRoutine}$(S,P,x,f)$}
\label{alg:cross-subroutine}
\begin{algorithmic}[1]
\Require Connected unweighted subtree $S$,
         nonempty backbone $P=(p_1,\ldots,p_m)$,
         field $x$ on $V(S)$, and kernel $f$
\Ensure $\operatorname{Cross}[u]=\operatorname{Cross}_P(u)$
        for all $u\in V(S)$, and components $B_1,\ldots,B_k$

\State $(B_1,\ldots,B_k,a,\pi,d,H,(H_\alpha)_{\alpha=1}^{k})
       \gets\textsc{DecomposeOffPathComponents}(S,P)$
       \Comment{$O(|V(S)|+b_S)$}

\State Initialize $A_r[t]\gets0$ for
       $0\leq r\leq H$, $1\leq t\leq m$
       \Comment{$O((H+1)m)$}
\State Set $A_0[t]\gets x_{p_t}$ for $t=1,\ldots,m$
       \Comment{$O(m)$}
\State Scan all components and accumulate
       $A_{d(u)}[a(\alpha)]
       \gets A_{d(u)}[a(\alpha)]+x_u$
       for each $u\in V(B_\alpha)$
       \Comment{$O(|V(S)|)$}

\State $G\gets\textsc{ComputeGTerm2DFFT}(A,f,H,m)$
       \Comment{$O((H+1)m\log(2(H+1)m))$}

\State Set $\operatorname{Cross}[p_t]\gets G_0[t]$
       for $t=1,\ldots,m$
       \Comment{$O(m)$}

\For{$\alpha=1,\ldots,k$}
    \State Initialize $b_\alpha[r]\gets0$
           for $r=0,\ldots,H_\alpha$
           \Comment{$O(H_\alpha+1)$}
    \State Accumulate
           $b_\alpha[d(u)]\gets b_\alpha[d(u)]+x_u$
           for each $u\in V(B_\alpha)$
           \Comment{$O(|V(B_\alpha)|)$}

    \State Compute
           $E_\alpha[q]=\sum_{r=1}^{H_\alpha}f(q+r)b_\alpha[r]$,
           $q=0,\ldots,H_\alpha$,
           using zero-padded 1D FFT convolution after reversing
           the component's depth profile
           \Comment{$O(H_\alpha\log(H_\alpha+1))$}

    \State Set
           $\operatorname{Cross}[u]
           \gets G_{d(u)}[a(\alpha)]-E_\alpha[d(u)]$
           for each $u\in V(B_\alpha)$
           \Comment{$O(|V(B_\alpha)|)$}
\EndFor

\State \Return $(\operatorname{Cross},B_1,\ldots,B_k)$
\end{algorithmic}
\end{algorithm}

Algo.~\ref{alg:cross-subroutine} implements Lemma~\ref{lem:stad-cross}. The aggregate array $A$ includes all source vertices, while each profile $b_\alpha$ contains only the sources in $B_\alpha$. For the correction step, reversing the profile converts the dependence on $q+r$ into a convolutional difference, allowing all required values $E_\alpha[q]$, for $0\leq q\leq H_\alpha$, to be computed together by a one-dimensional FFT. Both $b_\alpha$ and $E_\alpha$ have length $H_\alpha+1$, with $b_\alpha[0]=0$; the global arrays $A$ and $G$ still use $H$. Every nonempty off-backbone component satisfies $1\leq H_\alpha\leq |V(B_\alpha)|$, since $S$ is unweighted and the path from its anchor to a deepest vertex contains $H_\alpha$ vertices in the component. Thus,
\begin{equation}
\sum_{\alpha=1}^{k}H_\alpha\leq |V(S)|-m,
\qquad
\sum_{\alpha=1}^{k}(H_\alpha+1)\leq 2(|V(S)|-m).
\label{eq:stad-component-depth-volume}
\end{equation}
Initializing and accumulating all profiles and assigning the outputs therefore take $O(|V(S)| +b_S)$ time. Including decomposition, the computation of $G$, and all component corrections, the total cost is $O\!\left(|V(S)|+b_S +(H+1)m\log\bigl(2(H+1)m\bigr) +\sum_{\alpha=1}^{k}H_\alpha\log(H_\alpha+1)\right)$ (See Proof of Theorem \ref{thm:stad-decomposition-cost}), assuming constant-time kernel evaluation and membership tests. If $k=0$, the correction sum is empty and $H=0$.

\subsection{PseudoCode for \STADTFI(Backbone) computing complete Eq. \ref{eq:stad-split} involving Recursion}
\label{app:stad-recursion}

\begin{algorithm}[H]
\caption{\textsc{Solve}$(S,x,f)$}
\label{alg:stad-solve}
\begin{algorithmic}[1]
\Require Nonempty connected unweighted subtree $S$,
         field $x$ on $V(S)$, and kernel $f$
\Ensure $w_u^{(S)}
        =\sum_{v\in V(S)}f(\dist_S(u,v))x_v$
        for all $u\in V(S)$

\If{$|V(S)|=1$}
    \State Let $u$ be the unique vertex of $S$
    \State Set $w_u^{(S)}\gets f(0)x_u$
    \State \Return $w^{(S)}$
\EndIf

\State Choose a nonempty backbone path $P\subseteq S$
       using the prescribed selection rule

\State $(\operatorname{Cross},B_1,\ldots,B_k)
       \gets\textsc{CrossSubRoutine}(S,P,x,f)$

\State Set $w_u^{(S)}\gets\operatorname{Cross}[u]$
       for all $u\in V(S)$
       \Comment{$O(|V(S)|)$}

\For{$\alpha=1,\ldots,k$}
    \State $w^{(B_\alpha)}
           \gets\textsc{Solve}
           (B_\alpha,x|_{V(B_\alpha)},f)$
    \State Update
           $w_u^{(S)}\gets w_u^{(S)}+w_u^{(B_\alpha)}$
           for all $u\in V(B_\alpha)$
           \Comment{$O(|V(B_\alpha)|)$}
\EndFor

\State \Return $w^{(S)}$
\end{algorithmic}
\end{algorithm}

Algo.~\ref{alg:stad-solve} applies the decomposition in Eq.~\ref{eq:stad-split}. Backbone vertices receive their complete contribution from $\operatorname{Cross}$. For each off-backbone vertex, recursion supplies the remaining within-component contribution. Because $P$ is nonempty, every recursive component is strictly smaller than $S$, ensuring termination. At the root call $S=T$, the returned vector is the desired field $w$.








\subsection{PseudoCode for Backbone Selection in \STADTFI(Adaptive with FFT)}

\begin{algorithm}[H]
\caption{\textsc{ChooseBackbone}$(S)$}
\label{alg:choose_backbone}
\begin{algorithmic}[1]
\Require Connected unweighted subtree $S$ with
         $|V(S)|\geq2$
\Ensure Backbone $P^\star$, either a diameter path
        or a singleton centroid

\State Find a diameter path $P_{\mathrm{diam}}$
       using two BFS traversals and parent pointers
       \Comment{$O(|V(S)|+b_S)$}

\State Find a centroid $c_S$ using a rooted DFS
       and subtree-size computation
       \Comment{$O(|V(S)|+b_S)$}

\ForAll{$P\in\{P_{\mathrm{diam}},(c_S)\}$}
       \Comment{Two candidates}
    \State $(B_1,\ldots,B_k,a,\pi,d,H,(H_\alpha)_{\alpha=1}^{k})\gets\textsc{DecomposeOffPathComponents}(S,P)$
           \Comment{$O(|V(S)|+b_S)$}

    \State Set $m\gets|V(P)|$ and $M\gets(H+1)m$
           \Comment{$O(1)$}

    \State Obtain component sizes
           $n_\alpha\gets|V(B_\alpha)|$
           for $\alpha=1,\ldots,k$
           \Comment{$O(k+1)$}

    \State Evaluate $\mathcal C(S,P)$ using
           Eq.~\ref{eq:stad-score}
           \Comment{$O(k+1)$}
\EndFor

\If{$\mathcal C(S,(c_S))
      <\mathcal C(S,P_{\mathrm{diam}})$}
      \Comment{$O(1)$}
    \State \Return $(c_S)$
           \Comment{$O(1)$}
\Else
    \State \Return $P_{\mathrm{diam}}$
           \Comment{$O(1)$; diameter wins ties}
\EndIf

\end{algorithmic}
\end{algorithm}

Each traversal scans the adjacency lists of vertices in $V(S)$ and excludes neighbors outside $S$.  Only two candidates are evaluated, and their scores use component sizes, the global depth $H$, and each component depth $H_\alpha$. Since $k\leq|V(S)|-1$, the total selection cost is $O(|V(S)|+b_S)$ time and $O(|V(S)|)$ auxiliary space, as stated in Lemma~\ref{lem:stad-selection}.

\subsection{PseudoCode for \STADTFI(Specialized Kernels)}
\begin{algorithm}[H]
\caption{\textsc{CrossSubRoutineExp}$(S,P,x,f)$}
\label{alg:stad-specialized-cross}
\begin{algorithmic}[1]
\Require Connected subtree $S$, backbone $P=(p_1,\ldots,p_m)$,
         field $x$, and supplied kernel
         $f(d)=\sum_{\ell=1}^{J}c_\ell\lambda_\ell^d$
\Ensure Cross-component contributions $\operatorname{Cross}[u]$
        and components $B_1,\ldots,B_k$

\State Obtain $(B_1,\ldots,B_k,a,\pi,d,H,(H_\alpha)_{\alpha=1}^{k})$ using
       \textsc{DecomposeOffPathComponents}$(S,P)$
       \Comment{$O(|V(S)|+b_S)$}
\State Initialize $\operatorname{Cross}[u]\gets0$
       for all $u\in V(S)$
       \Comment{$O(|V(S)|)$}

\For{$\ell=1,\ldots,J$}
    \State Generate $\lambda_\ell^0,\ldots,\lambda_\ell^H$
           by successive multiplication
           \Comment{$O(H+1)$}

    \State Initialize the working row by vertex accumulation:
           $G_0[s]\gets
           \sum_{\pi(v)=p_s}\lambda_\ell^{d(v)}x_v$
           for all $s$
           \Comment{$O(|V(S)|)$}

    \State Accumulate
           $E_\alpha[0]\gets
           \sum_{v\in V(B_\alpha)}
           \lambda_\ell^{d(v)}x_v$
           for all components
           \Comment{$O(|V(S)|)$}

    \State Forward sweep:
           $G_0[s]\gets G_0[s]+\lambda_\ell G_0[s-1]$,
           in order $s=2,\ldots,m$
           \Comment{$O(m)$}

    \State Backward sweep:
           $G_0[s]\gets
           (1-\lambda_\ell^2)G_0[s]
           +\lambda_\ell G_0[s+1]$,
           in order $s=m-1,\ldots,1$
           \Comment{$O(m)$}

    \State For every $p_s\in V(P)$, add
           $c_\ell G_0[s]$ to $\operatorname{Cross}[p_s]$
           \Comment{$O(m)$}

    \State For every $u\in V(B_\alpha)$, add
           $c_\ell\lambda_\ell^{d(u)}
           \bigl(G_0[a(\alpha)]-E_\alpha[0]\bigr)$
           to $\operatorname{Cross}[u]$
           \Comment{$O(|V(S)|)$}
\EndFor

\State \Return $(\operatorname{Cross},B_1,\ldots,B_k)$
\end{algorithmic}
\end{algorithm}

For each exponential term, the algorithm aggregates sources at their anchors, applies two backbone sweeps, and subtracts within-component contributions, requiring $O(J|V(S)|+b_S)$ total time.

\subsection{Deferred Proofs}
\label{app:proofs}
\begin{proof}[Proof of Lemma~\ref{lem:stad-distance}]
Connectivity gives an attachment from each $B_\alpha$ to $P$. Two attachment edges would form a cycle using a path in $B_\alpha$ and a segment of $P$; this also excludes two edges to the same anchor. Thus the attachment is unique. For vertices in different components, the unique connecting path runs from $u$ to $\pi(u)$, along $P$ to $\pi(v)$, and then to $v$. The same argument applies when an endpoint lies on $P$, with the corresponding depth equal to zero. The backbone segment has $|s-t|$ edges, proving the identity.
\end{proof}

\begin{proof}[Proof of Proposition~\ref{prop:stad-1d-fft}]
View $G$ as a matrix whose rows index query depths $q=0,\ldots,H$ and whose columns index backbone positions $s=1,\ldots,m$, as illustrated in Figure~\ref{fig:stad-G}.

Eq.~\ref{eq:sumofFFTterms} gives a direct way to compute this matrix: for each query coordinate $(q,s)$, sum the contributions from all source coordinates $(r,t)$. After initializing $G$ to zero, this produces four nested loops:
\[
\begin{array}{l}
\textbf{for } q=0,\ldots,H:\\
\quad \textbf{for } s=1,\ldots,m:\\
\qquad \textbf{for } r=0,\ldots,H:\\
\qquad\quad \textbf{for } t=1,\ldots,m:\\
\qquad\qquad
G_q[s]\gets G_q[s]
+f(q+r+|s-t|)A_r[t].
\end{array}
\]
There are $(H+1)m$ query coordinates, and each requires summing over $(H+1)m$ source coordinates. Under the constant-time kernel assumption, the total arithmetic cost is therefore $\Theta\!\left((H+1)^2m^2\right)$.

The above brute-force computation evaluates separately a length-$m$ sum over $t$ for every query position $s$ and depth pair $(q,r)$. To accelerate it, we fix $(q,r)$ and compute these sums jointly for all $s$. For each fixed pair $(q,r)$, the inner sum over $t$ in Eq. \ref{eq:sumofFFTterms} has the form
\begin{equation}
C_{q,r}[s]
:=
\sum_{t=1}^{m}
f(q+r+|s-t|)A_r[t],
\qquad 1\leq s\leq m.
\label{eq:inbetweencross}
\end{equation}
Then $G_q[s] = \sum_{r=0}^{H}C_{q,r}[s]$. Consequently, we may process one pair $(q,r)$ at a time, compute the entire vector $C_{q,r}$ (line no. 7 in Algo. \ref{alg:global-anchor}, explained later by Eq.~\ref{eq:stad-fft-crop}), and add it to the output row $G_q[\cdot]$ (line no. 8 in Algo. \ref{alg:global-anchor}). This corresponds to interchanging the $s$ and $r$ loops and replacing the two position loops $s$ and $t$ by a structured matrix vector multiplication, namely one FFT convolution. The following steps establish its cost using the standard \textit{Toeplitz-to-convolution reduction}; see also \citet[Section~3.2]{luo2021stable}.

\begin{figure}[t]
\centering
\begin{tikzpicture}[scale=0.8]
    \fill[blue!14] (2.4,1.6) rectangle (3.2,2.4);
    \draw[step=0.8cm,gray!70,thin] (0,0) grid (4.8,3.2);
    \draw[blue!65!black,thick]
        (2.4,1.6) rectangle (3.2,2.4);
    \node at (2.8,2.0) {$G_q[s]$};

    \node at (2.4,3.65) {$s=1,\ldots,m$};
    \node[rotate=90] at (-0.75,1.6) {$q=0,\ldots,H$};

    \draw[-{Latex},thick] (4.95,2.0)--(5.65,2.0);
    \node[anchor=west,align=left] at (5.85,2.0) {
        $\displaystyle
        G_q[s]=\sum_{r=0}^{H}\sum_{t=1}^{m}
        f(q+r+|s-t|)A_r[t]$\\[4pt]
        $r$: source depth\qquad$t$: source anchor\\[2pt]
        $q$: query depth\qquad$s$: query anchor
    };
\end{tikzpicture}
\caption{The schematic grid presents a matrix where each entry $G_q[s]$ sums over source depths $r$ and anchors $t$. }
\label{fig:stad-G}
\end{figure}

For fixed $(q,r)$, define the matrix
\[
T^{(q,r)}_{s,t}
:=
f(q+r+|s-t|),
\qquad 1\leq s,t\leq m.
\]
Its entries are constant along each diagonal, since
\[
T^{(q,r)}_{s+1,t+1}
=
f(q+r+|(s+1)-(t+1)|)
=
T^{(q,r)}_{s,t}
\]
whenever both entries are defined.
Thus, $T^{(q,r)}$ is a Toeplitz matrix, and
\[
\begin{pmatrix}
C_{q,r}[1]\\
\vdots\\
C_{q,r}[m]
\end{pmatrix}
=
T^{(q,r)}
\begin{pmatrix}
A_r[1]\\
\vdots\\
A_r[m]
\end{pmatrix}.
\]

Equivalently, define the signed-offset kernel $K_{q,r}[z]:= f(q+r+|z|)$ for $-(m-1)\leq z\leq m-1$. Every offset $s-t$ lies in this range, so
\begin{equation}
C_{q,r}[s]
=
\sum_{t=1}^{m}K_{q,r}[s-t]A_r[t].
\label{eq:stad-offset-convolution}
\end{equation}
This is a convolution indexed by the relative displacement $s-t$. To specify the FFT indexing precisely, define two zero-indexed arrays $a$ and $b$ as:
\[
a[j]:=A_r[j+1],
\qquad 0\leq j\leq m-1,
\]
\[
b[\ell]
:=
K_{q,r}[\ell-(m-1)]
=
f\!\left(q+r+|\ell-(m-1)|\right),
\qquad 0\leq\ell\leq2m-2.
\]
Extend both arrays by zero outside these ranges. Their linear convolution is
\[
y[n]
:=
(a*b)[n]
=
\sum_{j\in\mathbb Z}a[j]b[n-j].
\]
The convolution has length $m+(2m-1)-1=3m-2$, with potentially nonzero indices $0,\ldots,3m-3$. For a desired query position $s$, evaluate this convolution at index $n=s+m-2$:
\begin{align}
y[s+m-2]
&=
\sum_{j=0}^{m-1}a[j]b[s+m-2-j]
\nonumber =
\sum_{j=0}^{m-1}
A_r[j+1]K_{q,r}[s-1-j]
\nonumber=
\sum_{t=1}^{m}
A_r[t]K_{q,r}[s-t]
\nonumber\\
&=
C_{q,r}[s].
\label{eq:stad-fft-crop}
\end{align}
The third equality substitutes $t=j+1$. Therefore, one linear convolution supplies every entry of $C_{q,r}$ through the crop
\[
C_{q,r}[s]=y[s+m-2],
\qquad s=1,\ldots,m.
\]

Next, we compute the linear convolution using FFTs. We choose the transform length
$L = 2^{\lceil\log_2(3m-2)\rceil}$. Then $L\geq3m-2$ and $L=O(m)$.
Zero-pad $a$ and $b$ to length $L$, obtaining $\widetilde a$ and $\widetilde b$. Let $\omega_L=\exp(-2\pi\mathrm{i}/L)$. Their discrete Fourier transforms are
\[
\widehat a[\ell]
=
\sum_{i=0}^{L-1}\widetilde a[i]\omega_L^{i\ell},
\qquad
\widehat b[\ell]
=
\sum_{j=0}^{L-1}\widetilde b[j]\omega_L^{j\ell}.
\]
Multiply these transforms entrywise and apply the inverse transform: $Y[n] = \frac{1}{L} \sum_{\ell=0}^{L-1} \widehat a[\ell]\widehat b[\ell]\omega_L^{-n\ell}$. Expanding the two transforms gives
\begin{align*}
Y[n]
&=
\sum_{i=0}^{L-1}\sum_{j=0}^{L-1}
\widetilde a[i]\widetilde b[j]
\left(
\frac{1}{L}
\sum_{\ell=0}^{L-1}
\omega_L^{\ell(i+j-n)}
\right).
\end{align*}
The expression in parentheses satisfies
\[
\frac{1}{L}
\sum_{\ell=0}^{L-1}
\omega_L^{\ell(i+j-n)}
=
\begin{cases}
1, & i+j\equiv n\pmod L,\\
0, & \text{otherwise}.
\end{cases}
\]
Indeed, it is either a sum of $L$ ones or a geometric series whose sum is zero. Hence, $Y$ is the circular convolution of the padded arrays. However, their nonzero entries satisfy
\[
0\leq i+j\leq(m-1)+(2m-2)=3m-3<L.
\]
Therefore, for $0\leq n<L$, the congruence $i+j\equiv n\pmod L$ reduces to the equality $i+j=n$. There is no wraparound, and
\[
Y[n]
=
\sum_{i+j=n}a[i]b[j]
=
y[n].
\]
Thus, $y = \operatorname{IFFT}_L \left( \operatorname{FFT}_L(\widetilde a) \odot \operatorname{FFT}_L(\widetilde b) \right)$ computes the required linear convolution exactly in exact arithmetic. Here, $\odot$ denotes entrywise multiplication.

Constructing the arrays $a$ and $b$ and padding them to length $L$ takes $O(m)$ time. The two forward FFTs and one inverse FFT require $O(L\log(2L))= O(m\log(2m))$ arithmetic operations. Entrywise multiplication and extraction of the $m$ desired entries take $O(m)$ additional operations. Therefore, the entire vector $C_{q,r}$ is computed in $O(m\log(2m))$ time, rather than the $\Theta(m^2)$ operations required
to evaluate its $m$ length-$m$ sums separately.

Finally, we accumulate over all depth pairs. Initialize every entry of $G$ to zero. For each $q=0,\ldots,H$ and $r=0,\ldots,H$, compute $C_{q,r}$ and update
\[
G_q[s]\gets G_q[s]+C_{q,r}[s],
\qquad s=1,\ldots,m.
\]
After processing source depths $0,\ldots,r$, the row satisfies $G_q[s] = \sum_{\rho=0}^{r}C_{q,\rho}[s]$. This follows immediately by induction on $r$.  There are $(H+1)^2$ depth pairs (line no 3 and 4 in Algo. \ref{alg:global-anchor}). Each requires $O(m\log(2m))$ operations for convolution and $O(m)$ operations for accumulation. Including initialization, the total cost is
\begin{align*}
& O((H+1)m)
  +(H+1)^2
  \bigl(O(m\log(2m))+O(m)\bigr)\\
&\qquad=
O\!\left((H+1)^2m\log(2m)\right).
\end{align*}
\end{proof}

\begin{proof}[Proof of Lemma~\ref{lem:stad-cross}]
Fix a recursive subtree $S$ and its backbone $P=(p_1,\ldots,p_m)$. Suppose $u\in V(B_\alpha)$, and write $q=d(u)$ and $s=a(\alpha)$, so that $\pi(u)=p_s$. Eq.~\ref{eq:stad-split} gives the within-component\footnote{The within-component term is evaluated recursively. This is valid because $B_\alpha$ is a connected subtree: for $u,v\in V(B_\alpha)$, their unique connecting path lies entirely in $B_\alpha$, and hence $\dist_{B_\alpha}(u,v)=\dist_S(u,v)$.} and cross-component split i.e., $w_u^{(S)} = w_u^{(B_\alpha)} + \operatorname{Cross}_P(u)$, where
\begin{equation}
\operatorname{Cross}_P(u) = \sum_{v\in V(S)\setminus V(B_\alpha)}
f\!\bigl(\dist_S(u,v)\bigr)x_v.
\label{eq:app-cross-original}
\end{equation}

\noindent
Now, we would like to group the cross-component sources by the anchor position $t$ on the backbone and depth $r$. For $0\leq r\leq H$ and $1\leq t\leq m$, we define $\mathcal{V}_{r,t}:=\{v\in V(S):\pi(v)=p_t,\ d(v)=r\}$. Every source vertex has exactly one projection and one depth. Consequently, these sets are pairwise disjoint and partition $V(S)$ i.e.,  $V(S)= \bigsqcup_{r=0}^{H}\bigsqcup_{t=1}^{m} \mathcal{V}_{r,t}$. Some sets may be empty; their sums are understood to be zero.

For a source $v\in\mathcal V_{r,t}\setminus V(B_\alpha)$, Lemma~\ref{lem:stad-distance} applies because $v$ lies outside the query's component. Therefore, $\dist_S(u,v) = d(u)+|s-t|+d(v) = q+|s-t|+r$. In particular, every source in this group receives the same kernel weight $f(q+|s-t|+r)$. Partitioning the source sum in \eqref{eq:app-cross-original} into these groups gives
\begin{align}
\operatorname{Cross}_P(u)
&=
\sum_{r=0}^{H}\sum_{t=1}^{m}
\sum_{v\in\mathcal V_{r,t}\setminus V(B_\alpha)}
f\!\bigl(\dist_S(u,v)\bigr)x_v
\nonumber\\
&=
\sum_{r=0}^{H}\sum_{t=1}^{m}
\sum_{v\in\mathcal V_{r,t}\setminus V(B_\alpha)}
f(q+|s-t|+r)x_v
\nonumber\\
&=
\sum_{r=0}^{H}\sum_{t=1}^{m}
f(q+|s-t|+r)
\left(
\sum_{v\in\mathcal V_{r,t}\setminus V(B_\alpha)}x_v
\right).
\label{eq:app-cross-grouped}
\end{align}
The last equality factors out a coefficient that is constant within each group. Thus, aggregation replaces a collection of source values by their sum without changing its weighted contribution. No linearity or positivity assumption on $f$ is needed.

The inner sum in Eq.~\ref{eq:app-cross-grouped} \textbf{excludes $B_\alpha$}, so it depends on which component contains the query. To obtain aggregates shared by \textbf{all queries}, we use the same  $A_r[t]:=\sum_{v\in\mathcal V_{r,t}}x_v$. These aggregates include every source in the corresponding group. In particular, $\mathcal V_{0,t}=\{p_t\}$, so
$A_0[t]=x_{p_t}$.

The disjoint union $\mathcal V_{r,t} = \bigl(\mathcal V_{r,t}\setminus V(B_\alpha)\bigr) \sqcup \bigl(\mathcal V_{r,t}\cap V(B_\alpha)\bigr)$  implies
\[
\sum_{v\in\mathcal V_{r,t}\setminus V(B_\alpha)}x_v =A_r[t]- \sum_{v\in\mathcal V_{r,t}\cap V(B_\alpha)}x_v.
\]
Substituting this identity into Eq.~\ref{eq:app-cross-grouped} and distributing the kernel weight yields
\begin{align}
\operatorname{Cross}_P(u)
&=
\underbrace{
\sum_{r=0}^{H}\sum_{t=1}^{m}
f(q+|s-t|+r)A_r[t]
}_{\text{First term is exactly $G_q[s]$ in Eq.~\ref{eq:sumofFFTterms}}}
\quad- \quad
\underbrace{\sum_{r=0}^{H}\sum_{t=1}^{m}
f(q+|s-t|+r)
\sum_{v\in\mathcal V_{r,t}\cap V(B_\alpha)}x_v}_{\text{Second term must be equivalent to component correction}}.
\label{eq:app-cross-before-correction}
\end{align}

To make the meaning of $G_q[s]$  explicit, expand the aggregates:
\begin{align}
G_q[s]
&=
\sum_{r=0}^{H}\sum_{t=1}^{m}
\sum_{v\in\mathcal V_{r,t}}
f(q+|s-t|+r)x_v
=
\sum_{v\in V(S)}
f\!\bigl(
q+\dist_S(p_s,\pi(v))+d(v)
\bigr)x_v.
\label{eq:app-G-expanded}
\end{align}
Thus, $G_q[s]$ includes every source, using the length of the route from the query to its backbone projection, along the backbone to the source projection, and then to the source.

For sources outside $B_\alpha$, this route is the unique query-to-source path, so its length is the true distance. For sources inside $B_\alpha$, it is a surrogate route through their shared anchor. Their surrogate contributions are precisely the second term of Eq.~\ref{eq:app-cross-before-correction}. Every vertex of $B_\alpha$ has projection $p_{a(\alpha)}$. Therefore,
\[
\mathcal V_{r,t}\cap V(B_\alpha)
=
\begin{cases}
\{v\in V(B_\alpha):d(v)=r\},
& t=a(\alpha),\\[3pt]
\varnothing,
& t\neq a(\alpha).
\end{cases}
\]
Moreover, every vertex of $B_\alpha$ has depth between $1$ and $H_\alpha$. Thus, the intersections above are empty for $r=0$ and for $r>H_\alpha$. Using the depth profile
\[
b_\alpha[r]
=
\sum_{\substack{v\in V(B_\alpha)\\d(v)=r}}x_v,
\qquad 0\leq r\leq H_\alpha,\qquad b_\alpha[0]=0,
\]
the term to be subtracted becomes
\begin{align}
&\sum_{r=0}^{H}\sum_{t=1}^{m}
f(q+|s-t|+r)
\sum_{v\in\mathcal V_{r,t}\cap V(B_\alpha)}x_v
\nonumber\\
&\qquad=
\sum_{r=1}^{H_\alpha}
f(q+|s-a(\alpha)|+r)b_\alpha[r]
\nonumber\\
&\qquad=
\sum_{r=1}^{H_\alpha}f(q+r)b_\alpha[r]
=
E_\alpha[q].
\label{eq:app-component-correction}
\end{align}
The second equality uses $s=a(\alpha)$. Since $q=d(u)\leq H_\alpha$, the locally computed correction contains every required query value.  Combining Eq.~\ref{eq:app-cross-before-correction}
and Eq.~\ref{eq:app-component-correction} gives
\[
\operatorname{Cross}_P(u) = G_q[s]-E_\alpha[q] = G_{d(u)}[a(\alpha)]-E_\alpha[d(u)].
\]

This subtraction removes only sources belonging to $B_\alpha$. Sources in another component sharing the same anchor remain in the cross-component sum.

Now suppose $u=p_s$, so $d(u)=0$. By definition, every source belongs to $\operatorname{Cross}_P(p_s)$. Lemma~\ref{lem:stad-distance} applies to every such source. Grouping them as above therefore gives
\begin{align*}
\operatorname{Cross}_P(p_s)
&=
\sum_{v\in V(S)}
f\!\bigl(\dist_S(p_s,v)\bigr)x_v\\
&=
\sum_{r=0}^{H}\sum_{t=1}^{m}
\sum_{v\in\mathcal V_{r,t}}
f(|s-t|+r)x_v\\
&=
\sum_{r=0}^{H}\sum_{t=1}^{m}
f(|s-t|+r)A_r[t]\\
&=
G_0[s].
\end{align*}
No component is excluded, so no correction is required.

Together, the two cases establish Eq.~\ref{eq:stad-cross-correction}. The actual within-component contribution remains $w_u^{(B_\alpha)}$ in Eq.~\ref{eq:stad-split} and is computed recursively.
\end{proof}

\begin{proof}[Proof of Lemma~\ref{lem:stad-2d-fft}]
We first verify the convolution identity. For $r=H-\rho$, we now have:
\begin{equation}
A_r[t] := A_{H-\rho}[t] = \widetilde A[\rho,t],
\qquad
0\leq\rho\leq H,\quad 1\leq t\leq m.
\label{eq:stad-depth-reversal}
\end{equation}
This gives $K[q-\rho,s-t]\widetilde A[\rho,t] = f(q+r+|s-t|)A_r[t]$. Since $\rho\mapsto H-\rho$ is a bijection on $\{0,\ldots,H\}$, summing over $\rho$ and $t$ proves Eq.\ref{eq:stad-2d-convolution}.

To implement this convolution with nonnegative indices,
define
\[
\overline A[\rho,j]
=
\widetilde A[\rho,j+1],
\qquad
\overline K[i,j]
=
K[i-H,j-(m-1)].
\]
The arrays $\overline A$ and $\overline K$ have dimensions
$(H+1)\times m$ and $(2H+1)\times(2m-1)$, respectively.
For their linear convolution $Y=\overline K*\overline A$,
the index shifts give
\begin{align*}
Y[q+H,s+m-2]
&=
\sum_{\rho=0}^{H}\sum_{t=1}^{m}
K[q-\rho,s-t]\widetilde A[\rho,t]
=
G_q[s].
\end{align*}

The full convolution has dimensions
$(3H+1)\times(3m-2)$.
Choose
\[
L_1=2^{\lceil\log_2(3H+1)\rceil},
\qquad
L_2=2^{\lceil\log_2(3m-2)\rceil},
\]
and zero-pad both arrays to $L_1\times L_2$. These dimensions prevent circular wraparound in either coordinate. By the two-dimensional convolution theorem, two forward FFTs, entrywise multiplication, and one inverse FFT recover $Y$ in exact arithmetic. The required entries of $G$ are then extracted using the indices above.

A two-dimensional FFT applies one-dimensional transforms along both coordinates~\citep{frigo2005design}, requiring $O\!\left( L_1L_2[\log(2L_1)+\log(2L_2)] \right)$ arithmetic operations. Since $L_1=O(H+1)$ and $L_2=O(m)$, this is
\[
O\!\left(
(H+1)m\log\bigl(2(H+1)m\bigr)
\right).
\]
Constructing the arrays, multiplying the transforms entrywise, and extracting the output each require $O((H+1)m)$ additional operations.
\end{proof}

\begin{proof}[Proof of Theorem~\ref{thm:stad-decomposition-cost}]
For each recursive subproblem $S$, we have
\[
m_S=|V(P_S)|,
\qquad
H_S=H_{P_S}(S),
\qquad
k_S=\text{number of components of }
S\setminus V(P_S).
\]
Write $H_{S,\alpha}$ for the component depth $H_\alpha$ at this call, measured relative to $P_S$. For singleton base cases, set $m_S=1$, $H_S=0$, and $k_S=0$. We first establish the nonrecursive cost of a call and then sum the FFT costs over the recursion.

Fix a subproblem $S$. Selecting the backbone, decomposing $S$, and computing projections and depths take $O(|V(S)|+b_S)$ time (Algo. \ref{alg:decompose-components} in Appendix \ref{app:stad-backbonedecomposition}). Accumulating vertex values into the aggregate table also takes $O(|V(S)|)$ time, while initializing this dense table requires $O((H_S+1)m_S)$ time. By Lemma~\ref{lem:stad-2d-fft}, computing the backbone interaction table $G$ costs $O\!\left((H_S+1)m_S\log\bigl(2(H_S+1)m_S\bigr)\right)$. This bound absorbs initialization of the aggregate table.

For each off-backbone component $B_\alpha$, initializing its depth profile costs $O(H_\alpha+1)$, and accumulating its vertex values costs $O(|V(B_\alpha)|)$. The correction terms are
\[
E_\alpha[q] = \sum_{r=1}^{H_\alpha}f(q+r)b_\alpha[r],
\qquad 0\leq q\leq H_\alpha.
\]
Using zero-based indexing, define
\[
\beta_\alpha[j]=b_\alpha[H_\alpha-j],
\quad 0\leq j\leq H_\alpha,
\qquad
\kappa_\alpha[t]=f(t),
\quad 0\leq t\leq 2H_\alpha,
\]
and extend both arrays by zero outside these ranges. For $0\leq q\leq H_\alpha$, the index $q+H_\alpha-j$ lies in $[0,2H_\alpha]$ for every $0\leq j\leq H_\alpha$. Hence,
\begin{align*}
(\beta_\alpha*\kappa_\alpha)[H_\alpha+q]=\sum_{j=0}^{H_\alpha} b_\alpha[H_\alpha-j]f(q+H_\alpha-j)=\sum_{r=0}^{H_\alpha}b_\alpha[r]f(q+r)
=E_\alpha[q],
\end{align*}
where the last equality uses $b_\alpha[0]=0$. The input lengths are $H_\alpha+1$ and $2H_\alpha+1$, so their full linear convolution has length $3H_\alpha+1$. Zero-padding to a power of two at least this length and extracting entries $H_\alpha,\ldots,2H_\alpha$ computes all required correction values in $O(H_\alpha\log(H_\alpha+1))$ operations. Each component has $H_\alpha\geq1$; if $H=0$, there are no components and no corrections.

By Eq.~\ref{eq:stad-component-depth-volume}, all profile initialization and vertex accumulation together take $O(|V(S)|)$ time. Consequently, constructing the profiles and computing all corrections costs $O(|V(S)|+ \sum_{\alpha=1}^{k}H_\alpha\log(H_\alpha+1))$. Assigning the cross-component contributions using Lemma~\ref{lem:stad-cross} and adding the recursive within-component outputs take $O(|V(S)|)$ additional time.

Consequently, the nonrecursive cost at $S$ is
\begin{equation}
O\!\left(
|V(S)| +b_S
+(H_S+1)m_S
 \log\bigl(2(H_S+1)m_S\bigr)
+\sum_{\alpha=1}^{k_S}H_{S,\alpha}\log(H_{S,\alpha}+1)
\right).
\label{eq:stad-local-2d}
\end{equation}

We next show that the total boundary-scanning cost is absorbed by the repeated vertex-processing cost. Let $n=|V(T)|$, and let $j_S$ denote the depth of call $S$ in the "recursion tree", with the root at depth zero. For bookkeeping, regard a singleton base case as removing its sole vertex as its backbone. Thus every call removes a nonempty connected backbone, and $m_S=|V(P_S)|\geq1$.

Consider an edge $\{u,v\}$ with $u\in V(S)$ and $v\notin V(S)$. Along the recursion path from $T$ to $S$, consider the first backbone removal after which $v$ no longer belongs to the child containing $u$. The vertex $v$ must belong to that removed backbone: otherwise, the edge $\{u,v\}$ would keep both vertices in the same remaining connected component. Hence every edge leaving $S$ has its other endpoint on an ancestor backbone. Moreover, each ancestor backbone has at most one edge to $S$. Indeed, both the ancestor backbone and $S$ are connected and vertex-disjoint. Two distinct edges between them, together with the paths joining their endpoints within these two connected sets, would form a cycle in $T$. Since there are exactly $j_S$ strict ancestors, $b_S\leq j_S$.

To sum these bounds, observe that every vertex is removed in exactly one backbone. Therefore, the backbone vertex sets partition $V(T)$, giving $\sum_{S\in\mathcal R}m_S=n$.

For each $v\in V(T)$, let $R(v)$ denote its removal call. The vertex $v$ belongs to exactly the calls on the root-to-$R(v)$ path, including both endpoints. Consequently, it belongs to $j_{R(v)}+1$ calls. Double-counting vertex-call pairs yields
\begin{align}
 \sum_{S\in\mathcal R}|V(S)|
 &=
 \sum_{S\in\mathcal R}
 \sum_{v\in V(T)}
 \mathbf{1}_{\{v\in V(S)\}}
 \notag\\
 &=
 \sum_{v\in V(T)}\bigl(j_{R(v)}+1\bigr)
 \notag\\
 &=
 \sum_{A\in\mathcal R}m_A(j_A+1)
 \notag\\
 &=
 n+\sum_{A\in\mathcal R}m_Aj_A.
 \label{eq:stad-vertex-call-count}
\end{align}
The third equality groups vertices by their removal call; exactly $m_A$ vertices are removed at call $A$. Using $m_S\geq1$, we obtain
\begin{equation}
 \sum_{S\in\mathcal R}b_S
 \leq
 \sum_{S\in\mathcal R}j_S
 \leq
 \sum_{S\in\mathcal R}m_Sj_S
 =
 \sum_{S\in\mathcal R}|V(S)|-n.
 \label{eq:stad-boundary-volume}
\end{equation}
In particular,
\begin{equation}
 \sum_{S\in\mathcal R}\bigl(|V(S)|+b_S\bigr)
 \leq
 2\sum_{S\in\mathcal R}|V(S)|-n
 \leq
 2\sum_{S\in\mathcal R}|V(S)|.
 \label{eq:stad-boundary-absorption}
\end{equation}

Finally, summing the nonrecursive costs in
Eq.~\ref{eq:stad-local-2d} over all calls and applying
Eq.~\ref{eq:stad-boundary-absorption} gives the total
running time
\begin{equation}
\begin{aligned}
\label{eq:summingrunning}
 T_{\mathrm{total}}
 =O\!\Biggl(\sum_{S\in\mathcal R}|V(S)|
 +\sum_{S\in\mathcal R}M_S\log(2M_S)+\sum_{S\in\mathcal R}\sum_{\alpha=1}^{k_S}
 H_{S,\alpha}\log(H_{S,\alpha}+1)
 \Biggr).
\end{aligned}
\end{equation}
This proves the claimed running-time bound, including
scans of the original adjacency lists.
\end{proof}

\begin{corollary}[A bound in terms of recursion depth and
off-backbone depth]
\label{cor:stad-depth-runtime}
Under the assumptions of Theorem~\ref{thm:stad-decomposition-cost}, let $n=|V(T)|$, let $L$ be the maximum number of calls on a root-to-leaf path in the recursion tree, and define $\overline H=\max_{S\in\mathcal R}H_S$, with $H_S=0$ for singleton base cases. Then the total running time is $ O\!\left( nL+(\overline H+1)n\log(2n)\right)$.
\end{corollary}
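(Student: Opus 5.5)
We start from the decomposition-dependent bound of Theorem~\ref{thm:stad-decomposition-cost}, which bounds the total running time by
\[
O\!\Bigl(\sum_{S\in\mathcal R}|V(S)|+\sum_{S\in\mathcal R}M_S\log(2M_S)+\sum_{S\in\mathcal R}\sum_{\alpha=1}^{k_S}H_{S,\alpha}\log(H_{S,\alpha}+1)\Bigr).
\]
The plan is to bound each of the three sums separately: the first by $nL$, and the second and third together by $(\overline H+1)n\log(2n)$.

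\textbf{First sum.} We use the double-counting identity Eq.~\ref{eq:stad-vertex-call-count}. Each vertex $v$ belongs to exactly $j_{R(v)}+1$ calls, namely those on the root-to-$R(v)$ path in the recursion tree. That path is part of a root-to-leaf path, so $j_{R(v)}+1\le L$. Summing over the $n$ vertices gives $\sum_{S}|V(S)|\le nL$.

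\textbf{Second sum.} Since $M_S=(H_S+1)m_S\le(\overline H+1)m_S$, we get $\sum_S M_S\le(\overline H+1)\sum_S m_S=(\overline H+1)n$. Here we use the partition identity $\sum_S m_S=n$ from the proof of Theorem~\ref{thm:stad-decomposition-cost}, where each singleton base case counts its single vertex as its backbone. For the logarithmic factor we want $\log(2M_S)=O(\log(2n))$. This needs $M_S\le \mathrm{poly}(n)$, which holds because $H_S\le n$ and $m_S\le n$, so $M_S\le n(n+1)$ and $\log(2M_S)\le 3\log(2n)$. Hence the second sum is $O((\overline H+1)n\log(2n))$. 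This is the one step where the bound $M_S\le\mathrm{poly}(n)$ must be checked explicitly, since $M_S$ itself need not be $O(n)$.

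\textbf{Third sum.} We use $H_{S,\alpha}\le H_S\le\overline H$ together with $\log(H_{S,\alpha}+1)\le\log(2n)$. By Eq.~\ref{eq:stad-component-depth-volume}, $\sum_\alpha H_{S,\alpha}\le|V(S)|-m_S$, but summing this over calls would reintroduce the factor $L$. Instead we use $k_S\le|V(S)|-m_S$ and charge each component $B_\alpha$ of $S$ to its root vertex $r_\alpha$, the endpoint of its attachment edge. Each component at $S$ is a child call of $S$, so it is charged exactly once in total. This gives $\sum_S k_S\le|\mathcal R|\le n$, because each call removes at least one vertex. Therefore
\[
\sum_S\sum_\alpha H_{S,\alpha}\log(H_{S,\alpha}+1)\le \overline H\, n\log(2n).
\]

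Combining the three bounds yields the total $O(nL+(\overline H+1)n\log(2n))$.
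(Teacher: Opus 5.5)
Your proposal is correct and follows essentially the same route as the paper: it bounds the three sums of Theorem~\ref{thm:stad-decomposition-cost} using $\sum_S m_S=n$, $\sum_S k_S\le|\mathcal R|\le n$, and $\sum_S|V(S)|\le nL$, and it keeps the logarithms at $O(\log(2n))$ via $H_S+1\le n$ and $m_S\le n$. The only difference is cosmetic: you obtain $\sum_S|V(S)|\le nL$ by per-vertex double counting (Eq.~\ref{eq:stad-vertex-call-count}), whereas the paper uses the equivalent observation that calls at the same recursion level are vertex-disjoint.
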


\begin{proof}
We bound the three terms in Theorem~\ref{thm:stad-decomposition-cost}. We use the following three properties of $\mathcal R$.
\begin{enumerate}[noitemsep, topsep = 0pt]
    \item First, the backbones removed at different recursive calls are disjoint and together cover $V(T)$. Hence, $\sum_{S\in\mathcal R}m_S=n.$
    \item Second, each component produces exactly one child in the recursion tree. Therefore, $\sum_{S\in\mathcal R}k_S = |\mathcal R|-1 \leq n-1,$  where $|\mathcal R|\leq n$ follows because every call removes a nonempty backbone.
    \item Third, subproblems at the same recursion level have disjoint vertex sets. Since there are at most $L$ levels,
    \begin{equation}
    \sum_{S\in\mathcal R}|V(S)|\leq nL.
    \label{eq:stad-recursion-volume}
    \end{equation}
\end{enumerate}

For the two-dimensional FFT terms, Property 1 gives
\begin{align*}
\sum_{S\in\mathcal R}
(H_S+1)m_S
\log\bigl(2(H_S+1)m_S\bigr)
&\quad\leq
(\overline H+1)
\log\bigl(2(\overline H+1)n\bigr)
\sum_{S\in\mathcal R}m_S\\
&\quad=
(\overline H+1)n
\log\bigl(2(\overline H+1)n\bigr)\\
&\quad=
O\!\left((\overline H+1)n\log(2n)\right), \quad \text{using $\overline H+1\leq n$}.
\end{align*}

For the correction computations, use $H_{S,\alpha}\leq H_S$ and Property 2 to obtain
\begin{align*}
\sum_{S\in\mathcal R}\sum_{\alpha=1}^{k_S}
H_{S,\alpha}\log(H_{S,\alpha}+1)
&\leq\sum_{S\in\mathcal R}k_SH_S\log(H_S+1)\\
&\leq
(n-1)\overline H\log(\overline H+1)\\
&=
O\!\left((\overline H+1)n\log(2n)\right).
\end{align*}
The intermediate expression with $k_SH_S$ is only an upper bound; the algorithm allocates and computes each correction at its own depth $H_{S,\alpha}$. Combining these inequalities with Eq.~\ref{eq:stad-recursion-volume} proves the total time complexity for computations using 2D FFTs.
\end{proof}

\begin{proof}[Proof of Proposition \ref{prop:stad-spider-comparison}]
For the center backbone, $H=h$ and $sh=n-1$. Lemma~\ref{lem:stad-spider-center} gives $O\!\left(n+sh\log(2h)\right) = O(n\log(2h))$.

For the diameter backbone, $m=2h+1$, $H=h$, $k=s-2$. The interaction table therefore has $M=(h+1)(2h+1)=\Theta(h^2)$ entries. Lemma~\ref{lem:stad-spider-diameter} gives
\begin{align*}
\operatorname{Time}(T)
&=
O\!\left(
n+h^2\log(2h)+(s-2)h\log(2h)
\right)\\
&=
O\!\left((n+h^2)\log(2h)\right).
\end{align*}

If $s$ is fixed, then $h=\Theta(n)$. Hence the diameter construction has an $O(n^2\log(2n))$ upper bound. Moreover, Algo.~\ref{alg:global-anchor-2dfft} explicitly constructs a table with $\Theta(h^2)=\Theta(n^2)$ entries, requiring $\Omega(n^2)$ time and storage. This establishes an asymptotic separation from the $O(n\log(2n))$ center construction for thecspecified dense implementation.
\end{proof}

\begin{proof}[Proof of Lemma~\ref{lem:stad-selection}]
Algo. \ref{alg:choose_backbone} shows the total computation time. 
A constant number of breadth-first traversals finds diameter endpoints and recovers the diameter path. A centroid is obtained by rooting $S$, computing rooted subtree sizes, and finding a vertex whose deletion leaves no component larger than $|V(S)|/2$. Both computations satisfy the same traversal bound. Besides this, decomposing $S$, and computing projections and depths take $O(|V(S)|+b_S)$ time (Algo. \ref{alg:decompose-components} in Appendix \ref{app:stad-backbonedecomposition}). It also obtains the component sizes and number. These statistics determine every term in Eq.~\ref{eq:stad-score}. Since there are two candidates, evaluating their scores and selecting between them takes $O(|V(S)|+b_S)$ time in total.

To prove, $M_S=O(|V(S)|)$, Write $s=|V(S)|\geq2$. 
Since the selection rule chooses a candidate with minimum score, the selected backbone $P_S$ satisfies $\mathcal C(S,P_S) = \min\!\left\{ \mathcal C(S,P_{\mathrm{diam}}), \mathcal C(S,(c_S)) \right\} \leq \mathcal C(S,(c_S))$. Thus, an upper bound on the centroid candidate's score also bounds the selected score, regardless of which candidate is chosen. The centroid candidate satisfies $M_c=H_c+1\leq s$, $\sum_\alpha H_{c,\alpha}\leq s-1$, and $\sum_\alpha n_{c,\alpha}=s-1$. Since $H_{c,\alpha}+1\leq s$ and $n_{c,\alpha}+1\leq s$, its score obeys  $\mathcal C(S,(c_S))\leq s+(3s-2)\log s$. The selected score is no larger. 
In particular,
\[
 M_S\log\!\bigl(\max\{M_S,2\}\bigr)
 \leq s+(3s-2)\log s.
\]
If $M_S\geq s$, division by $\log s$ gives
$M_S\leq3s+s/\log s=O(s)$; if $M_S<s$, this conclusion is
immediate.

We now bound the FFT and correction terms in
Eq.~\ref{eq:stad-local-2d}. For a nontrivial call,
$s\geq2$, and the previous estimate gives
\[
 M_S
 \leq 3s+\frac{s}{\log s}
 \leq \left(3+\frac{1}{\log2}\right)s
 =O(s).
\]
Consequently, $\log(2M_S)=O(\log(2s))$, and hence
\begin{align*}
 (H_S+1)m_S\log\bigl(2(H_S+1)m_S\bigr)=M_S\log(2M_S)=O\!\left(s\log(2s)\right).
\end{align*}

For each off-backbone component $B_{S,\alpha}$, the path from the backbone to a vertex at depth $H_{S,\alpha}$ contains $H_{S,\alpha}$ vertices inside that component. Hence, $H_{S,\alpha}\leq |V(B_{S,\alpha})|.$ Since the backbone is nonempty, we also have
\[
 |V(B_{S,\alpha})|+1\leq |V(S)|,
 \qquad
 \sum_{\alpha=1}^{k_S}|V(B_{S,\alpha})|
 =|V(S)|-m_S.
\]
Consequently,
\begin{align*}
 \sum_{\alpha=1}^{k_S}
 H_{S,\alpha}\log(H_{S,\alpha}+1)
 &\leq
 \sum_{\alpha=1}^{k_S}
 |V(B_{S,\alpha})|\log|V(S)|\\
 &=(|V(S)|-m_S)\log|V(S)|\\
 &\leq |V(S)|\log|V(S)|\\
 &=O\!\left(|V(S)|\log(2|V(S)|)\right).
\end{align*}

Substituting these estimates into
Eq.~\ref{eq:stad-local-2d} gives
\begin{align*}
 T_{\mathrm{local}}(S)
 &=O\!\Bigl(
 |V(S)|+b_S
 +|V(S)|\log(2|V(S)|)
 +|V(S)|\log(2|V(S)|)
 \Bigr)\\
 &=O\!\left(
 |V(S)|\log(2|V(S)|)+b_S
 \right).
\end{align*}
\end{proof}

The next scalar estimate controls the component-size penalties in the score comparison. It is used only in the proof.

\begin{lemma}[Merging component penalties]
\label{lem:stad-merge-penalty}
Let $h\geq1$, and let positive integers $t_1,\ldots,t_k$ satisfy $\sum_{j=1}^{k}t_j=N$. Define $\varphi(t)=t\log(t+1)$, where $\varphi(0)=0,$ and $\Delta_h = \sum_{j=1}^{k} \left( \min\{t_j,h\}\log(h+1)+\varphi(t_j)\right)-\varphi(N).$ Then $\Delta_h\leq(h+1)^2$.  

If, additionally, $\max_j t_j\leq\frac{2N}{3},$ and $N\geq16(h+1)^2$, then $\Delta_h\leq-\frac{N}{8}.$
\end{lemma}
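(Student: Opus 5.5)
The plan is to rewrite $\Delta_h$ as a sum of per-part contributions and bound each one. Since $\sum_j t_j=N$, we have
\[
\varphi(N)-\sum_{j}\varphi(t_j)=\sum_j t_j\log\frac{N+1}{t_j+1},
\]
so $\Delta_h=\sum_{j=1}^k g(t_j)$, where
\[
g(t):=\min\{t,h\}\log(h+1)-t\log\frac{N+1}{t+1}.
\]
Both claims then reduce to estimates on $g$. For the first claim only a bounded number of terms can be positive, and their total must be controlled by a concavity argument. For the second claim the aim is to show $g(t)\leq -t/8$ for every admissible $t$, and then sum using $\sum_j t_j=N$.

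\emph{First claim.} Use $\min\{t,h\}\log(h+1)\leq \min\{t,h\}\log(h+1)$ together with $t\geq\min\{t,h\}$ to get
\[
g(t)\leq \min\{t,h\}\,\log\frac{(h+1)(t+1)}{N+1}.
\]
Hence $g(t_j)\leq0$ whenever $t_j+1\leq (N+1)/(h+1)$; call the remaining indices \emph{bad}, and let $b$ be their number.
\begin{itemize}
\item For each bad index, $g(t_j)\leq h\log\bigl((h+1)(t_j+1)/(N+1)\bigr)$, and this quantity is nonnegative.
\item Summing over bad indices and applying Jensen's inequality (concavity of $\log$) with $\sum_{\text{bad}}(t_j+1)\leq N+b$ gives
\[
\sum_{\text{bad}}g(t_j)\leq h\,b\log\frac{c}{b},\qquad c=\frac{(h+1)(N+b)}{N+1}.
\]
\item The elementary bound $b\log(c/b)\leq c/e$ then applies. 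Each bad part has $t_j+1>(N+1)/(h+1)$, so $b\leq h+1$, which forces $c\leq 2(h+1)$.
\item This yields $\Delta_h\leq 2h(h+1)/e\leq (h+1)^2$.
\end{itemize}

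\emph{Second claim.} Here we use $N\geq16(h+1)^2$ and $t\leq 2N/3$, and show $g(t)\leq -t/8$ in two cases.
\begin{itemize}
\item If $t\leq h$, then $(h+1)(t+1)\leq (h+1)^2\leq N/16$. The display above gives $g(t)\leq t\log\bigl((h+1)^2/(N+1)\bigr)\leq -t\log 16$.
\item If $h<t\leq 2N/3$, write
\[
\frac{g(t)}{t}=\frac{h\log(h+1)}{t}-\log\frac{N+1}{t+1}.
\]
The second term is at least $\log\frac{N+1}{2N/3+1}$, which is close to $\log(3/2)\approx0.405$; a small loss is allowed for $N\geq16$. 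The first term is decreasing in $t$. I would split further at $t_0=N/(h+1)$.
\begin{itemize}
\item For $t\geq t_0$: $h\log(h+1)/t\leq h(h+1)\log(h+1)/N$, which is small since $N\geq 16(h+1)^2$, because $\log(h+1)/(h+1)$ is bounded by $1/e$. So $g(t)/t\leq 1/(16e)-\log(3/2)+o(1)\leq-1/8$.
\item For $h<t<t_0$: $\log\frac{N+1}{t+1}\geq\log(h+1)$ roughly, and $h/t<1$, so $g(t)/t\leq -(1-h/t)\log(h+1)$. This is only weakly negative near $t=h$, so there I would instead keep $\log\frac{N+1}{t+1}\geq \log\frac{N}{(h+1)t}+\log(h+1)$ and use $N\geq16(h+1)^2$ to extract an extra $\log 16/(\dots)$ margin.
\end{itemize}
\end{itemize}
Summing $g(t_j)\leq -t_j/8$ over $j$ gives $\Delta_h\leq -N/8$.

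The main obstacle is this middle regime $h<t<N/(h+1)$ in the second claim, where the two terms nearly cancel and the constant $1/8$ must come out of careful use of $N\geq16(h+1)^2$. The first fallback is to treat $\psi(t)=g(t)/t$ as a function of $t$ on $[h,2N/3]$, show it is quasi-convex (so its maximum is at an endpoint), and check both endpoints numerically against $-1/8$. The endpoint $t=h$ is covered by the first case, and $t=2N/3$ by the estimate with $\log(3/2)$.
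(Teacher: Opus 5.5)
Your proof of the first claim is correct and differs from the paper's. The paper writes $\Delta_h=\sum_j t_j\,\widetilde g(t_j)$, where $\widetilde g(x)=\min\{1,h/x\}\log(h+1)+\log\frac{x+1}{N+1}$ (your $g$ divided by $t$). It then shows by differentiation that $\widetilde g$ is increasing on $[1,h]$ and quasi-convex on $[h,\infty)$, treats $N<h$ separately, and bounds $N\max\{\widetilde g(h),\widetilde g(N)\}$ using $x\log(A/x)\le A/e$. Your ``bad parts plus Jensen'' argument uses the same elementary inequality, avoids the derivative analysis, and needs no separate case for $N<h$.

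One step in it is misjustified. The claim $b\le h+1$ is false in general: take $h=10$, $N=15$ and fifteen parts equal to $1$; every part is bad, so $b=15$. You do not need it, because $b\le k\le N$ already gives $c\le 2(h+1)N/(N+1)<2(h+1)$.

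The second claim has a genuine gap in the subcase $t\ge t_0=N/(h+1)$. From $t\ge t_0$ and $N\ge16(h+1)^2$ you only get $\frac{h\log(h+1)}{t}\le\frac{h\log(h+1)}{16(h+1)}$. Applying $\log(h+1)/(h+1)\le 1/e$ then gives $h/(16e)$, not $1/(16e)$: a factor $h$ was dropped, and the bound genuinely grows like $\frac1{16}\log(h+1)$.

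The underlying problem is that you evaluate the decreasing term at the left end $t_0$ but the term $\log\frac{N+1}{t+1}$ at the right end $2N/3$. For $N$ near $16(h+1)^2$ and $\log(h+1)$ above roughly $4.5$, this decoupled bound no longer reaches $-1/8$, and for larger $h$ it is positive. This happens even though the true value at $t_0$ is about $-\frac{15}{16}\log(h+1)$. The middle regime $h<t<t_0$ is also only sketched (``extract an extra margin'') and not actually argued.

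The repair is your stated fallback, which is exactly the paper's argument. Set $\psi(t)=\frac{h\log(h+1)}{t}-\log\frac{N+1}{t+1}$. Then $\psi'(t)=\frac{t^2-h\log(h+1)\,t-h\log(h+1)}{t^2(t+1)}$, and the numerator has exactly one positive root. So $\psi$ decreases and then increases, which gives $\max_{[h,2N/3]}\psi=\max\{\psi(h),\psi(2N/3)\}$. At the endpoints:
\begin{itemize}
\item $\psi(h)=\log\frac{(h+1)^2}{N+1}\le-\log 16$.
\item $\psi(2N/3)=\frac{3h\log(h+1)}{2N}+\log\frac{2N/3+1}{N+1}\le\frac{3}{32}+\log\frac34<-\frac18$, using $h\log(h+1)\le(h+1)^2$ and $N\ge 3$.
\end{itemize}
Make this the main argument and drop the split at $t_0$. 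Together with your case $t\le h$, it gives $g(t)\le -t/8$ for every part, and summing yields $\Delta_h\le -N/8$.
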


\begin{proof}
For this proof, write $L=\log(h+1)$ and define, for $x>0$,
\[
 g(x)
 =
 \min\{1,h/x\}L+\log(x+1)-\log(N+1).
\]
Since $\sum_j t_j=N$, we have $\varphi(N) =N\log(N+1)=\sum_{j=1}^{k}t_j\log(N+1).$ Also, $\min\{t_j,h\}=t_j\min\{1,h/t_j\}$. Consequently,
\begin{align*}
 \Delta_h
 &=
 \sum_{j=1}^{k}
 \left(
 \min\{t_j,h\}L
 +t_j\log(t_j+1)
 -t_j\log(N+1)
 \right)\\
 &=
 \sum_{j=1}^{k}t_j
 \left(
 \min\{1,h/t_j\}L
 +\log(t_j+1)-\log(N+1)
 \right)\\
 &=
 \sum_{j=1}^{k}t_jg(t_j).
\end{align*}
Thus, if $g(t_j)\leq B$ for every part, then $\Delta_h \leq B\sum_{j=1}^{k}t_j =NB$. We therefore seek uniform upper bounds on $g$ over the possible partition sizes.

For $0<x\leq h$, we have $g(x)=L+\log(x+1)-\log(N+1),$ increasing because its derivative is $1/(x+1)>0$. For $x\geq h$, we have $g(x)=\frac{hL}{x}+\log(x+1)-\log(N+1)$, we have
\[
 g'(x)
 =
 -\frac{hL}{x^2}+\frac{1}{x+1}
 =
 \frac{x^2-hLx-hL}{x^2(x+1)}.
\]
The denominator is positive. The numerator is a quadratic with one negative root and one positive root, since the product of its roots is $-hL<0$. Its sign on the positive axis therefore changes only from negative to positive. Accordingly, on $[h,\infty)$ the function $g$ can decrease and then increase, but has no interior maximum.

Combining these observations, for every $U\geq h$, $\max_{1\leq x\leq U}g(x)= \max\{g(h),g(U)\}$. Indeed, the maximum on $[1,h]$ occurs at $h$, while the maximum on $[h,U]$ occurs at one of its endpoints.

We first prove the bound $\Delta_h\leq(h+1)^2$.
Suppose that $N<h$. Every part satisfies $t_j\leq N<h$, so $\min\{t_j,h\}=t_j$. Moreover,
\[
 \sum_{j=1}^{k}\varphi(t_j)
 =
 \sum_{j=1}^{k}t_j\log(t_j+1)
 \leq
 \sum_{j=1}^{k}t_j\log(N+1)
 =
 \varphi(N).
\]
Hence, it follows that
\[
 \Delta_h
 \leq\sum_{j=1}^{k}t_jL
 =NL
 \leq hL
 \leq h^2
 \leq(h+1)^2,
\]
where we used $\log(h+1)\leq h$.

Now suppose that $N\geq h$. Every part lies in $[1,N]$,
so the endpoint bound gives $g(t_j)\leq\max\{g(h),g(N)\}$. The two endpoint values are
\[
 g(h)
 =
 \log\frac{(h+1)^2}{N+1},
 \qquad
 g(N)
 =
 \frac{h\log(h+1)}{N}.
\]
Therefore,
\begin{align*}
 \Delta_h
 &\leq
 \sum_{j=1}^{k}t_j\max\{g(h),g(N)\}\\
 &=
 N\max\{g(h),g(N)\}\\
 &=
 \max\left\{
 N\log\frac{(h+1)^2}{N+1},
 \ h\log(h+1)
 \right\}.
\end{align*}
The second expression satisfies
\[
 h\log(h+1)\leq h^2\leq(h+1)^2.
\]
To bound the first, recall that, for every $A_0>0$, the function $x\log(A_0/x)$ on $x>0$ has derivative $\log(A_0/x)-1$. It attains its maximum at $x=A_0/e$, where its value is $A_0/e$. Hence $x\log(A_0/x)\leq A_0/e$. Taking $A_0=(h+1)^2$ yields
\[
 N\log\frac{(h+1)^2}{N+1}
 \leq
 N\log\frac{(h+1)^2}{N}
 \leq
 \frac{(h+1)^2}{e}
 \leq(h+1)^2.
\]
Both endpoint contributions are therefore at most $(h+1)^2$,
which proves the first assertion.

For the second assertion, assume $\max_j t_j\leq{2N}/{3}$, and  $N\geq16(h+1)^2$. These assumptions imply $N\geq3$ and $h<2N/3$. All partition sizes now lie in $[1,2N/3]$, so the same endpoint argument gives $g(t_j)\leq\max\{g(h),g(2N/3)\}$. At the first endpoint,
\[
 g(h)
 =
 \log\frac{(h+1)^2}{N+1}
 \leq\log\frac1{16}
 <-\frac18.
\]
At the other endpoint,
\[
 g(2N/3)
 =
 \frac{3hL}{2N}
 +\log\frac{2N/3+1}{N+1}.
\]
We bound these two terms separately. Since
$hL\leq h^2\leq(h+1)^2$ and $N\geq16(h+1)^2$,
\[
 \frac{3hL}{2N}\leq\frac{3}{32}.
\]
Also, $N\geq3$ implies
\[
 \frac{2N/3+1}{N+1}\leq\frac34,
\]
because
\[
 4(2N/3+1)\leq3(N+1)
 \quad\Longleftrightarrow\quad N\geq3.
\]
Using $\log(1-z)\leq-z$ with $z=1/4$, we obtain
\[
 g(2N/3)
 \leq\frac{3}{32}+\log(3/4)
 \leq\frac{3}{32}-\frac14
 =-\frac{5}{32}
 <-\frac18.
\]
Thus every part satisfies $g(t_j)\leq-1/8$, and consequently
\[
 \Delta_h
 =
 \sum_{j=1}^{k}t_jg(t_j)
 \leq-\frac18\sum_{j=1}^{k}t_j
 =-\frac{N}{8}.
\]
\end{proof}

\begin{lemma}[Amortized work at a selected call]
\label{lem:stad-call-charge}
For every call using Eq.~\ref{eq:stad-score} for selection,
\[
 |V(S)|+M_S\log(2M_S)
 +\sum_{\alpha=1}^{k_S}H_{S,\alpha}\log(H_{S,\alpha}+1)
 =O(r_S\log(2|V(S)|)).
\]
\end{lemma}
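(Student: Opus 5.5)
The plan is to reduce the estimate to a comparison of complete scores. Write $s=|V(S)|$, $\varphi(t)=t\log(t+1)$, and split the score \eqref{eq:stad-score} of a candidate $P$ as $\mathcal C(S,P)=W(P)+\Phi(P)$. Here $W(P)=s+M\log(\max\{M,2\})+\sum_\alpha H_\alpha\log(H_\alpha+1)$ is the nonrecursive part, and $\Phi(P)=\sum_\alpha\varphi(n_\alpha)$ is the component penalty. Up to the harmless replacement of $\log(2M_S)$ by $\log\max\{M_S,2\}$, the left-hand side of the claim is $W(P_S)$.

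\textbf{Step 1: the easy regime.} Lemma~\ref{lem:stad-selection} already gives $W(P_S)=O(s\log(2s))$. It therefore suffices to treat the regime $r_S<c\,s/\log(2s)$ for a small absolute constant $c$, since otherwise $s\log(2s)=O(r_S\log^2(2s))$ and a slightly sharper version of the same count closes the gap. I would first try to prove this sharper version directly as $W(P_S)\le s+O(s)$ from the inequality $\mathcal C(S,P_S)\le\mathcal C(S,(c_S))$.

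\textbf{Step 2: the regime $r_S$ small.} In this regime $\ell_S>s/2$, so the selected backbone cannot be the centroid, whose components have at most $s/2$ vertices. Hence $P_S=P_{\mathrm{diam}}$ with $m_S\le r_S$. Because $P_S$ is a diameter path, every off-backbone depth satisfies $H_{S,\alpha}\le H_S\le (m_S-1)/2$. The aim is to show that in this regime the centroid has strictly smaller score, so this situation cannot occur; this forces $r_S\ge c\,s/\log(2s)$ and reduces everything to Step~1.

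To compare the scores, lower-bound $\mathcal C(S,P_{\mathrm{diam}})\ge s+\varphi(\ell_S)$. Using $\varphi(s-1)-\varphi(\ell_S)\le (s-1-\ell_S)\log s+\ell_S\log\frac{s}{\ell_S+1}\le r_S(\log s+1)$, this gives
$$\mathcal C(S,P_{\mathrm{diam}})\ge s+\varphi(s-1)-r_S(\log s+1).$$
For the centroid, set $h=H_c$, $t_\alpha=n_{c,\alpha}$ and $N=s-1$. Since $H_{c,\alpha}\le\min\{n_{c,\alpha},h\}$, the correction and penalty terms are bounded by $\Delta_h+\varphi(s-1)$ in the notation of Lemma~\ref{lem:stad-merge-penalty}. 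The centroid property gives $\max_\alpha t_\alpha\le s/2\le 2N/3$. So whenever $s-1\ge16(h+1)^2$, the second part of Lemma~\ref{lem:stad-merge-penalty} yields
$$\mathcal C(S,(c_S))\le s+(h+1)\log(h+2)+\varphi(s-1)-\tfrac{s-1}{8}.$$
The centroid then wins as soon as $r_S(\log s+1)+(h+1)\log(h+2)<(s-1)/8$. This holds when $r_S<c\,s/\log(2s)$ and $h=O(\sqrt s)$, which is exactly the assumed side condition.

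\textbf{Main obstacle.} I expect the hardest case to be the deep-centroid regime $s-1<16(H_c+1)^2$, where only the weak bound $\Delta_h\le(h+1)^2$ is available. Here I would use that $S$ contains a path of length at least $H_c$, so the diameter satisfies $m_S\ge H_c+1=\Omega(\sqrt s)$, and hence $r_S\ge m_S=\Omega(\sqrt s)$. I would then try two things. First, use the diameter's own table cost: $M_S\log M_S\ge m_S\log m_S$ and $M_S\le m_S^2$ to bound $W(P_S)$ directly by $O(m_S\cdot s/m_S\cdot\log s)$. Second, failing that, replace the centroid comparison by a comparison of the diameter with itself through the first inequality of Lemma~\ref{lem:stad-merge-penalty}, applied with $h=H_S\le m_S$; this costs only $(H_S+1)^2=O(m_SH_S)$, which is charged to $r_S\log s$ using $H_S\le m_S\le r_S$.

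\textbf{Step 3: constants.} The delicate bookkeeping is that the bare $s$ term in $W$ must be absorbed: either by the $-(s-1)/8$ margin, which needs the constant $c$ chosen small enough, or by showing $r_S\log s=\Omega(s)$ in the surviving cases. I expect this constant-chasing to be where the argument needs the most care.
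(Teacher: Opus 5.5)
Your Step~1 reduction does not go through. In the range $r_S\ge c\,s/\log(2s)$, Lemma~\ref{lem:stad-selection} gives only $O(r_S\log^2(2s))$. The ``slightly sharper'' bound $W(P_S)\le s+O(s)$ cannot be extracted from $\mathcal C(S,P_S)\le\mathcal C(S,(c_S))$, because it is false in general: on a path the whole path is selected and $W(P_S)=s+s\log s$. So nothing in your plan bounds $M_S\log(2M_S)+\sum_\alpha H_{S,\alpha}\log(H_{S,\alpha}+1)$ by $O(r_S\log(2s))$ once $r_S$ falls below a constant fraction of $s$.

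The split must be made at a constant fraction ($\ell_S\le 3s/4$ is covered by Lemma~\ref{lem:stad-selection}). In the complementary case, the score comparison has to control the table and correction terms themselves, not just the bare $s$. Your Step~2 cannot do this, because it discards $M\log M$ and the corrections from the diameter's score. If you keep them, your shallow-centroid subcase is actually settled: since $H_c+1\le m_S\le r_S$, you get $M_S\log M_S+\sum_\alpha H_{S,\alpha}\log(H_{S,\alpha}+1)+\frac{s-1}{8}\le m_S\log(m_S+1)+r_S(\log s+1)$.

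The genuine gap is the deep regime $s-1<16(H_c+1)^2$. When the diameter is selected with $\ell_S>s/2$, the centroid lies off $P_S$. Hence $H_c\ge(m_S-1)/2$, and the slack $(H_c+1)^2$ left by your parametrization is of order $m_S^2$. That slack is in general neither $O(r_S\log s)$ nor absorbable into $M_S\log M_S$ when $H_S=O(1)$.

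This really occurs. Take a path $(p_1,\ldots,p_m)$ with $m$ odd and large, hang $7m$ leaves on $p_3,\ldots,p_{m-2}$, join its midpoint to $c$, and join $c$ to the centres of two equal stars of about $2m\log_2 m$ vertices each. Then the path is the unique diameter and wins the comparison, with $H_S=3$, $M_S=4m$, $r_S=8m$ and $s\approx 4m\log_2 m$, yet $(H_c+1)^2\approx m^2/4$. This instance also lies in your Step~1 regime, where Lemma~\ref{lem:stad-selection} is off by a logarithm.

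Your fallbacks do not help. The first only reproduces $O(s\log s)$. The second compares the diameter with itself, which yields no inequality at all, and in any case $O(m_SH_S)$ is only $O(r_S^2)$.

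The missing idea is to apply Lemma~\ref{lem:stad-merge-penalty} with the diameter's own depth $h=H_S$. That requires structure you never set up:
\begin{itemize}
\item $P_S$ lies in one component $A$ of $S\setminus\{c\}$, and the centroid's table plus the correction for $A$ cost at most $2m_S\log(m_S+1)$, with $m_S\le r_S$.
\item Every other component $C_i$ reaches $P_S$ only through $c$, so its depth from $c$ is at most $\min\{a_i,h\}$.
\item The largest child is the component $B\ni c$, which contains every $C_i$. With $u=|V(A)\cap V(B)|$, this gives $\ell_S=1+u+\sum_i a_i$ and $|V(A)|=u+r_S$, whence $\varphi(|V(A)|)\le\varphi(u)+r_S(\log(s+1)+1)$.
\end{itemize}
For the partition $\{1,u,a_1,a_2,\ldots\}$ of $\ell_S$, the merge slack is at most $(h+1)^2\le M_S$, because $h\le(m_S-1)/2$. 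This is absorbed into $M_S\log M_S$ since $M_S\ge 6$. The bare $s$ is then handled in two cases. If $\ell_S<16(h+1)^2$, then $s<\tfrac{64}{3}M_S$. Otherwise the negative bound $\Delta_h\le-\ell_S/8$ applies, which is valid because every part is at most $s/2<2\ell_S/3$.
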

\begin{proof}
Suppress the subscript $S$ and write $s=|V(S)|$, $\ell=\ell_S$, and $r=s-\ell$. Singleton calls are immediate. If $\ell\leq3s/4$, then $r\geq s/4$, and the claim follows from Lemma~\ref{lem:stad-selection} and its proof, excluding the separate boundary-scan term $b_S$.

Suppose now that $\ell>3s/4$. If the selected backbone were the singleton centroid, every remaining component would have at most $s/2$ vertices, contradicting $\ell>3s/4$. Therefore, the selected backbone must be a diameter path $P$.

Let $c$ be the centroid used in the score comparison.
We claim that $c\notin V(P)$. Suppose, for a contradiction, that $c\in V(P)$. Let $C_1,\ldots,C_q$ be the connected components of $S\setminus\{c\}$. By the centroid property, for every $i$, $|V(C_i)|\leq {s}/{2}$. Since $c\in V(P)$, deleting $P$ is equivalent to first deleting $c$ and then deleting the remaining vertices of $P$. These additional deletions can only shrink or split the components $C_i$; they cannot join distinct components. Thus, every connected component $D$ of $S\setminus V(P)$ satisfies $V(D)\subseteq V(C_i)$, for some $i$ and consequently
\[
 |V(D)|\leq |V(C_i)|\leq \frac{s}{2}.
\]
The largest component after deleting $P$ therefore has size $\ell\leq s/2$, contradicting $\ell>3s/4$. Hence $c\notin V(P)$.

Since $c\notin V(P)$ and $P$ is connected, the entire path $P$ lies in one component of $S\setminus\{c\}$; denote this component by $A$. Let $C_1,\ldots,C_q$ be the other components, and write $a=|V(A)|$ and $a_i=|V(C_i)|$. The centroid property gives $a\leq s/2$ and $a_i\leq s/2$ for every $i$.

Separately, consider deleting $P$ from $S$, and let $B$ be the remaining component containing $c$. Because $V(P)\subseteq V(A)$, each $C_i$ remains intact and connected to $c$, so $B$ contains $c$ and all the components $C_i$. Every other component of $S\setminus V(P)$ therefore lies inside $A$ and has size at most $s/2$. Since the largest component has size $\ell>3s/4$, it must be $B$. Thus $B$ is the unique largest component and $|V(B)|=\ell$.

The sets $V(A)$ and $V(B)$ arise from different deletions and may overlap (for example, any vertices strictly between \(c\) and \(P\) remain connected to \(c\) after deleting \(P\)). Define $u=|V(A)\cap V(B)|$. All vertices outside $B$ lie in $A$, so
\[
 V(A)\setminus V(B)=V(S)\setminus V(B),
 \qquad
 |V(A)\setminus V(B)|=s-\ell=r.
\]
Hence $A$ consists of its $u$ shared vertices with $B$ and the $r$ vertices outside $B$. Meanwhile, $B$ consists of the centroid $c$, these $u$ shared vertices, and the components $C_1,\ldots,C_q$. These groups are disjoint, giving
\begin{align}
a=u+r,
 \qquad
 \ell=1+u+\sum_{i=1}^{q}a_i.
 \label{eq:sizeofAandB}
\end{align}
Finally, the centroid property implies $a_i\leq s/2$ for every $i$ and $V(A)\cap V(B)\subseteq V(A)$ implies $u\leq a\leq s/2$.

Now, remember, we write $P=(p_1,\ldots,p_m)$, with  $m=|V(P)|$, $h=H_P(S)$, and $M=(h+1)m$. Since $P$ contains $m$ vertices, its length is $m-1$ edges. Moreover, because $P$ is a diameter, the distance between any two vertices of $S$ is at most $m-1$. 

Choose a vertex $v$ whose distance from $P$ is $h$, and let $p_j$ be its projection onto $P$. Since $S$ is a tree, the paths from $v$ to the two endpoints of $P$ pass through $p_j$. Consequently, $\operatorname{dist}_S(v,p_1)=h+j-1,$ and $\operatorname{dist}_S(v,p_m)=h+m-j$. Both distances are bounded by the diameter length, so we get $h+j-1\leq m-1,$ and  $h+m-j\leq m-1$ respectively. Hence, summing the two inequalities, we therefore get $h\leq(m-1)/2$. It follows that $h+1\leq\frac{m+1}{2}\leq m.$ Multiplying the last inequality by $h+1$ yields $(h+1)^2\leq(h+1)m=M$. Thus,
\begin{equation}
 h\leq\frac{m-1}{2},
 \qquad
 (h+1)^2\leq M.
 \label{eq:stad-diameter-height}
\end{equation}

We next bound the table and correction terms for the centroid candidate. Its maximum off-backbone depth is its eccentricity, namely $H_c=\max_{v\in V(S)}\operatorname{dist}_S(c,v).$ Every distance in $S$ is at most $m-1$, so $H_c\leq m-1$. Since the centroid backbone contains only one vertex, its table size satisfies $M_c=(H_c+1)\cdot1\leq m$. Hence, using $M_c\leq m$ and $\max\{M_c,2\}\leq m+1$, its table term in the selection score obeys 
\[
M_c\log\!\bigl(\max\{M_c,2\}\bigr) \leq m\log(m+1).
\]

Similarly, the depth of the centroid component $A$, measured from $c$, is  $H_{c,A} =\max_{v\in V(A)}\operatorname{dist}_S(c,v) \leq m-1$. Its correction term therefore satisfies
\[
 H_{c,A}\log(H_{c,A}+1)
 \leq(m-1)\log m
 \leq m\log(m+1).
\]
Combining these two estimates gives
\begin{align}
 M_c\log\!\bigl(\max\{M_c,2\}\bigr)
 +H_{c,A}\log(H_{c,A}+1)
 \leq2m\log(m+1).
 \label{eq:twoscoreterms}
\end{align}

It remains to bound the corrections for the other centroid components $C_i$. Since $P$ lies in $A$, every path from a vertex $v\in V(C_i)$ to $P$ passes through $c$. Therefore, $\operatorname{dist}_S(v,P)$ = $\operatorname{dist}_S(v,c) +\operatorname{dist}_S(c,P)$. By the definition of $h$, the left side is at most $h$. Since distances are nonnegative, this implies $ \operatorname{dist}_S(v,c)\leq h$. There is also a bound in terms of the component size $a_i$. The path from $c$ to $v$ uses one edge to enter $C_i$ and at most $a_i-1$ edges within $C_i$. Thus, $\operatorname{dist}_S(c,v)\leq1+(a_i-1)=a_i$. Taking the maximum over $v\in V(C_i)$ in both bounds gives:  $H_{c,i} :=\max_{v\in V(C_i)}\operatorname{dist}_S(c,v) \leq\min\{a_i,h\}$. In particular, $\log(H_{c,i}+1)\leq\log(h+1)$. Both factors are nonnegative, so the correction term satisfies
\[
 H_{c,i}\log(H_{c,i}+1)
 \leq\min\{a_i,h\}\log(h+1).
\]

Altogether, the centroid candidate's table and correction terms are bounded by
 \begin{align}
M_c\log\!\bigl(\max\{M_c,2\}\bigr)
 +H_{c,A}\log(H_{c,A}+1)
 +\sum_i H_{c,i}\log(H_{c,i}+1)\notag \leq
 2m\log(m+1)
 +\sum_i\min\{a_i,h\}\log(h+1).
 \end{align}

We now compare the recursive-size penalties of the two candidates. Recall that we are considering the case $\ell>3s/4$, in which the selected backbone $P$ is a diameter and the comparison centroid $c$ lies outside $P$. The preceding decomposition gives Eq.\ref{eq:sizeofAandB} i.e. $a=u+r,$ and $\ell=1+u+\sum_i a_i,$ alongside $a\leq s/2,$ and $a_i\leq s/2.$ 

Let $\varphi(t)=t\log(t+1)$, with $\varphi(0)=0$. If $n_\alpha$ denotes the size of an off-backbone component under the diameter deletion, then one such component has size $\ell$. Since all recursive-size penalties are nonnegative, $ \sum_\alpha\varphi(n_\alpha)\geq\varphi(\ell).$ For the centroid candidate, the component sizes are $a,a_1,a_2,\ldots$, so its recursive-size penalty is exactly $\varphi(a)+\sum_i\varphi(a_i).$

We first relate $\varphi(a)$ to $\varphi(u)$. We know that $u<a$. Differentiation gives $\varphi'(t) = \log(t+1)+\frac{t}{t+1}$. For every $t\in[u,a]$, we have $0\leq t\leq s$, and hence $\varphi'(t)\leq\log(s+1)+1.$ Integrating this bound and using $a-u=r$, we obtain
\begin{align*}
 \varphi(a)-\varphi(u)
 &=
 \int_u^a\varphi'(t)\,dt\\
 &\leq
 \int_u^a\bigl(\log(s+1)+1\bigr)\,dt\\
 &=
 (a-u)\bigl(\log(s+1)+1\bigr)\\
 &=
 r\bigl(\log(s+1)+1\bigr).
\end{align*}
Equivalently, $\varphi(a) \leq \varphi(u)+r\bigl(\log(s+1)+1\bigr).$

Since $c\notin V(P)$, we have $h\geq1$. Eq.~\ref{eq:stad-diameter-height} then gives $m\geq2h+1\geq3$, and therefore $M=(h+1)m\geq6$. Thus the diameter's table term in the score is $M\log M$, and its score satisfies
\begin{align}
 \mathcal C(S,P)
 &=
 s+M\log M
 +\sum_\alpha H_\alpha\log(H_\alpha+1)
 +\sum_\alpha\varphi(n_\alpha) \notag\\
 &\geq
 s+M\log M
 +\sum_\alpha H_\alpha\log(H_\alpha+1)
 +\varphi(\ell).
 \label{eq:diameterscorebig}
\end{align}
The preceding bounds for the centroid's table and correction terms give
\begin{align}
\label{eq:centroidscorebig}
 \mathcal C(S,(c))
 \leq{}&
 s+2m\log(m+1)
 +\sum_i\min\{a_i,h\}\log(h+1)+\varphi(a)+\sum_i\varphi(a_i).
\end{align}

Because the diameter was selected by using Eq.~\ref{eq:stad-score}, $\mathcal C(S,P)\leq\mathcal C(S,(c)).$ Combining the lower bound for the diameter score with the upper bound for the centroid score, and subtracting $s+\varphi(\ell)$ from both sides, yields
\begin{align*}
 &M\log M+\sum_\alpha H_\alpha\log(H_\alpha+1)\\
 &\quad\leq
 2m\log(m+1)
 +\sum_i\min\{a_i,h\}\log(h+1)
 +\varphi(a)+\sum_i\varphi(a_i)-\varphi(\ell)\\
 &\quad\leq
 2m\log(m+1)
 +r\bigl(\log(s+1)+1\bigr)
 +\sum_i\min\{a_i,h\}\log(h+1)
 +\varphi(u)+\sum_i\varphi(a_i)-\varphi(\ell).
\end{align*}

Every vertex of $P$ lies outside $B$, and $r=s-\ell$ counts all vertices outside $B$. Hence $m\leq r$. Also, since $m\leq s$ and $s\geq2$,
\[
 \log(m+1)\leq\log(2s),
 \qquad
 \log(s+1)+1\leq2\log(2s).
\]
Consequently,
\begin{align*}
 2m\log(m+1)+r\bigl(\log(s+1)+1\bigr)
 &\leq
 2r\log(2s)+2r\log(2s)\\
 &=4r\log(2s).
\end{align*}
Thus, for an absolute constant $K$ (one may take $K=4$),
\begin{align*}
 &M\log M+\sum_\alpha H_\alpha\log(H_\alpha+1)\\
 &\quad\leq
 Kr\log(2s)
 +\sum_i\min\{a_i,h\}\log(h+1)
 +\varphi(u)+\sum_i\varphi(a_i)-\varphi(\ell).
\end{align*}

To bound the remaining terms, apply Lemma~\ref{lem:stad-merge-penalty} with $N=\ell$ and partition parts $ 1,\ u,\ a_1,a_2,\ldots,$ omitting $u$ when it is zero. These parts sum to $\ell$ because $\ell=1+u+\sum_i a_i$. Let $\Delta_h$ denote the quantity defined in that lemma for this partition. Expanding its definition gives
\begin{align*}
 \Delta_h
 ={}&
 \sum_i\min\{a_i,h\}\log(h+1)
 +\varphi(u)+\sum_i\varphi(a_i)-\varphi(\ell)\\
 &+
 \underbrace{
 \min\{1,h\}\log(h+1)+\varphi(1)
 +\min\{u,h\}\log(h+1)
 }_{\geq0}.
\end{align*}
This identity also holds when $u=0$, since the terms involving $u$ then vanish. The additional terms on the second line are nonnegative. Therefore,
\begin{align}
 &M\log M+\sum_\alpha H_\alpha\log(H_\alpha+1)
 \notag\\
 &\quad\leq
 Kr\log(2s)
 +\sum_i\min\{a_i,h\}\log(h+1)
 +\varphi(u)+\sum_i\varphi(a_i)-\varphi(\ell)
 \notag\\
 &\quad\leq Kr\log(2s)+\Delta_h.
 \label{eq:stad-key-score-comparison}
\end{align}

The first assertion of Lemma~\ref{lem:stad-merge-penalty} always applies to this partition. Together with Eq.~\ref{eq:stad-diameter-height}, it gives
\[
 \Delta_h\leq(h+1)^2\leq M.
\]
Substituting M into Eq. \ref{eq:stad-key-score-comparison}, we obtain
\[
 M\log M+\sum_\alpha H_\alpha\log(H_\alpha+1)
 \leq Kr\log(2s)+M.
\]
Because $M\leq(M\log M)/\log6$ and $1/\log6<1$, we can absorb this $M$ term into the left side. Additionally $M= O(r\log(2s)).$ Therefore,
\begin{equation}
 M\log M+\sum_\alpha H_\alpha\log(H_\alpha+1)
 =O(r\log(2s)).
 \label{eq:stad-selected-transform-charge}
\end{equation}

It remains to bound the cost of processing the $s$ vertices at the call. Since $\ell>3s/4$, we have $s<\frac{4\ell}{3}.$ We distinguish two cases. First, suppose that $\ell<16(h+1)^2$. Then
\[
 s
 <\frac{4\ell}{3}
 <\frac{64}{3}(h+1)^2
 \leq\frac{64}{3}M
 =O(r\log(2s)).
\]

We also record when the stronger, negative bound applies. Every part of the partition is at most $s/2$: this holds for the $a_i$ by the centroid property, for $u$ because $u\leq a\leq s/2$, and for the part $1$ because $s\geq2$. Moreover,
\[
 \ell>\frac{3s}{4}
 \quad\Longrightarrow\quad
 \frac{s}{2}<\frac{2\ell}{3}.
\]
Thus the largest partition part is at most $2\ell/3$. Now, if $\ell\geq16(h+1)^2$, the second assertion of Lemma~\ref{lem:stad-merge-penalty} therefore gives $ \Delta_h\leq-\frac{\ell}{8}.$ Returning to Eq.~\ref{eq:stad-key-score-comparison},
we then obtain
\[
 0
 \leq
 M\log M+\sum_\alpha H_\alpha\log(H_\alpha+1)
 \leq
 Kr\log(2s)-\frac{\ell}{8}.
\]
The right side must consequently be nonnegative, which implies $\ell\leq8Kr\log(2s)$. Using $s=\ell+r$ and $\log(2s)\geq1$ for $s\geq2$, we obtain
\begin{align}
\label{eq:boundons}
 s
 =\ell+r
 \leq8Kr\log(2s)+r
 \leq(8K+1)r\log(2s)
 =O(r\log(2s)).   
\end{align}
Thus both cases give the required bound on vertex processing.

Combining this estimate with Eq.~\ref{eq:stad-selected-transform-charge} and the bound
on $s$ yields
\[
 s+M\log(2M)
 +\sum_\alpha H_\alpha\log(H_\alpha+1)
 =O(r\log(2s)).
\]
This establishes the claimed bound in the remaining case $\ell>3s/4$. Together with the previously treated case $\ell\leq3s/4$ and the singleton calls, it proves the lemma.
\end{proof}

\begin{proof}[Proof of  Theorem~\ref{thm:stad-adaptive-complexity}]
Exactness in exact arithmetic follows from
Theorem~\ref{thm:stad-decomposition-cost}.
We therefore prove the running-time bound. By Eq.~\ref{eq:summingrunning}, the total running time is
\[
 \begin{aligned}
 T_{\mathrm{total}}
 =O\!\Biggl(
 \sum_{S\in\mathcal R}
 \Bigl[
 |V(S)|+b_S+M_S\log(2M_S)
 +\sum_{\alpha=1}^{k_S}
 H_{S,\alpha}\log(H_{S,\alpha}+1)
 \Bigr]
 \Biggr).
 \end{aligned}
\]
Applying Lemma~\ref{lem:stad-call-charge} at every call gives $T_{\mathrm{total}}= O\!\left(\sum_{S\in\mathcal R}r_S\log(2|V(S)|)+\sum_{S\in\mathcal R}b_S \right)$.

From the boundary-edge estimate in Eq.~\ref{eq:stad-boundary-volume}, we get  $\sum_{S\in\mathcal R}b_S \leq \sum_{S\in\mathcal R}|V(S)|-n \leq \sum_{S\in\mathcal R}|V(S)|$. Additionally from Eq.~\ref{eq:boundons} in Lemma~\ref{lem:stad-call-charge}, we know that $|V(S)|=O\!\left(r_S\log(2|V(S)|)\right)$ uniformly over all calls. Summing this bound shows that $\sum_{S\in\mathcal R}b_S = O\!\left( \sum_{S\in\mathcal R}r_S\log(2|V(S)|) \right)$. Consequently, $T_{\mathrm{total}}= O\!\left( \sum_{S\in\mathcal R}r_S\log(2|V(S)|) \right)$.

It remains to bound this sum $\sum_{S\in\mathcal R}r_S\log(2|V(S)|)$. We use a charging argument: we assign numerical costs to vertices and bound the total cost assigned to each vertex. This assignment is only an accounting device for the proof.

At every call with children, fix one largest child, resolving ties arbitrarily. At call $S$, assign a charge of $\log(2|V(S)|)$ to every vertex outside that chosen largest child. If $S$ has no children, charge every vertex of $S$. Since the largest child contains $\ell_S$ vertices, the number of charged vertices is exactly $r_S=|V(S)|-\ell_S.$ The total charge assigned at call $S$ is therefore
\[
 \underbrace{r_S}_{\text{number of charged vertices}}
 \underbrace{\log(2|V(S)|)}_{\text{charge per vertex}}.
\]

We next show why this choice of charged vertices is useful. Fix a vertex $v$ charged at a call $S$. If $v$ belongs to the selected backbone, it is removed at this call and receives no subsequent charges. Otherwise, $v$ belongs to a child $B$ different from the chosen largest child $B_{\max}$.

Since $B_{\max}$ is a largest child, $|V(B)|\leq|V(B_{\max})|.$ The two children are disjoint, and their vertices are among those remaining after removal of the $m_S$ backbone
vertices. Consequently,
\[
 \begin{aligned}
 2|V(B)|
 &\leq |V(B)|+|V(B_{\max})|\\
 &\leq |V(S)|-m_S \\
 &<|V(S)|,
 \end{aligned}
\]
In particular, $|V(B)|\leq\frac{|V(S)|}{2}.$ Thus, whenever a charged vertex survives the call, it enters a child containing at most half the vertices of the current subproblem. The vertex may pass through several calls without receiving another charge, by remaining in the chosen largest child at those calls. Nevertheless, every subsequent subproblem containing $v$ is contained in $B$. Therefore, at the next call where $v$ is charged, the subproblem size is at most $|V(S)|/2$.

We now bound the number of charges received by any fixed
vertex $v$. Suppose that it receives $q$ charges, and
let $N_1,\ldots,N_q$ be the subproblem sizes at those
charged calls, in chronological order. The first size
is at most $n$, and the preceding argument gives
\[
 N_1\leq n,
 \qquad
 N_{j+1}\leq\frac{N_j}{2}
 \quad\text{for }j=1,\ldots,q-1.
\]
Repeated substitution yields
\[
 N_2\leq\frac n2,\qquad
 N_3\leq\frac n{2^2},\qquad
 \ldots,\qquad
 N_q\leq\frac n{2^{q-1}}.
\]
There are $q-1$ halvings between $q$ charged calls.
Since the last subproblem contains $v$, its size is
at least one. Hence
\[
 1\leq N_q\leq\frac n{2^{q-1}}.
\]
Rearranging and taking base-two logarithms gives
\[
 2^{q-1}\leq n,
 \qquad
 q-1\leq\log_2 n.
\]
Because $q$ is an integer, $ q\leq1+\lfloor\log_2 n\rfloor.$

Every subproblem has at most $n$ vertices, so each charge
is at most $\log(2n)$. The total charge assigned to any
one vertex is consequently at most $\bigl(1+\lfloor\log_2 n\rfloor\bigr)\log(2n).$

We now count the same total charge by recursive calls
and by vertices. For each call $S$ with children, let
$B_{\max,S}$ be its chosen largest child. For a call
without children, interpret
$V(B_{\max,S})=\varnothing$.
Define the charge assigned to vertex $v$ at call $S$ by
\[
 a_{S,v}
 =
 \begin{cases}
 \log(2|V(S)|),
 & v\in V(S)\setminus V(B_{\max,S}),\\
 0,
 & \text{otherwise}.
 \end{cases}
\]
These quantities can be viewed as entries of a table
whose rows correspond to recursive calls and whose
columns correspond to vertices.

At call $S$, exactly $r_S$ vertices receive a nonzero charge. Thus the sum along its row is $\sum_{v\in V(T)}a_{S,v} = r_S\log(2|V(S)|)$. For a fixed vertex $v$, the sum along its column is the total charge it receives throughout the recursion. The preceding halving argument shows that $v$ receives at most $1+\lfloor\log_2 n\rfloor$ charges, each at most $\log(2n)$. Therefore,
$ \sum_{S\in\mathcal R}a_{S,v} \leq \bigl(1+\lfloor\log_2 n\rfloor\bigr)\log(2n).$

Since both index sets are finite, we may interchange
the order of summation. Counting the charges first
by calls and then by vertices gives
\begin{align*}
 \sum_{S\in\mathcal R}r_S\log(2|V(S)|)
 &=
 \sum_{S\in\mathcal R}
 \sum_{v\in V(T)}a_{S,v}\\
 &=
 \sum_{v\in V(T)}
 \sum_{S\in\mathcal R}a_{S,v}\\
 &\leq
 \sum_{v\in V(T)}
 \bigl(1+\lfloor\log_2 n\rfloor\bigr)\log(2n)\\
 &=
 n\bigl(1+\lfloor\log_2 n\rfloor\bigr)\log(2n)\\
 &=O\!\left(n\log^2(2n)\right).
\end{align*}
One logarithmic factor bounds the number of charges per vertex, and the other bounds the amount of each charge.
\end{proof}

\begin{corollary}[An off-backbone-depth bound for adaptive selection]
\label{cor:stad-adaptive-depth-runtime} Under the assumptions of Theorem~\ref{thm:stad-decomposition-cost}, suppose that every nontrivial recursive call selects either a diameter path or a singleton centroid as its backbone. Let $n=|V(T)|$, let $L$ be the maximum number of calls on a root-to-leaf path in the recursion tree, and define $\overline H=\max_{S\in\mathcal R}H_S$, with $H_S=0$ for singleton base cases. Then $L\leq\overline H+\lfloor\log_2 n\rfloor+1,$ and the total running time is $O\!\left((\overline H+1)n\log(2n)\right)$. In particular, this bound applies to the adaptive selection rule which uses Eq.~\ref{eq:stad-score}.
\end{corollary}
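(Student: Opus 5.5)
The plan is to derive the running-time bound from Corollary~\ref{cor:stad-depth-runtime}, which already gives $O(nL+(\overline H+1)n\log(2n))$ for any backbone rule satisfying the assumptions of Theorem~\ref{thm:stad-decomposition-cost}. So the work is to prove $L\leq\overline H+\lfloor\log_2 n\rfloor+1$. Once that holds, $nL\leq(\overline H+1)n+n\lfloor\log_2 n\rfloor=O((\overline H+1)n\log(2n))$, and the claimed total follows. The final sentence is immediate, because the rule of Eq.~\ref{eq:stad-score} always returns either $P_{\mathrm{diam}}$ or $(c_S)$.

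To bound $L$, I fix a root-to-leaf chain of calls $S_1\supsetneq S_2\supsetneq\cdots\supsetneq S_L$. Each $S_{j+1}$ is a component of $S_j\setminus V(P_{S_j})$, and the last call is a singleton base case, which accounts for the "$+1$". The nontrivial calls split into centroid calls and diameter calls, and I count each type separately.

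\emph{Centroid calls.} Sizes are nonincreasing along the chain, and a centroid call leaves a child of size at most $|V(S_j)|/2$. Since the sizes start at most $n$ and stay at least $1$, there are at most $\lfloor\log_2 n\rfloor$ centroid calls.

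\emph{Diameter calls.} Here I use the diameter $D(S)$ of a subtree as a potential. It is nonincreasing along the chain, because every child is a connected subtree and therefore distances are preserved by the footnote in the proof of Lemma~\ref{lem:stad-cross}. The key claim is: if $S_j$ makes a diameter call with depth $H=H_{S_j}$ and $S_{j+1}=B_\alpha$ has root $r_\alpha$, then every $x\in V(B_\alpha)$ satisfies $\dist(x,r_\alpha)=d(x)-1\leq H-1$. Consequently $D(S_{j+1})\leq 2H_{S_j}-2$. Combined with Eq.~\ref{eq:stad-diameter-height}, which says $2H_{S_j}\leq m-1=D(S_j)$, this gives two facts:
\begin{itemize}
\item the first diameter call on the chain leaves a child with diameter at most $2\overline H-2$;
\item every later diameter call decreases the diameter by at least $2$.
\end{itemize}
A nontrivial call needs $D\geq1$. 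Hence after the first diameter call, at most $\overline H-1$ further diameter calls can occur, so there are at most $\overline H$ diameter calls in total. Adding the two counts gives the bound on $L$.

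The step I expect to be the main obstacle is the diameter-call count. The drop $D(S_{j+1})\leq D(S_j)-2$ relies on the diameter-path inequality $H\leq (m-1)/2$. It also relies on the depth bound measured from $r_\alpha$ rather than from the anchor, so the off-by-one bookkeeping must be checked. In addition, interleaved centroid calls must not increase $D$, which holds because they only pass to connected subtrees. I would check the edge case $\overline H=0$ separately: then every backbone equals its whole subtree, so $L=1$.
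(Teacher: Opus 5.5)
Your proposal is correct and follows essentially the paper's route. You reduce to Corollary~\ref{cor:stad-depth-runtime}, bound centroid calls by halving ($\leq\lfloor\log_2 n\rfloor$), and bound diameter calls by the drop $\operatorname{diam}(B)\leq\operatorname{diam}(S)-2$ starting from a child of diameter at most $2\overline H-2$.

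Two small fixes. First, the last call on a chain need not be a singleton: a diameter call on a path subtree has no children. Your $D\geq1$ count already covers such a terminal diameter call, so the $+1$ is just slack. Second, the ``in particular'' clause also needs Lemma~\ref{lem:stad-selection} to certify that adaptive selection meets the $O(|V(S)|+b_S)$ selection-time hypothesis.
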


\begin{proof}
By Lemma~\ref{lem:stad-selection}, adaptive backbone selection takes $O(|V(S)|)$ time at each recursive call. Therefore, Corollary~\ref{cor:stad-depth-runtime} applies to the resulting decomposition and gives $O\!\left( nL+(\overline H+1)n\log(2n) \right)$. It remains to bound the repeated-processing term $nL$. We show that the adaptive choice between a diameter path and a singleton centroid ensures
\[
L\leq
\overline H+\lfloor\log_2 n\rfloor+1.
\]

\textit{\textbf{(i) Centroid deletion reduces subtree size.}}
By the definition of a centroid, deleting the selected centroid of a subtree $S$ leaves components containing at most $|V(S)|/2$ vertices each. Other recursive transitions cannot increase subtree size. Hence, after $c$ centroid transitions along a recursive chain, the current subtree has at most $n/2^c$ vertices. Since it is nonempty, $\boxed{c\leq\lfloor\log_2 n\rfloor}.$

\textit{\textbf{(ii) Diameter-path deletion reduces diameter.}}
Suppose that the selected backbone $P$ is a
diameter path of $S$, with endpoints $a,b$ and
length $D=\operatorname{diam}(S)$.
Let $B$ be a component of $S\setminus V(P)$,
let $p\in V(P)$ be its attachment vertex, and
let $r\in V(B)$ be adjacent to $p$.

For every $u\in V(B)$, the paths from $u$ to $a$ and $b$ pass through $p$. Because neither distance exceeds $D$,
\[
\dist_S(u,p)+\dist_S(p,a)\leq D,
\qquad
\dist_S(u,p)+\dist_S(p,b)\leq D.
\]
Since $\dist_S(p,a)+\dist_S(p,b)=D,$ these inequalities imply
\[
\dist_S(u,p)
\leq
\min\{\dist_S(p,a),\dist_S(p,b)\}
\leq \frac{D}{2}.
\]
For any $u,v\in V(B)$, the triangle inequality within $B$ therefore gives
\begin{align*}
\dist_B(u,v)
&\leq \dist_B(u,r)+\dist_B(r,v)\\
&=\dist_S(u,p)+\dist_S(v,p)-2\\
&\leq D-2.
\end{align*}
Hence every surviving component satisfies
\begin{equation}
\operatorname{diam}(B)
\leq \operatorname{diam}(S)-2.
\label{eq:stad-diameter-decrease1}
\end{equation}

Consider any root-to-leaf chain of recursive calls. If the root has no children, this chain contains only one call. Otherwise, let $B$ be the first child on the chain, let $p$ be its attachment vertex on the root backbone, and let $r\in V(B)$ be adjacent to $p$. By the definition of $\overline H$,
\[
\dist_B(u,r)
=
\dist_T(u,p)-1
\leq \overline H-1
\qquad\text{for every }u\in V(B).
\]
Therefore, 
\begin{equation}
    \operatorname{diam}(B) \leq 2(\overline H-1).
    \label{eq:diameter-ineq2}
\end{equation}

Let $d$ denote the number of diameter-backbone calls after the root that produce another child on this chain. Each such transition decreases the diameter by at least two (by Eq \ref{eq:stad-diameter-decrease1}), while intervening centroid transitions cannot increase it. Hence, if $S_{\mathrm{last}}$ is the subtree processed by the final call, then using Eq. \ref{eq:stad-diameter-decrease1} first and then Eq. \ref{eq:diameter-ineq2}, we get
\[
0
\leq \operatorname{diam}(S_{\mathrm{last}})
\leq \operatorname{diam}(B)-2d
\leq 2\overline H-2-2d.
\]
The nonnegativity of the diameter therefore implies $2d\leq 2\overline H-2$, and hence $\boxed{d\leq\overline H-1}.$

\paragraph{Bounding the number of recursive calls.}
Every intermediate call is either a diameter-backbone call or a centroid call. Therefore, the number $\ell$ of calls on the chain satisfies
\begin{align*}
\ell
&=
\underbrace{1}_{\text{root}}
+
\underbrace{d}_{\substack{\text{diameter}\\\text{calls}}}
+
\underbrace{c}_{\substack{\text{centroid}\\\text{calls}}}
+
\underbrace{1}_{\substack{\text{terminal}\\\text{call}}}\\
&\leq
1+(\overline H-1)+\lfloor\log_2 n\rfloor+1\\
&=
\overline H+\lfloor\log_2 n\rfloor+1.
\end{align*}
The terminal call is counted separately because it produces no child and therefore contributes no additional recursive transition. Since this bound holds for every root-to-leaf chain, $L\leq\overline H+\lfloor\log_2 n\rfloor+1.$ If the root itself is terminal, then $L=1$, and the same bound holds. It follows that
\begin{align*}
    nL &= O\!\left( n(\overline H+1)+n\log(2n)\right)\\
       &\le O\!\left( n(\overline H + \log(2n))\right)\\
       &\le O\!\left( n(\overline H\log(2n) + \log(2n))\right)\\
       &= O\!\left( n(\overline H+1)\log(2n)\right).
\end{align*}

Substituting into the bound from Corollary~\ref{cor:stad-depth-runtime} yields the claimed total running time $O\!\left( (\overline H+1)n\log(2n) \right)$.
\end{proof}

\begin{proof}[Proof of
Theorem~\ref{thm:stad-adaptive-geometry}]
First, we prove the explicit vertex-processing bound
\begin{equation}
 \sum_{S\in\mathcal R}|V(S)| \leq n\log_2(n+1).
 \label{eq:stad-specialized-volume}
\end{equation}

Set $\varphi(t)=t\log(t+1)$, with $\varphi(0)=0$, and define $D_S= \varphi(|V(S)|) - \sum_{B\text{ child of }S}\varphi(|V(B)|)$. We will show that at every call:
\begin{equation}
 D_S\geq |V(S)|\log 2
 \label{eq:stad-specialized-potential-decrease}
\end{equation}

Fix a call and write $s=|V(S)|$. For a singleton, $D_S=\varphi(1)=\log 2$, so Eq.~\ref{eq:stad-specialized-potential-decrease} holds with equality. Suppose henceforth that $s\geq2$. Let $P$ be the selected backbone and let $c$ be the centroid used in the score comparison.

\medskip
\noindent\textit{Case 1: $c\in V(P)$.}
Every child after deleting $P$ is contained in a component after deleting $c$. Thus every child size $n_\alpha$ is at most $s/2$. Moreover, at least one vertex is removed, so $\sum_\alpha n_\alpha\leq s-1$. Consequently,
\begin{align*}
 D_S \geq
 s\log(s+1)-(s-1)\log(s/2+1)=
 s\log 2
 +\log\frac{s+1}{2}
 -(s-1)\log\left(1+\frac{1}{s+1}\right).
\end{align*}
The elementary inequalities
$\log(1+x)\leq x$ for $x\geq0$ and
$\log y\geq1-1/y$ for $y>0$ give
\[
 (s-1)\log\left(1+\frac{1}{s+1}\right)
 \leq\frac{s-1}{s+1}
 \leq\log\frac{s+1}{2}.
\]
Therefore $D_S\geq s\log 2$.
This case includes selection of the singleton centroid.

\medskip
\noindent\textit{Case 2: $c\notin V(P)$.}
The selected backbone is then a diameter path. Write $m=|V(P)|$,  $h=H_P(S)$ and $M=(h+1)m$. Let $A$ be the component of $S\setminus\{c\}$ containing $P$, let $a=|V(A)|$, and let $a_i$ be the sizes of the other centroid components. Then
\[
 s=1+a+\sum_i a_i,
 \qquad
 m\leq a\leq s/2,
 \qquad
 a_i\leq s/2.
\]

Here $h\geq2$. Indeed, $s\geq3$, and some centroid component other than $A$ must exist because $a\leq s/2$. A neighbor of $c$ in that component has distance at least two from $P$. Eq.~\ref{eq:stad-diameter-height} therefore gives
\[
 m\geq2h+1\geq5,
 \qquad
 M=(h+1)m\geq15.
\]

Write $t_1,\ldots,t_q$ for the partition parts $1,a,a_1,a_2,\ldots$, which sum to $s$, and every part is at most $s/2$. and define
\[
 \widehat\Delta_h
 =
 \sum_{j=1}^{q}
 \left(
 \min\{t_j,h\}\log(h+1)+\varphi(t_j)
 \right)
 -
 \varphi(s).
\]

We first compare the candidate scores. As established in the proof of Lemma~\ref{lem:stad-call-charge}, the centroid's table term and correction for $A$ total at most $2m\log(m+1)$ (from Eq.\ref{eq:twoscoreterms}). The correction for each other centroid component is at most $\min\{a_i,h\}\log(h+1)$. It follows from Eq.\ref{eq:centroidscorebig} that
\begin{align*}
 \mathcal C(S,(c))
 \leq{}&
 s+2m\log(m+1)
 +\sum_i\min\{a_i,h\}\log(h+1)+\varphi(a)+\sum_i\varphi(a_i)\\
 \leq{}&
 s+2m\log(m+1)+\varphi(s)+\widehat\Delta_h.
\end{align*}

For the selected diameter, nonnegativity of its correction terms gives
\[
 \mathcal C(S,P)
 \geq
 s+M\log M+\varphi(s)-D_S.
\]
Since the selection rule ensures
$\mathcal C(S,P)\leq\mathcal C(S,(c))$, we obtain
\begin{equation}
 D_S
 \geq
 M\log M-2m\log(m+1)-\widehat\Delta_h.
 \label{eq:stad-specialized-score-decrease}
\end{equation}

We now sharpen the merging estimate using the fact that every partition part is at most $s/2$. Set
\[
 Q=(h+1)^2,
 \qquad
 g(x)
 =
 \min\{1,h/x\}\log(h+1)
 +\log(x+1)-\log(s+1).
\]
Then $\widehat\Delta_h=\sum_{j=1}^{q}t_jg(t_j).$ The endpoint argument in the proof of Lemma~\ref{lem:stad-merge-penalty} shows that $g$ is increasing on $[1,h]$ and has no interior maximum on $[h,s/2]$. Since $h<s/2$, this implies $\widehat\Delta_h \leq s\max\{g(h),g(s/2)\}.$

At the first endpoint, using $\log x\leq x-1$,
\begin{align*}
 s g(h)
 = s\log\frac{Q}{s+1} \leq
 s\left(\frac{Q}{s+1}-1\right) \leq Q-s \leq Q-s\log 2.
\end{align*}

For the second endpoint, observe that $\log(h+1)\leq(h+1)/2$ for $h\geq2$. Hence, $2h\log(h+1)+1 \leq h(h+1)+1\leq Q.$ Using $\log(1+x)\leq x$, we therefore obtain
\begin{align*}
 s g(s/2)
 &=
 2h\log(h+1)
 +s\log\frac{s/2+1}{s+1}\\
 &=
 2h\log(h+1)-s\log 2
 +s\log\left(1+\frac{1}{s+1}\right)\\
 &\leq
 2h\log(h+1)+1-s\log 2\\
 &\leq Q-s\log 2.
\end{align*}
Thus
\begin{equation}
 \widehat\Delta_h\leq Q-s\log 2.
 \label{eq:stad-specialized-balanced-merge}
\end{equation}

It remains to compare $Q$ with the difference between the diameter's table term and the centroid estimate. Since $M\geq m+1$ and $h\geq2$,
\[
 2m\log(m+1)
 \leq \frac{2}{h+1}M\log M
 \leq \frac23 M\log M.
\]
Furthermore,
\[
 \frac{M}{Q}
 =
 \frac{m}{h+1}
 \geq\frac{2h+1}{h+1}
 \geq\frac53,
\]
and $M\geq15$ implies $\log M\geq2$.
Consequently,
\begin{align*}
 M\log M-2m\log(m+1) \geq \frac13 M\log M \geq \frac23 M \geq \frac{10}{9}Q \geq Q.
\end{align*}
Combining this inequality with Eqs.~\ref{eq:stad-specialized-score-decrease} and~\ref{eq:stad-specialized-balanced-merge} yields $D_S \geq Q-\bigl(Q-s\log 2\bigr) =s\log 2$. This completes the proof of Eq.~\ref{eq:stad-specialized-potential-decrease}.

\medskip
Finally, the potential differences telescope: every nonroot call appears once in the first sum and once as a child in the second sum, so its contributions cancel. Only the root call remains; it contains all $n$ vertices and has no parent. Therefore,
\begin{align*}
 \sum_{S\in\mathcal R}D_S
 &=
 \sum_{S\in\mathcal R}\varphi(|V(S)|)
 -
 \sum_{S\in\mathcal R}
 \sum_{B\text{ child of }S}\varphi(|V(B)|) =\varphi(n).
\end{align*}
Using Eq. \ref{eq:stad-specialized-potential-decrease} and substituting the above we get
\[
 \sum_{S\in\mathcal R}|V(S)|
 \leq
 \frac{1}{\log 2}\sum_{S\in\mathcal R}D_S
 =
 \frac{n\log(n+1)}{\log 2}
 =
 n\log_2(n+1).
\]

First, the potential-decrease estimate
Eq.~\ref{eq:stad-specialized-potential-decrease}, with
$\varphi(t)=t\log(t+1)$, gives
\[
 |V(S)|\log 2
 \leq
 \varphi(|V(S)|)
 -
 \sum_{B\text{ child of }S}\varphi(|V(B)|).
\]
Summing over calls cancels every nonroot potential term.
Therefore,
\begin{equation}
 \sum_{S\in\mathcal R}|V(S)|
 \leq
 \frac{\varphi(n)}{\log 2}
 =
 n\log_2(n+1).
 \label{eq:stad-adaptive-volume}
\end{equation}

Next, we bound each component correction by the interaction-table cost of its child call. Let $B$ be a component created at call $S$, and let $r\in V(B)$ be its vertex adjacent to the parent backbone. Counting the edges in the longest path of $B$, we denote $D_B=\operatorname{diam}(B)$. Therefore, for the component $B$, following notation in Eq.~\ref{eq:stad-depth}, we have:
\[
 H_{S,B}
 =
 1+\max_{v\in V(B)}\dist_B(r,v)
 \leq D_B+1.
\]

If the child call selects a diameter backbone, then the number of vertices in the backbone, $m_B=D_B+1$, and hence
\[
 H_{S,B}+1
 \leq D_B+2
 =m_B+1
 \leq2M_B.
\]
If it selects a singleton centroid $c_B$, then $M_B=H_B+1$ and $D_B\leq2H_B$, giving
\[
 H_{S,B}+1
 \leq D_B+2
 \leq2(H_B+1)
 =2M_B.
\]
Therefore, the same inequality holds for a singleton child.

Consequently,
\[
 H_{S,B}\log(H_{S,B}+1)
 \leq2M_B\log(2M_B).
\]
Every nonroot call occurs exactly once as a child, so
\begin{equation}
 \sum_{S\in\mathcal R}\sum_{\alpha=1}^{k_S}
 H_{S,\alpha}\log(H_{S,\alpha}+1)
 \leq
 2\sum_{B\in\mathcal R\setminus\{T\}}
 M_B\log(2M_B).
 \label{eq:stad-corrections-child-tables}
\end{equation}

Applying Theorem~\ref{thm:stad-decomposition-cost}
and Eq.~\ref{eq:stad-adaptive-volume} now yields
\[
 T_{\mathrm{adaptive}}
 =
 O\!\left(
 n\log(2n)
 +\sum_{S\in\mathcal R}M_S\log(2M_S)
 \right).
\]

Finally, Lemma~\ref{lem:stad-selection} gives
$M_S=O(|V(S)|)=O(n)$, so
$\log(2M_S)=O(\log(2n))$.
The removed backbones partition the vertices, giving
$\sum_S m_S=n$. Thus,
\[
 \sum_{S\in\mathcal R}M_S
 =
 \sum_{S\in\mathcal R}(H_S+1)m_S
 =
 n+\sum_{S\in\mathcal R}m_SH_S.
\]
Substitution proves the Theorem.
\end{proof}

\begin{proof}[Proof of Corollary \ref{cor:stad-adaptive-equal-spider}]
The center $c$ is the unique centroid. For the diameter candidate, we know $M=(h+1)(2h+1).$

Using the component-local correction terms in
Eq.~\ref{eq:stad-score}, the two scores are
\begin{align*}
 \mathcal C(T,(c))
 &=
 n+(h+1)\log(h+1)+2sh\log(h+1),\\
 \mathcal C(T,P_{\mathrm{diam}})
 &=
 n+M\log M+2(s-2)h\log(h+1).
\end{align*}
Their difference is
\[
 \mathcal C(T,P_{\mathrm{diam}})
 -\mathcal C(T,(c))
 =
 M\log M-(5h+1)\log(h+1)>0.
\]
Indeed,
\[
 M-(5h+1)=2h(h-1)\geq0,
 \qquad M>h+1.
\]
The adaptive rule therefore selects the center.

It remains to verify the adaptive choice on each child. Let $B$ be an arm component with $t=|V(B)|$ vertices. If $t=1$, the algorithm returns its singleton base case. Suppose that $t\geq2$. The diameter backbone is the entire path $B$. Thus $m=t$, $H=0$, and $M=t$, with no remaining components. Its score is therefore
\[
 \mathcal C(B,P_{\mathrm{diam}})
 =t+t\log t.
\]

For the singleton centroid candidate, let $a$ and $b$ be the numbers of vertices on its two sides, ordered so that $a\leq b\leq a+1,$ and $a+b=t-1$. A side of size zero is omitted from the decomposition. Since each nonempty side is a path attached at an endpoint, its component depth equals its size. Consequently, the centroid candidate has $H=b$,
$M=b+1$, and score
\[
 \mathcal C(B,(c_B))
 =
 t+(b+1)\log(b+1)
 +2a\log(a+1)+2b\log(b+1).
\]
The factor $2$ accounts for the component correction and the recursive-size penalty, which coincide for each side.

If $t=2$, then $a=0$ and $b=1$, giving
\[
 \mathcal C(B,(c_B))
 =2+4\log2
 >
 2+2\log2
 =\mathcal C(B,P_{\mathrm{diam}}).
\]

For $t\geq3$, we have $a\geq1$ and $b\geq1$.
Since $\log(b+1)\geq\log(a+1)$,
\begin{align*}
 \mathcal C(B,(c_B))-t
 &\geq (2a+3b+1)\log(a+1)\\
 &\geq 2t\log(a+1)\\
 &\geq t\log t.
\end{align*}
The second inequality follows from $2a+3b+1=2t+(b-1)\geq2t.$ For the last inequality, use $b\leq a+1$ and $a\geq1$ to obtain $t=a+b+1\leq2(a+1)\leq(a+1)^2,$ and hence $\log t\leq2\log(a+1)$.

Thus the entire-path candidate has score no larger than the centroid candidate. Since ties favor the diameter, the adaptive rule selects the entire path in every nontrivial child call. Each arm is therefore resolved in one call, with no further recursion. The selected decomposition consequently satisfies
\[
 \sum_{S\in\mathcal R}m_SH_S
 =
 1\cdot h+\sum_{\alpha=1}^{s}h\cdot0
 =
 h.
\]
Theorem~\ref{thm:stad-adaptive-geometry} therefore gives
$O((n+h)\log(2n))=O(n\log(2n))$.

More precisely, this is exactly the center construction
of Proposition~\ref{prop:stad-spider-comparison}, whose
cost is $O(n\log(h+1))$.
Evaluating the two candidate scores adds only $O(n)$
work over this decomposition; the rejected diameter
table is never constructed.
\end{proof}

\subsection{\STADTFI\ (Specialized Kernels) : Near Linear Run Time}
\label{sec:stad-specialized}
\vspace{-5pt}
For exponential kernels, linear-time integration on trees is already established by \citet[Lemma~3.5]{choromanski2022block}. We show how this structure fits into our backbone aggregation and correction formulas, allowing both $G$ and $E_\alpha$. Specifically, we consider kernels supplied as $J-$finite sums of exponentials, with $c_\ell\in\mathbb R,$ and $\lambda_\ell>0$: $f(d)=\sum_{\ell=1}^{J}c_\ell\lambda_\ell^d$.


\begin{lemma}[Cross interactions for exponential sums]
\label{lem:stad-specialized-cross}
For a kernel supplied in the form $f(d)=\sum_{\ell=1}^{J}c_\ell\lambda_\ell^d$, all cross-component contributions at a recursive call on $S$ can be computed exactly in $O(J|V(S)|+b_S)$ time by Algo.~\ref{alg:stad-specialized-cross}.
\end{lemma}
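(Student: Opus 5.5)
The proof splits into two parts: exactness of Algo.~\ref{alg:stad-specialized-cross}, and its per-call cost. Both run one exponential $\lambda=\lambda_\ell$ at a time; linearity in $f$ reassembles the $J$ terms weighted by $c_\ell$.

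\textbf{Exactness for one exponential.} I start from the definition of $G_q[s]$ in Eq.~\ref{eq:sumofFFTterms} and the correction $E_\alpha[q]$ in Eq.~\ref{eq:stad-E}, both specialized to $f(d)=\lambda^d$. The factorization $\lambda^{q+|s-t|+r}=\lambda^q\lambda^{|s-t|}\lambda^r$ gives $G_q[s]=\lambda^q G_0[s]$, where
\[
G_0[s]=\sum_t \lambda^{|s-t|}g[t],
\qquad
g[t]=\sum_{\pi(v)=p_t}\lambda^{d(v)}x_v .
\]
This $g$ is exactly the working row initialized by vertex accumulation in the algorithm. Similarly, $E_\alpha[q]=\lambda^q\sum_{v\in B_\alpha}\lambda^{d(v)}x_v=\lambda^q E_\alpha[0]$. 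Lemma~\ref{lem:stad-cross} then yields
\[
\operatorname{Cross}_P(u)=\lambda^{d(u)}\bigl(G_0[a(\alpha)]-E_\alpha[0]\bigr)
\]
for off-backbone $u$, and $\operatorname{Cross}_P(p_s)=G_0[s]$ on the backbone.

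\textbf{The two sweeps.} It remains to show that the forward and backward sweeps compute $y[s]=\sum_t\lambda^{|s-t|}g[t]$. After the forward sweep the row holds $F[s]=\sum_{t\le s}\lambda^{s-t}g[t]$, which follows by induction on $s$. For the backward sweep I claim, by downward induction on $s$, that the updated value equals $y[s]$. The base case is $s=m$, where $F[m]=y[m]$ already holds and that entry is never updated. Write $R[s]=\sum_{t>s}\lambda^{t-s}g[t]$, so that $y[s]=F[s]+R[s]$. Then
\[
\lambda y[s+1]=\lambda F[s+1]+\lambda R[s+1].
\]
Here $\lambda F[s+1]=\lambda^2F[s]+\lambda g[s+1]$ and $\lambda R[s+1]=R[s]-\lambda g[s+1]$. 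Adding these gives
\[
\lambda y[s+1]=\lambda^2F[s]+R[s],
\]
hence $(1-\lambda^2)F[s]+\lambda y[s+1]=F[s]+R[s]=y[s]$, which is exactly the backward update.

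This sweep identity is the one delicate step. Care is needed about in-place order, and about the fact that the formula uses the already-updated $G_0[s+1]$. No division by $1-\lambda^2$ occurs, so $\lambda=1$ causes no issue.

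\textbf{Cost.} Decomposition costs $O(|V(S)|+b_S)$ by Algo.~\ref{alg:decompose-components}. Per exponential, the remaining work is powers up to $H\le|V(S)|$, two vertex-accumulation passes, two $O(m)$ sweeps, and two $O(|V(S)|)$ assignment passes, totalling $O(|V(S)|)$. Summing over $\ell=1,\ldots,J$ gives $O(J|V(S)|+b_S)$, as claimed.
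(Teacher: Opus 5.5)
Your proposal is correct and follows essentially the same route as the paper. The matching steps are the factorizations $G_q[s]=\lambda^q G_0[s]$ and $E_\alpha[q]=\lambda^q E_\alpha[0]$, the vertex-accumulated working row, a forward-prefix induction followed by a backward induction for the two sweeps, and an $O(|V(S)|)$ cost per exponential on top of one shared $O(|V(S)|+b_S)$ decomposition; your split $y[s]=F[s]+R[s]$ is only a more explicit bookkeeping of the paper's expansion of $\lambda G_0[s+1]$.
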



\begin{proof}[Proof of Lemma~\ref{lem:stad-specialized-cross}]
First consider a single exponential term $f(d)=\lambda^d$. Substituting $f(d)=\lambda^d$ into Eq.~\ref{eq:sumofFFTterms} gives
\begin{align*}
G_q[s]=\sum_{r=0}^{H}\sum_{t=1}^{m}\lambda^{q+|s-t|+r}A_r[t]= \lambda^q \sum_{r=0}^{H}\sum_{t=1}^{m} \lambda^{|s-t|+r}A_r[t]=\lambda^q G_0[s].
\end{align*}
Thus, computing the row $G_0[\cdot]$ suffices to evaluate every required entry of $G$.

Similarly, substituting into Eq.\ref{eq:stad-E}, evaluates the error term as
\begin{align*}
E_\alpha[q] = \sum_{r=1}^{H_\alpha}\lambda^{q+r}b_\alpha[r] = \lambda^q
\sum_{r=1}^{H_\alpha}\lambda^r b_\alpha[r] =\lambda^q E_\alpha[0],
\qquad 0\leq q\leq H_\alpha.
\end{align*}
Consequently,
\begin{equation}
G_q[s]=\lambda^q G_0[s],
\qquad
E_\alpha[q]=\lambda^q E_\alpha[0].
\label{eq:stad-specialized-factorization}
\end{equation}

Now lets try to calculate the row $G_0[\cdot]$. The question is can we do it efficiently without using FFTs? Reordering the finite sums in the definition of $G_0[s]$ yields
\begin{equation}
G_0[s] = \sum_{t=1}^{m}\lambda^{|s-t|} \left(\sum_{r=0}^{H}\lambda^r A_r[t]\right).
\label{eq:stad-specialized-G0}
\end{equation}

Using the definition of $A_r[t]$, its inner sum is
\begin{align*}
\sum_{r=0}^{H}\lambda^r A_r[t] \quad = \quad
\sum_{r=0}^{H} \sum_{\substack{v\in V(S)\\ 
\pi(v)=p_t,\ d(v)=r}} \lambda^r x_v
\quad = \quad \sum_{\substack{v\in V(S)\\\pi(v)=p_t}} \lambda^{d(v)}x_v.
\end{align*}
Therefore, these weighted sums can be accumulated directly from the vertices. Likewise, the definition of $b_\alpha[r]$ gives $E_\alpha[0] = \sum_{r=1}^{H}\lambda^r b_\alpha[r] = \sum_{v\in V(B_\alpha)} \lambda^{d(v)}x_v$.
Only one scalar correction value is needed per component.

\paragraph{Two sweeps for computing $G_0$.}
To compute $G_0[\cdot]$, first initialize the contribution of sources attached to each backbone vertex: $G_0[s]\gets \sum_{r=0}^{H}\lambda^r A_r[s],$ for $s=1,\ldots,m$.
At this stage, the entry at position $s$ contains only the contribution of sources whose projection is $p_s$, corresponding to the $t=s$ term in Eq.~\ref{eq:stad-specialized-G0}. The following two sweeps update this row in place to incorporate sources attached to the other backbone vertices. Once both sweeps finish, the row contains the required values $G_0[s]$.
\begin{enumerate}
    \item First perform the forward updates for all $s=2,\ldots,m$: $G_0[s]\gets G_0[s]+\lambda G_0[s-1]$. After this sweep, the stored entry at position $s$ equals $\sum_{r=0}^{H}\sum_{t=1}^{s} \lambda^{r+s-t}A_r[t]$. Indeed, this holds at $s=1$, and each update gives
    \begin{align*}
    \sum_{r=0}^{H}\lambda^r A_r[s] + \lambda \sum_{r=0}^{H}\sum_{t=1}^{s-1} \lambda^{r+s-1-t}A_r[t] = \sum_{r=0}^{H}\sum_{t=1}^{s} \lambda^{r+s-t}A_r[t].
    \end{align*}
    In particular, the entry at $s=m$ already equals the required value $G_0[m]$.

    \item Next perform the backward updates for all $s=m-1,\ldots,1$ : $G_0[s]\gets (1-\lambda^2)G_0[s]+\lambda G_0[s+1]$. To verify this update, suppose that the entry at $s+1$ already has its final value. Multiplying that value by $\lambda$ gives
    \begin{align*}
        &\lambda
        \sum_{r=0}^{H}\sum_{t=1}^{m}
        \lambda^{r+|s+1-t|}A_r[t] =
        \lambda^2
        \sum_{r=0}^{H}\sum_{t=1}^{s}
        \lambda^{r+s-t}A_r[t]
        +
        \sum_{r=0}^{H}\sum_{t=s+1}^{m}
        \lambda^{r+t-s}A_r[t].
    \end{align*}
    The first sum is $\lambda^2$ times the partial sum currently stored at position $s$. Adding $(1-\lambda^2)$ times that partial sum therefore leaves
    \[
    \sum_{r=0}^{H}\sum_{t=1}^{s}
    \lambda^{r+s-t}A_r[t]
    +
    \sum_{r=0}^{H}\sum_{t=s+1}^{m}
    \lambda^{r+t-s}A_r[t]
    =
    G_0[s].
    \]
    Backward induction proves that both sweeps together compute the complete row $G_0[\cdot]$.
\end{enumerate}

\paragraph{Recovering the cross-component contributions.}
Substituting Eq.~\ref{eq:stad-specialized-factorization} into Lemma~\ref{lem:stad-cross} gives
\[
\operatorname{Cross}_P(u)=
\begin{cases}
G_0[s],
&u=p_s,\\[3pt]
\lambda^{d(u)}
\bigl(G_0[a(\alpha)]-E_\alpha[0]\bigr),
&u\in V(B_\alpha).
\end{cases}
\]
Each query therefore requires constant work once $G_0[\cdot]$ and the values $E_\alpha[0]$ have been computed.

\paragraph{Computational cost.}
Decomposition and depth computation take $O(|V(S)|+b_S)$ time. For one exponential term, generate $\lambda^0,\ldots,\lambda^H$ successively and accumulate the weighted vertex contributions. This takes $O(|V(S)|)$ time because $H+1\leq |V(S)|$. Both backbone sweeps together take $O(m)$ time, and assigning the cross-component contributions takes $O(|V(S)|)$ time.

The dense tables $A_r[t]$ and $b_\alpha[r]$ are used only to establish the identities above; the implementation accumulates the required weighted sums directly from vertices. Thus, one exponential term requires $O(|V(S)|)$ arithmetic work after decomposition.

For the supplied exponential sum form $f(d)=\sum_{\ell=1}^{J}c_\ell\lambda_\ell^d$, perform these computations for each $\lambda_\ell^d$, multiply its cross-component contributions by $c_\ell$, and sum over $\ell$. By linearity and reuse of the decomposition, the total cost is $O(J|V(S)|+b_S)$.
\end{proof}

\begin{corollary}[Worst-Case Runtime of \STADTFI(Specialized Kernels)]
\label{cor:stad-specialized-runtime}
Let $T$ be an unweighted tree with $n$ vertices, and let the kernel be $f(d)=\sum_{\ell=1}^{J}c_\ell\lambda_\ell^d$, where  $J\geq1$, $c_\ell\in\mathbb R$ and $\lambda_\ell>0$. Under constant-time arithmetic, \STADTFI\ (SpclK-Adaptive), using the selection rule and the recurrences of Lemma~\ref{lem:stad-specialized-cross}, computes the tree-field integral in $O\!\left(Jn\log(n)\right)$ time.
\end{corollary}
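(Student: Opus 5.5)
The plan is to replace the FFT-based interaction table with the recurrences of Lemma~\ref{lem:stad-specialized-cross} and then bound the total vertex volume of the adaptive recursion. Exactness needs no new argument. Lemma~\ref{lem:stad-cross} gives the cross-component decomposition, which holds for any kernel. Lemma~\ref{lem:stad-specialized-cross} shows that the sweeps compute $G_0[\cdot]$ and $E_\alpha[0]$ exactly, and the factorization $G_q[s]=\lambda^qG_0[s]$, $E_\alpha[q]=\lambda^qE_\alpha[0]$ recovers every required entry. Linearity over the $J$ exponential terms and the within/cross split of Eq.~\ref{eq:stad-split} then give the integral, exactly as in Theorem~\ref{thm:stad-decomposition-cost}.

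For the running time, I would bound the nonrecursive cost at each call $S$ by $O(J|V(S)|+b_S)$. This cost has three parts.
\begin{enumerate}[noitemsep, topsep=0pt]
\item Backbone selection costs $O(|V(S)|+b_S)$ by Lemma~\ref{lem:stad-selection}. Only the scores are evaluated here; no tables are built.
\item The cross interactions cost $O(J|V(S)|+b_S)$ by Lemma~\ref{lem:stad-specialized-cross}.
\item Adding the recursive outputs costs $O(|V(S)|)$.
\end{enumerate}
The important point is that no term of the form $M_S\log M_S$ or $H_\alpha\log H_\alpha$ appears, because no dense table is materialized. Summing over $\mathcal R$ gives $O\bigl(J\sum_S|V(S)|+\sum_S b_S\bigr)$. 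By Eq.~\ref{eq:stad-boundary-volume}, $\sum_S b_S\le\sum_S|V(S)|$, so the total is $O\bigl(J\sum_S|V(S)|\bigr)$.

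It remains to show $\sum_{S\in\mathcal R}|V(S)|=O(n\log n)$. The selection rule is purely structural and independent of the kernel, so the recursion tree $\mathcal R$ is the same one produced by the adaptive rule in Theorem~\ref{thm:stad-adaptive-geometry}. Its proof already establishes Eq.~\ref{eq:stad-adaptive-volume}, namely $\sum_S|V(S)|\le n\log_2(n+1)$. That argument used the potential $\varphi(t)=t\log(t+1)$ and the per-call decrease $D_S\ge|V(S)|\log2$, which relies only on the score comparison of Eq.~\ref{eq:stad-score}. Substituting this bound yields $O(Jn\log n)$.

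The main point to check is that the volume bound transfers unchanged. The selection still uses the FFT-motivated score Eq.~\ref{eq:stad-score}, even though the specialized implementation never pays the $M\log M$ cost, and the potential argument must not depend on the FFT cost actually being incurred. Re-reading Case~1 and Case~2 of that proof confirms this: both cases use only the inequality $\mathcal C(S,P_S)\le\mathcal C(S,(c_S))$ together with combinatorial facts about the centroid and the diameter. So the bound holds verbatim.

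One alternative route looks tempting but I would avoid it. Lemma~\ref{lem:stad-call-charge} combined with Eq.~\ref{eq:boundons} gives $|V(S)|=O(r_S\log|V(S)|)$, and plugging that into the charging argument would only yield $O(Jn\log^2n)$. The potential argument is therefore the step I would rely on.
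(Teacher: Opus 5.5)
Your proposal is correct and follows the paper's proof essentially step for step. You use the per-call cost $O(J|V(S)|+b_S)$ from Lemmas~\ref{lem:stad-specialized-cross} and~\ref{lem:stad-selection}, absorb $\sum_S b_S$ via Eq.~\ref{eq:stad-boundary-volume}, and apply the volume bound $\sum_{S\in\mathcal R}|V(S)|\le n\log_2(n+1)$ from the potential argument in the proof of Theorem~\ref{thm:stad-adaptive-geometry}. Your explicit check that this argument relies only on the score comparison $\mathcal C(S,P_S)\le\mathcal C(S,(c_S))$, and not on the FFT cost actually being paid, is a useful clarification that the paper leaves implicit.
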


\begin{proof}[Proof of Corollary~\ref{cor:stad-specialized-runtime}]
By Lemmas~\ref{lem:stad-specialized-cross}
and~\ref{lem:stad-selection}, the nonrecursive cost at
call $S$, including selection and accumulation, is
$O(J|V(S)|+b_S)$, since $J\geq1$.
Summing over calls and applying
Eq.~\ref{eq:stad-boundary-volume} gives
\[
 T_{\mathrm{specialized}}
 =
 O\!\left(
 J\sum_{S\in\mathcal R}|V(S)|
 +\sum_{S\in\mathcal R}b_S
 \right)
 =
 O\!\left(J\sum_{S\in\mathcal R}|V(S)|\right).
\]

From, Eq.~\ref{eq:stad-specialized-volume}, we know that:  $\sum_{S\in\mathcal R}|V(S)|
 \leq n\log_2(n+1).$

Substituting into the specialized cost bound gives $ T_{\mathrm{specialized}} =  O\!\left(J\sum_{S\in\mathcal R}|V(S)|\right) = O\!\left(Jn\log(2n)\right)$.
\end{proof}




\section{Additional Structural Tree Cases}
\label{app:stad-structural-cases}

We specialize the preceding analysis to trees for which the backbone geometry yields explicit runtime bounds. Throughout, kernel values can be evaluated in constant time, and the stated backbones are supplied or selected in linear time.

\subsection{Complexity Analysis for Paths}
\label{app:paths}
\begin{proposition}[Path]
\label{prop:stad-path}
Let $S$ be a path with $n$ vertices. Choosing the entire path as its backbone allows Algo.~\ref{alg:stad-solve} to compute all field values in $O(n\log(2n))$ time using either the one-dimensional or two-dimensional FFT implementation.
\end{proposition}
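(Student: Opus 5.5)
The plan is to observe that choosing the whole path as backbone makes the recursion trivial, and then read off the cost of the single call from Theorem~\ref{thm:stad-decomposition-cost}, or directly from the per-call analysis of Algo.~\ref{alg:cross-subroutine}.

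\textbf{Structure of the single call.} Take $P=S=(p_1,\ldots,p_n)$. Deleting $V(P)$ leaves no vertices, so $k=0$ and there are no recursive calls. Hence $H=H_P(S)=0$, $m=n$, and $M=(H+1)m=n$. The set $\mathcal R$ of calls is just $\{S\}$. When $n=1$ the call is the singleton base case, so the claim is immediate. For $n\geq2$, selecting the backbone (two BFS passes to find the endpoints, or simply taking the path as supplied) costs $O(n+b_S)$. Exactness follows from Lemma~\ref{lem:stad-cross}: every vertex lies on $P$, so $w_u=\operatorname{Cross}_P(u)=G_0[s]$ with no correction term.

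\textbf{Cost of the interaction table.} For the two-dimensional implementation, apply Lemma~\ref{lem:stad-2d-fft} with $M=n$, which gives $O(n\log(2n))$. For the one-dimensional implementation, apply Proposition~\ref{prop:stad-1d-fft} with $H=0$. There is then a single depth pair $(q,r)=(0,0)$, so the bound $O((H+1)^2 m\log(2m))$ becomes $O(n\log(2n))$. The remaining costs are:
\begin{itemize}
\item the correction sum in Theorem~\ref{thm:stad-decomposition-cost}, which is empty because $k=0$;
\item aggregation into $A_0[t]=x_{p_t}$ and assignment of the outputs, which are $O(n)$.
\end{itemize}

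\textbf{Boundary term.} The term $b_S$ is the only point needing care. When $S$ is the whole tree, $b_S=0$. If $S$ arises as a subproblem inside a larger tree, $b_S$ is absorbed exactly as in Eq.~\ref{eq:stad-boundary-absorption}, or the statement is simply read for a standalone path. Summing the three contributions gives $O(n)+O(n\log(2n))=O(n\log(2n))$ for either FFT variant.

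I expect no real obstacle here. The only subtlety is noting that the one-dimensional version does not pay the $(H+1)^2$ factor, because $H=0$ on a full-path backbone.
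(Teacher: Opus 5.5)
Your proposal is correct and follows the paper's proof: take $P=S$ so that $m=n$, $H=0$ and $k=0$; the table collapses to the single row $G_0[s]=\sum_{t}f(|s-t|)x_{p_t}$, which one convolution computes via Proposition~\ref{prop:stad-1d-fft} or Lemma~\ref{lem:stad-2d-fft}; and Lemma~\ref{lem:stad-cross} gives $w_{p_s}=G_0[s]$ with no corrections or recursion. Your discussion of $b_S$ goes beyond the paper, which implicitly treats $S$ as the whole tree so that $b_S=0$, and that reading is the cleaner justification, since Eq.~\ref{eq:stad-boundary-absorption} bounds $b_S$ only in a sum over all calls, not call by call.
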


\begin{proof}
Choose $P=S$. Then $m=n$, $H=0$, and there are no off-backbone components. The only aggregate row is $A_0[t]=x_{p_t}$, for  $1\leq t\leq n$, and the interaction table reduces to $G_0[s]= \sum_{t=1}^{n}f(|s-t|)A_0[t]$. Proposition~\ref{prop:stad-1d-fft}, with $H=0$ and $m=n$, evaluates this row using one convolution in $O(n\log(2n))$ time. Lemma~\ref{lem:stad-2d-fft} gives the same bound.

By Lemma~\ref{lem:stad-cross}, every backbone vertex satisfies $w_{p_s}^{(S)}=G_0[s]$. No corrections or recursive calls are required. The remaining initialization and output assignment take $O(n)$ time.
\end{proof}

\subsection{Complexity Analysis for Caterpillar paths.}
\label{app:caterpillar}

\begin{proposition}[Caterpillars]
\label{prop:stad-caterpillar}
Let $T$ be a caterpillar with $n$ vertices, and let $P=(p_1,\ldots,p_m)$ be a backbone such that every vertex outside $P$ is a leaf attached to $P$. Using this backbone, Algo.~\ref{alg:stad-solve} computes all field values in $O\!\left(n+m\log(2m)\right) \subseteq O(n\log(2n))$ time with either FFT implementation.
\end{proposition}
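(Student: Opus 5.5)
The plan is to mirror the proof of Proposition~\ref{prop:stad-path}: run a single call of Algo.~\ref{alg:stad-solve} with the supplied backbone $P$, and observe that every recursive call it spawns is a singleton base case. Since every vertex outside $P$ is a leaf adjacent to $P$, each off-backbone component $B_\alpha$ is a single vertex with depth $1$. Hence $H=H_P(T)\leq 1$ (with $H=0$ only when $T=P$), $H_\alpha=1$ for every $\alpha$, and $k=n-m$.

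I would then bound each term of the nonrecursive cost \eqref{eq:stad-local-2d} at the root call.
\begin{itemize}
\item Decomposition by Algo.~\ref{alg:decompose-components} takes $O(n+b_T)=O(n)$ time, since $b_T=0$ at the root.
\item The aggregate table has $(H+1)m\leq 2m$ entries. It is initialized in $O(m)$ time and filled in $O(n)$ time.
\item By Lemma~\ref{lem:stad-2d-fft}, the table $G$ costs $O(2m\log(4m))=O(m\log(2m))$.
\item Proposition~\ref{prop:stad-1d-fft} gives the same order, $O((H+1)^2m\log m)=O(m\log(2m))$, for the 1D implementation.
\item Each correction $E_\alpha$ has $H_\alpha=1$, so it reduces to the two values $E_\alpha[0]=f(1)x_v$ and $E_\alpha[1]=f(2)x_v$. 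This is $O(1)$ per leaf and $O(n)$ in total. The generic bound $H_\alpha\log(H_\alpha+1)=\log 2$ per component gives the same count.
\end{itemize}

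Lemma~\ref{lem:stad-cross} then assigns $\operatorname{Cross}$ values in $O(n)$ time. Each leaf recursion is a singleton base case costing $O(1)$, which adds $O(n-m)$ over all leaves. Summing these gives $O(n+m\log(2m))$, and $m\leq n$ yields the containment in $O(n\log(2n))$.

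To be safe, I would apply Theorem~\ref{thm:stad-decomposition-cost} directly to get the same sum. Here $\mathcal R$ consists of the root together with $n-m$ singleton calls. Each singleton has $M_S=1$ and contributes $O(1)$, with the convention $\log(2M_S)=\log 2$.

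The only step needing care is the exactness for leaves. The within-component term of a leaf $u$ is just $f(0)x_u$, which the singleton recursion supplies. Its cross term is $G_1[a(\alpha)]-E_\alpha[1]$, and the correction $E_\alpha[1]=f(2)x_u$ removes exactly the spurious self-route of length $2$ through the anchor.

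I expect no genuine obstacle. The one subtle point is making sure the bound does not pick up the boundary term $b_S$ or a logarithm on the leaf count. This is handled by noting that singleton calls do constant work, so their $b_S\leq 1$ parent-edge scan is $O(1)$ each.
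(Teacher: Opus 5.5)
Your proposal is correct and matches the paper's proof essentially step for step. Both arguments use a single root call with $H\le 1$ and compute the interaction table in $O(m\log(2m))$ time, either by four 1D convolutions or by the $2\times m$ joint 2D FFT. Both then spend $O(1)$ per singleton leaf on the correction $E_\alpha[1]=f(2)x_u$ and its base case $f(0)x_u$, which gives $O(n+m\log(2m))$. Your extra cross-check via Theorem~\ref{thm:stad-decomposition-cost} is consistent with this but not needed.
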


\begin{proof}
If every vertex lies on $P$, the result follows from Proposition~\ref{prop:stad-path}. Otherwise, the maximum off-backbone depth is $H=1$. The aggregate table has two rows:
\[
A_0[t]=x_{p_t},
\qquad
A_1[t]
=
\sum_{\substack{v\in V(T)\setminus V(P)\\
                 \pi(v)=p_t}}x_v,
\qquad 1\leq t\leq m.
\]
Any number of leaves may share the same anchor; their field values are combined in $A_1[t]$. Constructing these aggregates takes $O(n)$ time. 

There are only four depth pairs, $(q,r)\in\{0,1\}\times\{0,1\}$. Consequently, Proposition~\ref{prop:stad-1d-fft} computes the interaction table using four one-dimensional convolutions, with total cost $O\!\left(4m\log(2m)\right)= O(m\log(2m))$.Alternatively, Lemma~\ref{lem:stad-2d-fft} gives $O\!\left(2m\log(4m)\right)=O(m\log(2m)).$

It remains to account for component corrections and recursion. Each off-backbone component consists of a single leaf, say $B_\alpha=\{u\}$. Its depth profile and correction satisfy
\[
b_\alpha[1]=x_u,
\qquad
E_\alpha[1]=f(2)x_u.
\]
Lemma~\ref{lem:stad-cross} therefore gives $\operatorname{Cross}_P(u)= G_1[a(\alpha)]-f(2)x_u$. The recursive base case contributes $f(0)x_u$, so
the complete field value is
\[
w_u
=
G_1[a(\alpha)]
+
\bigl(f(0)-f(2)\bigr)x_u.
\]
For a backbone vertex, the same lemma gives $w_{p_s}=G_0[s]$ directly.

There are $n-m$ leaf components. Their corrections, base-case evaluations, and output assignments together take $O(n)$ time. Thus, the total running time is $O\!\left(n+m\log(2m)\right)$.Since $m\leq n$, this is $O(n\log(2n))$.
\end{proof}

\begin{proposition}[A spine with equally long pendant paths]
\label{prop:stad-long-pendant-paths}
Let $\ell\geq1$. Construct $T$ from a spine $P=(p_1,\ldots,p_\ell)$ by attaching to each $p_t$ a path of $\ell$ edges, introducing $\ell$ new vertices per attachment. Then $n=|V(T)|=\ell+\ell^2$. Choose $P$ as the initial backbone and choose each entire remaining path as the backbone of its recursive subproblem. With the joint two-dimensional FFT implementation, Algo.~\ref{alg:stad-solve} computes all field values in $O(\ell^2\log(2\ell))=O(n\log(2n))$.
\end{proposition}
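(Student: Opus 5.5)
The plan is to plug the specified decomposition into Theorem~\ref{thm:stad-decomposition-cost} and bound its three sums directly. The recursion has exactly two levels. At the root call $S=T$, the backbone is the spine $P$, so $m=\ell$. Each off-backbone component $B_t$ is the pendant path attached at $p_t$: it has $\ell$ vertices and its vertices have depths $1,\ldots,\ell$. Hence $H_{T,t}=\ell$ for every $t$, $H_T=\ell$, and $k_T=\ell$. Each child call $B_t$ takes its entire path as backbone, which gives $m_{B_t}=\ell$, $H_{B_t}=0$, no components, and no further recursion.

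First I would count vertices. We have $n=\ell+\ell\cdot\ell=\ell+\ell^2$. Moreover,
\[
\sum_{S\in\mathcal R}|V(S)|=n+\ell\cdot\ell=O(\ell^2),
\]
since the root contributes $n$ and each of the $\ell$ children contributes $\ell$.

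Next I would bound the table terms. At the root, $M_T=(\ell+1)\ell=\Theta(\ell^2)$, so $M_T\log(2M_T)=O(\ell^2\log(2\ell))$. Here I use $\log(2(\ell+1)\ell)\le\log(4\ell^2)=O(\log(2\ell))$. This is exactly the step where the joint 2D FFT matters: with the 1D implementation the cost would be $(H+1)^2m\log m=\Theta(\ell^3\log\ell)$, so I will point to Lemma~\ref{lem:stad-2d-fft} explicitly. Each child has $M_{B_t}=\ell$, contributing $O(\ell\log(2\ell))$, so all $\ell$ children together contribute $O(\ell^2\log(2\ell))$. The correction terms occur only at the root and total $\sum_{t=1}^{\ell}\ell\log(\ell+1)=O(\ell^2\log(2\ell))$. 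Backbone selection is supplied, and its $O(|V(S)|+b_S)$ cost is absorbed by the first sum, as in the theorem.

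Summing gives $O(\ell^2\log(2\ell))$. Since $n=\Theta(\ell^2)$, we have $\log(2\ell)=O(\log(2n))$, which yields the bound $O(n\log(2n))$.

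The only points needing care are the component depths and the corrections. I would verify $H_{T,t}=\ell$ via the definition in Eq.~\ref{eq:stad-depth}: the pendant vertex at distance $j$ from $p_t$ has depth $j$. I would also verify that the depths within each child are measured relative to its own backbone, so that $H_{B_t}=0$. I do not expect a real obstacle here; the main thing is to state clearly that the dense table of size $(\ell+1)\ell$ is linear in $n$, unlike the spider case of Proposition~\ref{prop:stad-spider-comparison}.
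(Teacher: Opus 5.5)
Your proposal is correct and follows essentially the same route as the paper: the root parameters $m=H=k=\ell$, the dense table of $(\ell+1)\ell=n$ entries evaluated via Lemma~\ref{lem:stad-2d-fft}, $\ell$ component corrections costing $O(\ell\log(2\ell))$ each, and $\ell$ path children costing $O(\ell\log(2\ell))$ each. The only cosmetic difference is that you read these costs off the three sums of Theorem~\ref{thm:stad-decomposition-cost}, whereas the paper tallies the root call directly and invokes Proposition~\ref{prop:stad-path} for the children.
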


\begin{proof}
At the initial call, the backbone has $\ell$ vertices, the maximum off-backbone depth is $\ell$, and there are $\ell$ components: $m=\ell$, $H=\ell$, and $k=\ell$. Each component contains $\ell$ vertices and is a path with $\ell-1$ internal edges.

We first bound the work at the initial call. Decomposition, depth computation, and accumulation of vertex values take $O(n)$ time. The aggregate table has $(H+1)m=(\ell+1)\ell=n$ entries, so its initialization also takes $O(n)$ time.

By Lemma~\ref{lem:stad-2d-fft}, the interaction table is computed in
\begin{align*}
O\!\left(
(H+1)m\log\bigl(2(H+1)m\bigr)
\right)
&=
O\!\left(
\ell(\ell+1)\log\bigl(2\ell(\ell+1)\bigr)
\right)\\
&=
O(\ell^2\log(2\ell)).
\end{align*}

For each component, its depth profile has $\ell$ possible nonzero depth bins. Reversing the source-depth index expresses its correction
\[
E_\alpha[q]
=
\sum_{r=1}^{\ell}f(q+r)b_\alpha[r],
\qquad 1\leq q\leq\ell,
\]
as a one-dimensional convolution. As in the correction analysis leading to Eq.~\ref{eq:stad-local-2d}, all these values can be computed in $O(\ell\log(2\ell))$ time per component. There are $\ell$ components, so the total correction cost is
\[
\ell\cdot O(\ell\log(2\ell))
=
O(\ell^2\log(2\ell)).
\]
Profile construction and output assignment add only $O(n)$ time. Lemma~\ref{lem:stad-cross} then gives all exact cross-component contributions.

Next, each recursive subproblem is a path with $\ell$ vertices. By Proposition~\ref{prop:stad-path}, choosing that entire path as its backbone solves the subproblem in $O(\ell\log(2\ell))$ time, with no further recursion. The total recursive cost is therefore
\[
\ell\cdot O(\ell\log(2\ell))
=
O(\ell^2\log(2\ell)).
\]
Combining the cross-component and recursive contributions is exact by Eq.~\ref{eq:stad-split}. Adding the initial and recursive costs proves $T(n)=O(\ell^2\log(2\ell))$. Finally, $n=\ell(\ell+1)=\Theta(\ell^2)$, yielding $T(n)=O(n\log(2n))$.
\end{proof}

\paragraph{Comparison with separate one-dimensional FFTs.} For the backbone choices in Proposition~\ref{prop:stad-long-pendant-paths}, Proposition~\ref{prop:stad-1d-fft} instead bounds the initial interaction-table computation by
\[
O\!\left((\ell+1)^2\ell\log(2\ell)\right)
=
O(\ell^3\log(2\ell)).
\]
Together with the remaining work, this gives
an $O(n^{3/2}\log(2n))$ upper bound for the
separate one-dimensional FFT implementation.
The joint two-dimensional FFT yields the
$O(n\log(2n))$ bound above because the full
anchor-depth table contains only $n$ entries.

\subsection{Complexity Analysis for Balanced Trees}
\label{app:balancedtree}

\begin{proposition}[Balanced trees with bounded degree]
\label{prop:stad-balanced}
Let $T$ be an unweighted tree with $n$ vertices and maximum degree bounded by a constant $\Delta$. Suppose that every recursive call on a connected subtree $S$ chooses a diameter backbone $P_S$. Assume that there exist constants $c>0$ and $0<\rho<1$, independent of $S$ and $n$, such that $|V(P_S)|\leq c\log(2|V(S)|)$, and  $H_{P_S}(S)\leq c\log(2|V(S)|)$, and every component $B_\alpha$ of $S\setminus V(P_S)$ satisfies $|V(B_\alpha)|\leq \rho |V(S)|$. Then Algo.~\ref{alg:stad-solve} computes all field values exactly in exact arithmetic in $O(n\log(2n))$ time, using either separate one-dimensional FFTs or the joint two-dimensional FFT implementation.
\end{proposition}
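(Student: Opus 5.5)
The plan is to apply the decomposition-dependent bound of Theorem~\ref{thm:stad-decomposition-cost}, which is valid for any backbone choice selected in $O(|V(S)|+b_S)$ time. A diameter path is found by two BFS traversals, so this hypothesis holds. It then suffices to bound the three sums $\sum_S|V(S)|$, $\sum_S M_S\log(2M_S)$, and $\sum_S\sum_\alpha H_{S,\alpha}\log(H_{S,\alpha}+1)$ by $O(n\log(2n))$.

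\textbf{Recursion volume.} Since every child has at most $\rho|V(S)|$ vertices, the recursion depth is $L=O(\log_{1/\rho}n)=O(\log(2n))$. Calls at the same level have disjoint vertex sets, so Eq.~\ref{eq:stad-recursion-volume} (from the proof of Corollary~\ref{cor:stad-depth-runtime}) gives $\sum_S|V(S)|\leq nL=O(n\log(2n))$.

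\textbf{FFT term.} Using the hypotheses, $M_S=(H_S+1)m_S\leq (c\log(2|V(S)|)+1)\,c\log(2|V(S)|)=O(\log^2(2|V(S)|))$. Hence $M_S\log(2M_S)=O(\log^2(2s)\log\log(4s))$ with $s=|V(S)|$, which is $O(s)$ once $s$ is large, and $O(1)$ for small $s$. So the per-call cost is $O(|V(S)|)$, and summing again gives $O(nL)=O(n\log(2n))$.

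For the one-dimensional variant, Proposition~\ref{prop:stad-1d-fft} gives $(H_S+1)^2m_S\log(2m_S)=O(\log^3(2s)\log\log)$. This is also $O(s)$, so the same conclusion holds. This absorption step, i.e.\ showing that polylogarithmic per-call FFT costs are dominated by $|V(S)|$ uniformly, is the only slightly delicate point. I would handle it with a constant threshold $s_0$ below which everything is $O(1)$, and above which $\log^3(2s)\log\log(4s)\leq s$.

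\textbf{Correction term.} Each $H_{S,\alpha}\leq H_S=O(\log(2s))$, and $H_{S,\alpha}\leq |V(B_\alpha)|$ by Eq.~\ref{eq:stad-component-depth-volume}. Therefore
\[
\sum_\alpha H_{S,\alpha}\log(H_{S,\alpha}+1)\leq \log(H_S+1)\sum_\alpha|V(B_\alpha)|=O(s\log\log(4s)).
\]
This is not immediately $O(s)$. Instead I would use the bounded degree: $k_S\leq \Delta m_S=O(\log(2s))$, so the sum is at most $k_SH_S\log(H_S+1)=O(\log^2(2s)\log\log(4s))=O(s)$ by the same threshold argument. Summing over calls yields $O(nL)=O(n\log(2n))$.

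Combining the three bounds with Theorem~\ref{thm:stad-decomposition-cost}, which also certifies exactness, proves the claim for both FFT implementations.
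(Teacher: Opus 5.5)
Your proposal is correct and follows essentially the same route as the paper. In both arguments, the per-call FFT cost (one-dimensional or joint two-dimensional) and the correction cost, via $k_S\le\Delta m_S$, are polylogarithmic in $|V(S)|$ and hence absorbed into $O(|V(S)|)$; summing over the $O(\log(2n))$ recursion levels, whose calls have disjoint vertex sets, then gives $O(n\log(2n))$. The only difference is bookkeeping: you route the accounting through the three sums of Theorem~\ref{thm:stad-decomposition-cost}, which also absorbs the boundary scans $b_S$, whereas the paper bounds the work per level directly.
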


\begin{proof}
Correctness follows from
Lemma~\ref{lem:stad-cross} and the recursive splitting identity Eq.~\ref{eq:stad-split}. We establish the running-time bound by analyzing the work within each call and then summing over the recursion.

\paragraph{Work within one recursive call.}
Consider a nontrivial subtree $S$ with $s=|V(S)|$ vertices. Write $m=|V(P_S)|$, and $H=H_{P_S}(S)$, and let $k$ be the number of off-backbone components.

Each component connects to the backbone by exactly one edge. Since every backbone vertex has degree at most $\Delta$,
\[
k
\leq
\sum_{v\in V(P_S)}\deg_S(v)
\leq \Delta m.
\]
Thus, the assumptions imply $m=O(\log s),$$ H=O(\log s),$ and $k=O(\log s)$.

Diameter selection, decomposition, accumulation
of vertex values, and output assignment take
$O(s)$ time.
Initializing the aggregate table additionally
takes $O((H+1)m)$ time, which is absorbed by
the FFT bounds below.

For separate one-dimensional FFTs,
Proposition~\ref{prop:stad-1d-fft} gives
\begin{align*}
\operatorname{Cost}(G)
&=
O\!\left((H+1)^2m\log(2m)\right)\\
&=
O\!\left((\log s)^3\log\log s\right)
\end{align*}
for sufficiently large $s$.

For the joint two-dimensional FFT,
Lemma~\ref{lem:stad-2d-fft} instead gives
\begin{align*}
\operatorname{Cost}(G)
&=
O\!\left(
(H+1)m\log\bigl(2(H+1)m\bigr)
\right)\\
&=
O\!\left((\log s)^2\log\log s\right).
\end{align*}

The component corrections in
Eq.~\ref{eq:stad-E} are evaluated by
one-dimensional FFT convolutions.
Including profile initialization and accumulation,
their total cost is
\begin{align*}
O\!\left(s+kH\log(H+1)\right)
&=
O\!\left(
s+(\log s)^2\log\log s
\right).
\end{align*}
Here, accumulating the profiles over all
components takes $O(s)$ time because the
components are disjoint.

Consequently, the nonrecursive work satisfies
\[
\operatorname{Work}_{\mathrm{1D}}(S)
=
O\!\left(
s+(\log s)^3\log\log s
\right)
=
O(s),
\]
and
\[
\operatorname{Work}_{\mathrm{2D}}(S)
=
O\!\left(
s+(\log s)^2\log\log s
\right)
=
O(s).
\]
These equalities follow because both logarithmic
expressions are $o(s)$.
Increasing the constant covers the remaining
small subproblems and the singleton base case.

\paragraph{Work over the recursion.}
For either implementation, there is therefore
a constant $C>0$ such that
\[
\operatorname{Time}(S)
\leq
Cs+
\sum_{\alpha=1}^{k}
\operatorname{Time}(B_\alpha).
\]
The child subtrees satisfy
\[
\sum_{\alpha=1}^{k}|V(B_\alpha)|
=
s-m
\leq s,
\qquad
|V(B_\alpha)|\leq\rho s.
\]

At any fixed recursion depth $d$, the subproblems
have disjoint vertex sets.
Hence, their total nonrecursive work is at most
\[
C
\sum_{\substack{S\text{ at}\\\text{depth }d}}
|V(S)|
\leq Cn.
\]

Along any root-to-leaf chain, the subtree size
decreases by at least the fixed factor $\rho$
at each step.
Thus, every subproblem at depth $d$ has size
at most $\rho^d n$.
Counting the root as the first level, the number
of recursion levels satisfies
\[
L
\leq
1+
\left\lceil
\frac{\log n}{\log(1/\rho)}
\right\rceil
=
O(\log(2n)).
\]
Summing the $O(n)$ work per level over these
levels gives
\[
\operatorname{Time}(T)
=
O(nL)
=
O(n\log(2n))
\]
for both FFT implementations.
\end{proof}

\subsection{Complexity Analysis for Simple Spiders}
\label{sec:spidercase}

Let $T$ be a spider with center $c$ and $s\geq3$ arms whose lengths, measured in edges from the center, satisfy $h_1\geq h_2\geq\cdots\geq h_s\geq1$. Write $B_\alpha$ for the component corresponding to arm $\alpha$ after removing $c$. Then
\[
|V(B_\alpha)|=h_\alpha,
\qquad
n=1+\sum_{\alpha=1}^{s}h_\alpha.
\]
Spiders with at most two arms are paths and are covered by Proposition~\ref{prop:stad-path}. We analyze the joint two-dimensional FFT implementation. Component corrections use the common depth range of the current recursive call, as in Algo.~\ref{alg:cross-subroutine}.

\begin{lemma}[Diameter backbone]
\label{lem:stad-spider-diameter}
Choose a diameter backbone at the initial call and choose each entire remaining arm as the backbone of its recursive subproblem. Define $M=(h_3+1)(h_1+h_2+1)$. Then the total running time is $O\!\left( n+M\log(2M) +(s-2)h_3\log(2h_3) \right)$.
\end{lemma}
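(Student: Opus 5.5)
Since the lemma fixes the decomposition completely, the plan is to apply Theorem~\ref{thm:stad-decomposition-cost} and evaluate its three sums explicitly. The initial call on $T$ uses a diameter backbone, which in a spider runs through the center and along the two longest arms. It therefore has $m=h_1+h_2+1$ vertices, and removing it leaves the $s-2$ arms $B_3,\ldots,B_s$, each attached at $c$. Arm $B_\alpha$ has depth $H_\alpha=h_\alpha$ measured from the backbone, so the maximum off-backbone depth is $H=h_3$, and the table size is exactly $M=(h_3+1)(h_1+h_2+1)$. Each remaining arm is a path solved with itself as backbone: $m_{B_\alpha}=h_\alpha$, $H_{B_\alpha}=0$, no components, and no further recursion apart from the trivial case $h_\alpha=1$, a singleton base case. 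Hence $\mathcal R$ consists of the root together with the $s-2$ arm calls.

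Next I bound the three terms of Theorem~\ref{thm:stad-decomposition-cost} one by one.
\begin{enumerate}
\item \emph{Vertex volume.} We have $\sum_{S}|V(S)|=n+\sum_{\alpha\geq3}h_\alpha\leq 2n$.
\item \emph{Table terms.} The root contributes $M\log(2M)$. Arm $\alpha$ contributes $h_\alpha\log(2h_\alpha)$, which is Proposition~\ref{prop:stad-path}. Since $h_\alpha\leq h_3$ for $\alpha\geq3$, these sum to at most $(s-2)h_3\log(2h_3)$.
\item \emph{Corrections.} These arise only at the root, one for each of the $s-2$ arms, and each costs $H_\alpha\log(H_\alpha+1)\leq h_3\log(h_3+1)$. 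Their total is again $O((s-2)h_3\log(2h_3))$.
\end{enumerate}

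The section preamble says corrections use the common depth range $H=h_3$ rather than the component-local $H_\alpha$. The bound is the same under either reading, because $H_\alpha\leq h_3$. Boundary scanning is already absorbed by Theorem~\ref{thm:stad-decomposition-cost}, and the backbone choices are supplied, or found by BFS, in linear time. Adding the three terms gives $O(n+M\log(2M)+(s-2)h_3\log(2h_3))$.

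The main obstacle is only bookkeeping: verifying that the diameter passes through the two longest arms (any longest path in a spider joins the ends of two arms through $c$), and handling ties. If $h_2=h_3$, any choice of two longest arms gives the same $m$ and the same leftover depth profile, so $M$ is unchanged. I also need to keep the factor $\log(2\cdot)$ rather than $\log(\cdot+1)$, so that the arm terms with $h_\alpha=1$ remain valid as $O(1)$.
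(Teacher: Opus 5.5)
Your proposal is correct and follows the same route as the paper: take the diameter through the two longest arms, so that $m=h_1+h_2+1$, $k=s-2$ and $H=h_3$, charge $M\log(2M)$ for the table and $O((s-2)h_3\log(2h_3))$ for the corrections, and absorb the arm recursions $\sum_{\alpha\ge 3}h_\alpha\log(2h_\alpha)$ into the correction term using $h_\alpha\le h_3$. The only cosmetic difference is that you sum the three terms of Theorem~\ref{thm:stad-decomposition-cost} over $\mathcal R$, whereas the paper costs the initial call directly and invokes Proposition~\ref{prop:stad-path} for each arm.
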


\begin{proof}
A diameter joins the endpoints of two longest arms through the center. We may therefore choose the path containing $B_1$, $c$, and $B_2$. Its number of vertices is $m=h_1+h_2+1.$ Removing this path leaves precisely the components $B_3,\ldots,B_s$. Consequently, $k=s-2$, $H=h_3$, and $(H+1)m=M.$

By Lemma~\ref{lem:stad-2d-fft}, computing the backbone interaction table costs $O(M\log(2M)).$

The $s-2$ component corrections cost $O\!\left((s-2)h_3\log(2h_3)\right)$ in total. Including decomposition, accumulation, table initialization, and output assignment, the initial call therefore costs
\[
O\!\left(
n+M\log(2M)
+(s-2)h_3\log(2h_3)
\right).
\]
Table initialization is absorbed by the
$M\log(2M)$ term.

Lemma~\ref{lem:stad-cross} gives the exact
cross-component contributions.
For each remaining arm, Proposition~\ref{prop:stad-path}
gives the recursive cost
$O(h_\alpha\log(2h_\alpha))$.
Since $h_\alpha\leq h_3$ for $\alpha\geq3$,
\[
\sum_{\alpha=3}^{s}
h_\alpha\log(2h_\alpha)
\leq
(s-2)h_3\log(2h_3).
\]
The recursive work is therefore absorbed into the correction term. Eq.~\ref{eq:stad-split} then completes the exact computation.
\end{proof}

\begin{lemma}[Center backbone]
\label{lem:stad-spider-center}
Choose $P=(c)$ at the initial call and choose each entire arm as the backbone of its recursive subproblem. Writing $H=h_1$, the total running time is $O\!\left(n+sH\log(2H)\right)$.
\end{lemma}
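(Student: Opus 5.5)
To prove the center-backbone bound I will mirror the proof of Lemma~\ref{lem:stad-spider-diameter}, but with the singleton backbone $P=(c)$ at the root. The plan is to bound the root call via the local cost in Eq.~\ref{eq:stad-local-2d}, bound each arm's recursive call via Proposition~\ref{prop:stad-path}, and add the two.

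\emph{Root-call parameters.} With $P=(c)$ we have $m=1$. Removing $c$ leaves exactly the $s$ arm components $B_1,\ldots,B_s$. Each $B_\alpha$ is a path attached to $c$ at one endpoint, so its depth is its size: $H_\alpha=h_\alpha$. Consequently the global depth is $H=\max_\alpha h_\alpha=h_1$, and the table size is $M=(H+1)m=h_1+1$.

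\emph{Root-call cost.} Decomposition, accumulation, profile construction and output assignment all take $O(n)$ time. Here $b_T=0$, since the root call scans the whole tree. By Lemma~\ref{lem:stad-2d-fft}, the interaction table costs
\[
O\bigl((H+1)\log(2(H+1))\bigr)=O\bigl(H\log(2H)\bigr),
\]
using $H\geq1$. Each of the $s$ corrections in Eq.~\ref{eq:stad-E} is a one-dimensional FFT convolution. Even when charged at the common depth range $H$ of the call, as announced before the lemma, each costs $O(H\log(2H))$, so all corrections together cost $O(sH\log(2H))$. The table term is dominated by this correction total because $s\geq3$.

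\emph{Recursive cost and total.} Each child is the path $B_\alpha$ with $h_\alpha$ vertices. Choosing that entire path as its backbone, Proposition~\ref{prop:stad-path} solves it in $O(h_\alpha\log(2h_\alpha))$ time with no further recursion. Since $h_\alpha\leq H$, the recursive work sums to
\[
\sum_\alpha h_\alpha\log(2h_\alpha)\leq (n-1)\log(2H)\leq sH\log(2H).
\]
Exactness follows from Lemma~\ref{lem:stad-cross} together with Eq.~\ref{eq:stad-split}. Summing the root-call and recursive bounds gives $O(n+sH\log(2H))$.

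The only point needing care is the correction accounting. If the corrections were charged at the component-local depths $h_\alpha$ instead of the common range, the bound would tighten to $O(n\log(2H))$. The stated bound accommodates the common-range convention, so it holds under either.
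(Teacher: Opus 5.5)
Your proposal is correct and follows essentially the same route as the paper's proof. The root call with $P=(c)$ is bounded by a depth-only convolution of size $H+1$, plus $s$ corrections costing $O(H\log(2H))$ each, plus $O(n)$ bookkeeping; the arm recursions are then bounded via Proposition~\ref{prop:stad-path} using $\sum_\alpha h_\alpha\log(2h_\alpha)\le (n-1)\log(2H)\le sH\log(2H)$.
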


\begin{proof}
At the initial call, $m=1$, the maximum
off-backbone depth is $H$, and there are $s$
components.
The backbone interaction table reduces to
\[
G_q[1]
=
\sum_{r=0}^{H}f(q+r)A_r[1],
\qquad 0\leq q\leq H.
\]
There is only one backbone position, so the joint
convolution of Lemma~\ref{lem:stad-2d-fft}
reduces to a one-dimensional convolution in depth.
Reversing the source-depth index evaluates all
entries in
\[
O\!\left((H+1)\log(2(H+1))\right)
=
O(H\log(2H))
\]
time.

Each component correction is also a one-dimensional depth convolution and costs $O(H\log(2H))$ time. Over all $s$ arms, the corrections therefore cost $O(sH\log(2H))$. Together with decomposition, aggregation, and output assignment, the initial cross-interaction computation costs $O\!\left(n+sH\log(2H)\right)$. Lemma~\ref{lem:stad-cross} ensures that the resulting cross-component contributions are exact.

Each remaining component $B_\alpha$ is a path
with $h_\alpha$ vertices.
By Proposition~\ref{prop:stad-path}, choosing the
entire component as its backbone solves it in
$O(h_\alpha\log(2h_\alpha))$ time.
The total recursive work is bounded by
\begin{align*}
\sum_{\alpha=1}^{s}
h_\alpha\log(2h_\alpha)
&\leq
\left(\sum_{\alpha=1}^{s}h_\alpha\right)
\log(2H)\\
&=
(n-1)\log(2H)\\
&\leq
sH\log(2H).
\end{align*}
Thus, this work is absorbed into the stated bound.
Combining the recursive and cross-component
contributions is exact by
Eq.~\ref{eq:stad-split}.
\end{proof}

\section{Synthetic tree benchmarks: construction and additional results}
\label{app:synthetic-benchmarks}

\subsection{Experimental protocol}
\label{app:synthetic-protocol}

For each tree $T$ with $N$ vertices, we compute $y_i=\sum_{j=1}^{N}\frac{x_j}{1+d_T(i,j)}$ for $i=1,\ldots,N$, where $d_T$ is the hop distance and $x$ is a scalar standard-normal field generated with NumPy's \texttt{default\_rng(123)}. For a fixed $N$, the same field is used across all methods and tree seeds. All trees are unweighted during integration, including those whose topology is obtained from an MST of a randomly weighted graph.

We time five implementations: brute force (BF), the FTFI-style centroid baseline, \STADTFI(Diameter + 1D) (D1), \STADTFI(Diameter + 2D FFT) (D2), and \STADTFI(Adaptive + 2D FFT) (A2). Each timed call includes decomposition or backbone selection, kernel-array construction, convolution, and recursive integration.  

The random recursive, random-MST, and preferential-attachment generators use three tree seeds. For these families, the reported runtime is the arithmetic mean of the three per-seed timing minima. Approximate $95\%$ confidence intervals are $\bar t\pm 4.303\,s_t/\sqrt{3}$, where $s_t$ is the sample standard deviation across the three tree seeds. Thus, the bands summarize variation across seeded instances, not variation across individual timing repetitions. Deterministic families use one tree instance per size and have no estimated confidence interval. On log-scale plots, the lower confidence limit is bounded below by $0.05\bar t$ for display; numerical summaries retain the original interval limits.

Every speedup is a ratio of the runtimes used in the plots: $S_{B\to M}(N)=\frac{\bar t_B(N)}{\bar t_M(N)}.$ In particular, ratios for random families are ratios of mean times, not means of per-seed ratios. Values below one indicate that $M$ is slower than $B$. Small differences near one should be interpreted as observed timing differences.

\subsection{Tree construction and parameter ranges}
\label{app:synthetic-generators}

All generators return undirected adjacency lists. A spine or handle parameter $\ell$ counts vertices; an arm length $L$ counts edges from its attachment vertex and therefore contributes $L$ new vertices.\footnote{For the caterpillar with hanging paths, $H=m=\ell$ holds for the \emph{construction spine}. The selected diameter instead has $m=3\ell$ vertices and $H=\ell$ for the tested values of $\ell$.} The depth of a complete rooted tree counts edges from its root to a leaf. Table~\ref{tab:synthetic-generators} lists the deterministic constructions. For compactness, define
\[
    \mathcal N=\{50,100,200,400,800\},\qquad
    \mathcal L=\{25,50,100,200,400\}.
\]

\begin{table}[t]
\centering
\small
\setlength{\tabcolsep}{3pt}
\caption{Deterministic tree constructions and complete parameter sweeps.
All leaves and hanging paths are distinct.}
\label{tab:synthetic-generators}
\begin{tabular}{@{}>{\raggedright\arraybackslash}p{0.23\linewidth}>{\raggedright\arraybackslash}p{0.35\linewidth}>{\raggedright\arraybackslash}p{0.35\linewidth}@{}}
\toprule
Family & Construction and vertex count & Tested parameters \\
\midrule
Path & One chain, $N$ vertices. & $N\in\mathcal N$. \\
Balanced binary & Complete binary tree; $N=2^{d+1}-1$. & $d\in\{4,5,6,7,8,9\}$. \\
Complete 3-ary & Complete ternary tree; $N=(3^{d+1}-1)/2$. & $d\in\{3,4,5,6\}$. \\
Caterpillar & Three leaves per spine vertex, $N=4\ell$. & $\ell\in\mathcal L$. \\
Caterpillar + hanging paths & One $\ell$-edge path at every spine vertex, $N=\ell+\ell^2$. & $\ell\in\{8,12,16,24,32\}$. \\
Comb & Two leaves per spine vertex, $N=3\ell$. & $\ell\in\mathcal L$. \\
Broom & Handle with $\ell/2$ leaves at one endpoint, $N=3\ell/2$. & $\ell\in\{40,80,160,320,640\}$. \\
Double broom & Handle with $\ell/4$ leaves at each endpoint, $N=3\ell/2$. & $\ell\in\{40,80,160,320\}$. \\
Uniform spider & One hub, $k$ arms of length $L$, $N=1+kL$. & $(k,L)\in\{(8,6),(12,8),$ $(16,12),(20,20),(30,25)\}$. \\
Short-arm spider & One hub and $k$ two-edge arms, $N=1+2k$. & $k\in\mathcal L$, $L=2$. \\
Long-arm spider & Eight arms, $N=1+8L$. & $L\in\{10,40,160,300,500\}$. \\
Star & One hub and $N-1$ leaves. & $N\in\mathcal N$. \\
\bottomrule
\end{tabular}
\end{table}

The three random constructions also use $N\in\mathcal N$:
\begin{itemize}[noitemsep, topsep=0pt]
    \item \emph{Random recursive tree.} Start with vertex $0$ and, for each $i=1,\ldots,N-1$, connect $i$ to a uniformly sampled vertex in $\{0,\ldots,i-1\}$.
    \item \emph{Random-MST topology.} Give every edge of the complete graph an independent uniform $[0,1)$ weight and run Kruskal's algorithm. Retain the resulting adjacency list and discard the weights before integration.
    \item \emph{Preferential-attachment variant.} The implementation initializes a sampling list with vertex $0$. Each new vertex samples a parent from this list, then appends the parent and itself. Consequently, just before adding vertex $i$, an existing vertex $v$ is selected with probability $(\deg(v)+\mathbf{1}_{\{v=0\}})/(2i-1)$.
\end{itemize}


\subsection{Largest-size results}
\label{app:synthetic-largest}

Table~\ref{tab:synthetic-largest} reports all five methods at the largest tested size of each retained sweep. D2 has the lowest recorded runtime in 14 of the 15 rows; A2 is fastest on the long-arm spider.

\begin{table}[t]
\centering
\small
\setlength{\tabcolsep}{5pt}
\caption{Largest-size integration times (in milliseconds). BF denotes brute force; D1, D2, and A2 denote Diameter-1D, Diameter-2D, and Adaptive-2D. Random-family entries are means across three tree seeds. Bold indicates the lowest recorded mean in each row.}
\label{tab:synthetic-largest}
\begin{tabular}{@{}lrrrrrr@{}}
\toprule
Family & $N$ & BF & FTFI & D1 & D2 & A2 \\
\midrule
Path & 800 & 250.93 & 59.31 & 1.19 & \textbf{1.15} & 2.09 \\
Balanced binary & 1023 & 399.17 & 64.10 & 49.46 & \textbf{23.80} & 56.63 \\
Complete 3-ary & 1093 & 458.63 & 56.36 & 63.56 & \textbf{29.72} & 61.63 \\
Caterpillar & 1600 & 1,004.16 & 84.25 & 35.77 & \textbf{35.43} & 38.02 \\
Caterpillar + paths & 1056 & 435.47 & 64.77 & 62.78 & \textbf{6.68} & 16.67 \\
Comb & 1200 & 577.43 & 67.53 & 24.20 & \textbf{24.11} & 25.74 \\
Broom & 960 & 356.59 & 54.76 & 10.63 & \textbf{10.57} & 11.80 \\
Double broom & 480 & 91.45 & 26.70 & 5.39 & \textbf{5.32} & 5.83 \\
Uniform spider & 751 & 213.60 & 44.51 & 32.04 & \textbf{4.57} & 5.86 \\
Short-arm spider & 801 & 249.24 & 41.22 & 27.13 & \textbf{25.65} & 30.51 \\
Long-arm spider & 4001 & 6,361.91 & 259.52 & 70,288.89 & 382.98 & \textbf{20.33} \\
Star & 800 & 240.45 & 25.25 & 23.71 & \textbf{23.52} & 25.02 \\
Random recursive & 800 & 256.57 & 43.73 & 37.48 & \textbf{18.29} & 38.83 \\
Random-MST topology & 800 & 261.73 & 46.37 & 54.21 & \textbf{17.09} & 41.79 \\
Preferential attachment & 800 & 255.10 & 38.42 & 32.38 & \textbf{21.14} & 33.22 \\
\bottomrule
\end{tabular}
\end{table}

\paragraph{Separating backbone and convolution gains.}
On paths, the D2 speedup over FTFI grows from $30.12\times$ at $N=50$ to $51.42\times$ at $N=800$. Since $H=0$, both diameter variants reduce the global interaction to a one-dimensional convolution, and their largest-size runtimes differ by only $1.03\times$. For balanced binary trees, D2 remains $2.69$-$2.82\times$ faster than FTFI across the six tested depths. On caterpillars with long hanging paths, the D1-to-D2 speedup increases from $5.02\times$ at $N=72$ to $9.40\times$ at $N=1056$. At the latter size, the root has $(m,H)=(96,32)$: grouping the depth interactions into a 2D convolution reduces runtime from $62.78$ to $6.68$\,ms.

\paragraph{When adaptive selection pays off.}
For the eight-arm spider at $N=4001$, D1 takes $70.29$\,s, whereas D2 takes $382.98$\,ms, a $183.53\times$ improvement. Nevertheless, D2 remains $1.48\times$ slower than FTFI. A2 reduces runtime further to $20.33$\,ms. The logged root separator changes from a diameter with $(m,H)=(1001,500)$ to the hub with $(m,H)=(1,500)$, reducing the anchor-depth array from $501{,}501$ to $501$ entries. The remaining components are eight paths of 500 vertices. The $1001\times$ array-size reduction explains the opportunity for acceleration, while the measured complete-call speedup is $18.84\times$. Along this sweep, A2 is slower than D2 at $N=81$, but becomes $1.55\times$, $5.96\times$, $11.36\times$, and $18.84\times$ faster at $N=321,1281,2401,4001$, respectively.

The local cost proxy does not invariably improve the complete recursion. On the $N=1056$ caterpillar with hanging paths, A2 chooses a centroid with $(m,H)=(1,48)$ and takes $16.67$\,ms, compared with $6.68$\,ms for D2. Although the root array is smaller, the centroid leaves components of sizes $495$, $528$, and $32$; the diameter leaves 30 paths of 32 vertices. Thus, root-grid size alone does not determine runtime. On the $N=800$ path, both methods choose the full diameter, but A2 takes $2.09$\,ms versus $1.15$\,ms for D2, reflecting the cost of evaluating separator candidates without changing the decomposition.

\subsection{Results for the remaining tree structures}
\label{app:synthetic-additional}

\paragraph{Shallow branches and regular branching.}
At their largest sizes, D2 achieves $2.38\times$, $2.80\times$, $5.18\times$, and $5.02\times$ speedups over FTFI on caterpillars, combs, brooms, and double brooms, respectively. Their diameter backbones have $H=1$, and D1 and D2 runtimes differ by less than $1.4\%$ in these four instances. The gains therefore primarily reflect the backbone decomposition. For the complete 3-ary tree at $N=1093$, D2 takes $29.72$\,ms versus $56.36$\,ms for FTFI and $63.56$\,ms for D1, corresponding to $1.90\times$ and $2.14\times$ speedups. A2 takes $61.63$\,ms and is $9.34\%$ slower than FTFI in this case.

\begin{figure}[h]
    \centering
    \includegraphics[width=\linewidth]
        {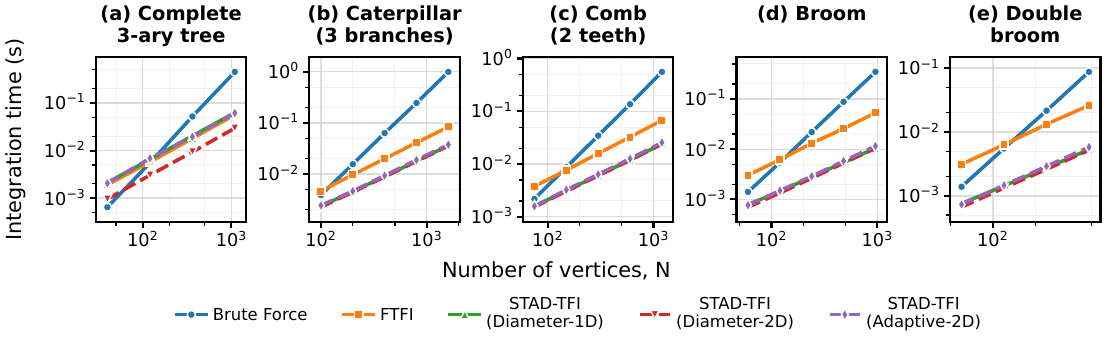}
    \vspace{-20pt}
    \caption{Additional runtime sweeps for complete 3-ary trees, caterpillars with three branches, combs with two  teeth, brooms, and double brooms.}
    \label{fig:synthetic-additional-structured}
\end{figure}




\begin{figure}[h]
    \centering
    \includegraphics[width=0.9\linewidth]
        {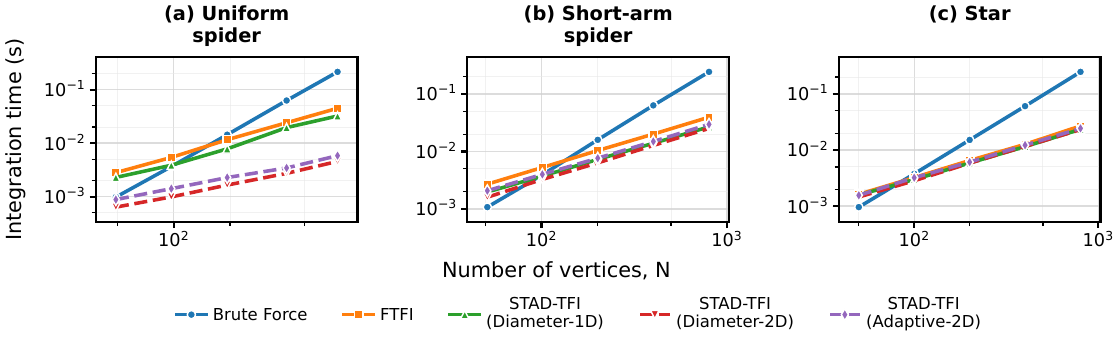}
    \vspace{-10pt}
    \caption{Additional runtime sweeps for uniform spiders,
    short-arm spiders, and stars.}
    \label{fig:synthetic-additional-shallow}
\end{figure}

\begin{figure}[htbp]
    \centering
    \includegraphics[width=0.9\linewidth]
        {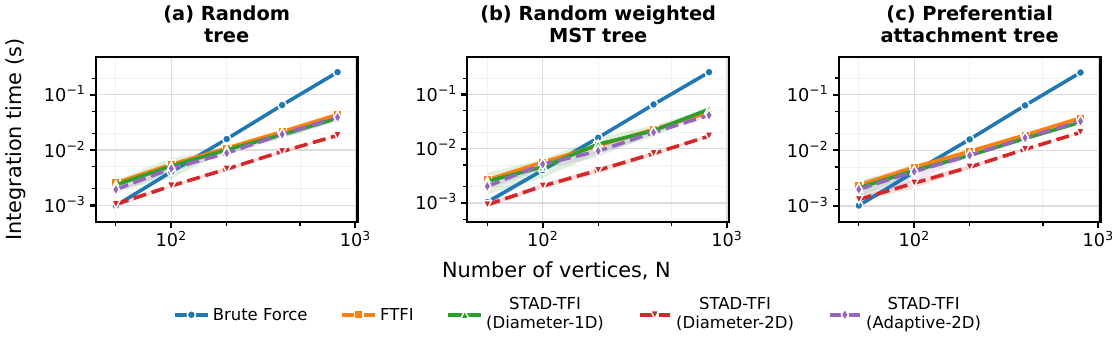}
    \vspace{-8pt}
    \caption{Runtime sweeps for random trees, random weighted MST trees, and preferential attachment trees. Curves show     means of per-seed timing minima across three tree seeds; bands show approximate $95\%$ confidence intervals.}
    \label{fig:synthetic-additional-random}
\end{figure}

\paragraph{Hubs with shorter arms.}
On the uniform spider with 30 arms of length 25 ($N=751$), D2 takes $4.57$\,ms versus $44.51$\,ms for FTFI and $32.04$\,ms for D1: speedups of $9.74\times$ and $7.02\times$. A2 takes $5.86$\,ms, or $28.21\%$ longer than D2, despite selecting the hub. On the short-arm spider with 400 two-edge arms ($N=801$), D2 takes $25.65$\,ms versus $41.22$\,ms for FTFI, a $1.61\times$ speedup; A2 takes $30.51$\,ms. For the star at $N=800$, D2 and A2 take $23.52$ and $25.02$\,ms versus $25.25$\,ms for FTFI, giving only $1.07\times$ and $1.01\times$ speedups. These cases show that a high-degree hub alone is insufficient to make adaptive selection advantageous; the long-arm spider is distinguished by the large diameter-depth array avoided by selecting its hub.

\paragraph{Random structures and variability.}
At $N=800$, D2 achieves $2.39\times$, $2.71\times$, and $1.82\times$ speedups over FTFI on random recursive trees, random-MST topologies, and the preferential-attachment variant, respectively. D2's mean runtimes and $95\%$ confidence-interval half-widths are $18.29\pm0.64$, $17.09\pm0.62$, and $21.14\pm0.38$\,ms. The corresponding FTFI values are $43.73\pm1.36$, $46.37\pm2.45$, and $38.42\pm0.96$\,ms. Relative to D1, D2 improves runtime by $2.05\times$, $3.17\times$, and $1.53\times$. A2 instead takes $38.83$, $41.79$, and $33.22$\,ms, retaining smaller $1.13\times$, $1.11\times$, and $1.16\times$ speedups over FTFI.


These results compare the supplied implementations in Python using NumPy and SciPy, for a scalar input field and one kernel. They do not measure GPU performance or the multichannel workload used by attention. In particular, D1 is not uniformly preferable to BF: on the largest long-arm spider it takes $70.29$\,s compared with $6.36$\,s for BF. The figures describe finite-size runtime behavior; asymptotic bounds must follow from the algorithmic analysis rather than fitted plot slopes.

\section{Geodesic Sinkhorn: experimental details and additional results}
\label{app:thingi-details}

\subsection{Optimal-Transport Formulation and Sinkhorn Computation}
\label{app:thingi-formulation}

Let $G=(V,E,w)$ be a connected weighted mesh graph with $n=|V|$ vertices. Each edge $(i,j)\in E$ is weighted by its Euclidean length. The graph-geodesic distance is
\[
d_G(i,j) = \min_{\pi:i\leadsto j} \sum_{(u,v)\in\pi}w_{uv},
\]
where the minimum is over paths connecting $i$ and $j$. Let $a,b\in\mathbb{R}_{\geq0}^{n}$ be source and target probability distributions on the vertices, satisfying $\mathbf{1}^{\top}a=\mathbf{1}^{\top}b=1$.

\paragraph{Entropically regularized optimal transport.}
A transport plan $\Pi\in\mathbb{R}_{\geq0}^{n\times n}$ specifies the mass transferred between vertices: $\Pi_{ij}$ is the amount transported from source vertex $i$ to target vertex $j$. The feasible plans satisfy the prescribed marginals, $U(a,b) = \left\{ \Pi\geq0: \Pi\mathbf{1}=a,\; \Pi^{\top}\mathbf{1}=b \right\}$. For a regularization parameter $\varepsilon>0$, we solve the entropically regularized optimal-transport problem~\citep{montesuma2024recent}
\begin{equation}
\min_{\Pi\in U(a,b)}
\left\{
\sum_{i,j}\Pi_{ij}d_G(i,j)
+
\varepsilon\sum_{i,j}
\Pi_{ij}\bigl(\log \Pi_{ij}-1\bigr)
\right\}.
\label{eq:thingi-ot}
\end{equation}
The first term measures transportation cost, while the second encourages a more diffuse allocation of mass; $\varepsilon$ controls the strength of this regularization.

\paragraph{Sinkhorn iterations.}
Define the kernel $K_G(i,j)= \exp\!\left(-{d_G(i,j)}/{\varepsilon}\right)$. The optimal regularized plan admits the representation $\Pi_G=\operatorname{diag}(u)K_G\operatorname{diag}(v)$. Sinkhorn's algorithm~\citep{cuturi2013sinkhorn} computes the scaling vectors through alternating updates
\begin{equation}
u^{(t+1)}
=
\frac{a}{K_Gv^{(t)}},
\qquad
v^{(t+1)}
=
\frac{b}{K_G^{\top}u^{(t+1)}},
\label{eq:thingi-sinkhorn}
\end{equation}
where division is elementwise and $v^{(0)}$ is initialized with positive entries. Each update rescales one marginal; at convergence, the resulting plan satisfies both marginal constraints.

Each iteration requires two kernel-vector products. Since the graph is undirected, $K_G^{\top}=K_G$, so both products use the same kernel operator. Nevertheless, this operator is dense: every pair of vertices has a positive kernel entry. Explicit storage therefore requires $O(n^2)$ memory, and the two products require $O(n^2)$ work per iteration. Accelerating these repeated products is the computational motivation for geodesic Sinkhorn~\citep{choromanski2026near}.

\paragraph{Spanning-tree approximation.}
Let $T\subseteq G$ be a spanning tree that retains the original edge weights, and let $d_T$ denote its weighted path distance. Replacing $d_G$ with $d_T$ in Eq.~\ref{eq:thingi-ot} gives a tree-based transport problem with kernel
\[
K_T(i,j) = \exp\!\left(-\frac{d_T(i,j)}{\varepsilon}\right).
\]
Its Sinkhorn updates have the same form as Eq.~\ref{eq:thingi-sinkhorn}, with $K_G$ replaced by $K_T$. Each required kernel-vector product is, therefore $(K_Tx)_i = \sum_{j=1}^{n} \exp(-d_T(i,j)/\varepsilon)x_j$, for $i=1,\ldots,n$. This is precisely a tree-field integration with the distance-dependent kernel $f_\varepsilon(d)=\exp(-d/\varepsilon)$. Because every tree path is also a path in $G$,
\[
d_T(i,j)\geq d_G(i,j),
\qquad
K_T(i,j)\leq K_G(i,j).
\]
The metric replacement changes the kernel and its
Sinkhorn scaling vectors, and can consequently change
the transport plan and transportation cost.
Our accuracy experiments assess this approximation
using the same source distribution, target distribution,
and regularization parameter for each tree construction.
The runtime experiments hold the tree fixed to compare
the computational cost of the field integrators
within the complete Sinkhorn solve.

\subsection{Meshes and probability distributions}
\label{app:thingi-data}

We use 28 Thingi10K meshes from the collection evaluated in the geodesic Sinkhorn experiments of GenusSink~\cite{choromanski2026near}. Each triangular mesh is converted into a weighted undirected graph by retaining every unique triangle edge and assigning it weight equal to the Euclidean distance between its endpoint coordinates. Fig.~\ref{fig:meshes} shows the evaluated meshes. The geometries are drawn from real mesh data; the source and target probability distributions (a and b respectively) are constructed as follows.

\begin{figure}[h]
    \centering
    \includegraphics[width=0.6\linewidth]{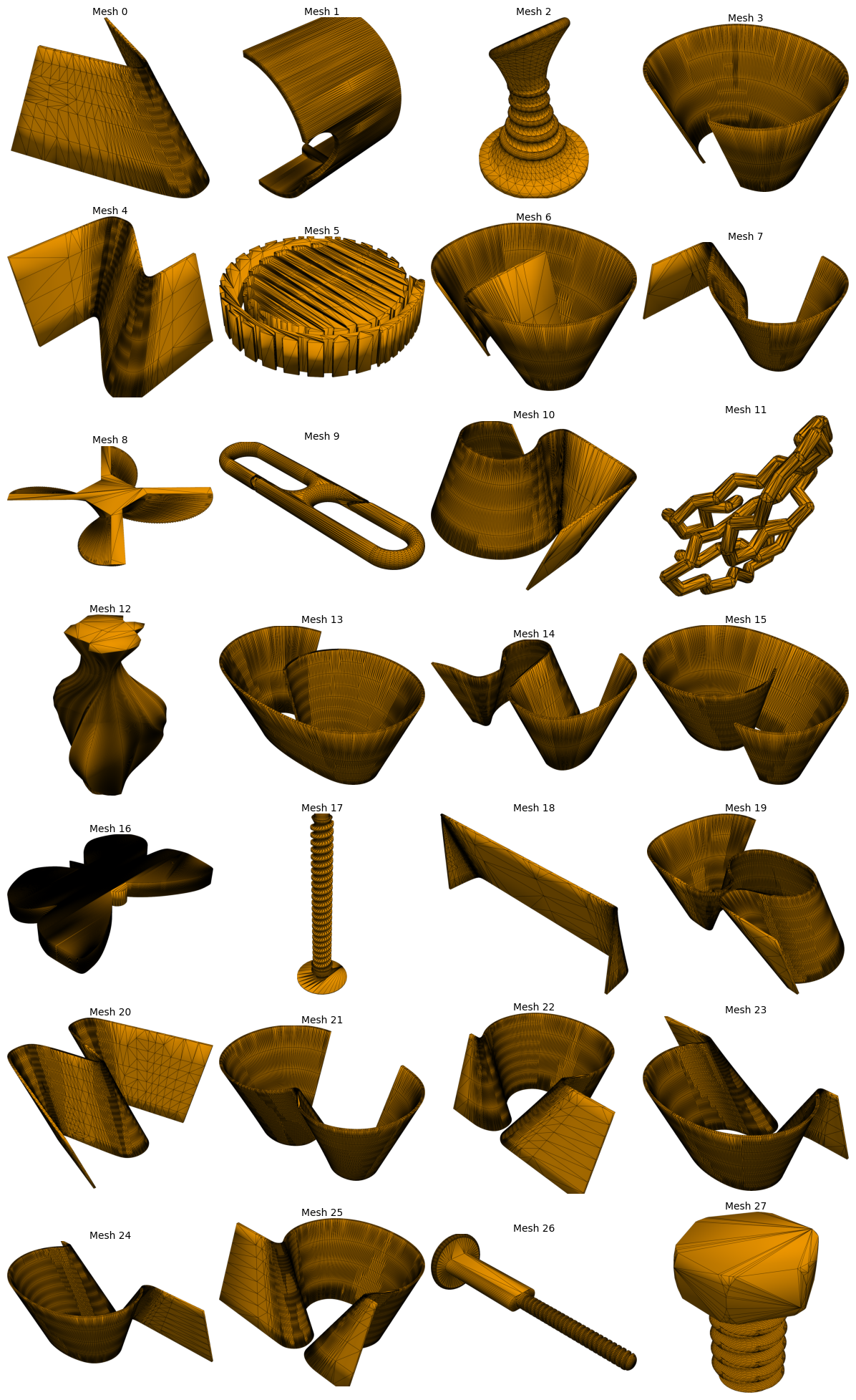}
    \caption{The 28 Thingi10K meshes used in the geodesic Sinkhorn experiments.}
    \label{fig:meshes}
\end{figure}

PCA identifies left, right, low, and high anchor vertices. For an anchor $c$, we normalize a geodesic Gaussian over the graph vertices: $g_c(i)=\frac{\exp(-d_G(i,c)^2/(2\sigma^2))}{\sum_{j\in V}\exp(-d_G(j,c)^2/(2\sigma^2))}$. The two probability measures are $a=0.7g_{\mathrm{left}}+0.3g_{\mathrm{low}}$, and $b=0.65g_{\mathrm{right}}+0.35g_{\mathrm{high}}$.

Writing $L_{\mathrm{box}}$ for the Euclidean diagonal length of the axis-aligned bounding box and $\bar w$ for the mean mesh-edge length, we use $\sigma=\max\{0.18L_{\mathrm{box}},3\bar w,10^{-8}\}$, $\varepsilon=0.2L_{\mathrm{box}}$. $L_{\mathrm{box}}$ is a geometric scale, not the graph-geodesic diameter. The measures and regularization parameter are held fixed across tree constructions for each mesh.

\subsection{Spanning-tree constructions}
\label{app:thingi-trees}

\begin{definition}[Stretch]
Given a spanning tree \(T \subseteq G\), the \emph{stretch} of an edge \((i,j)\in E\) induced by \(T\) is defined as $\operatorname{stretch}_T(i,j) = \frac{d_T(i,j)}{d_G(i,j)}$. The corresponding average stretch over the original graph edges is $\frac{1}{|E|} \sum_{(i,j)\in E} \operatorname{stretch}_T(i,j)$.
\end{definition}

We compare the following spanning-tree approximations. Each construction is applied independently to each mesh graph. The resulting trees retain the mesh-edge weights.
\begin{enumerate}[noitemsep, topsep=0pt]
    \item \textbf{Minimum Spanning Tree (MST):} constructs a spanning tree minimizing the total weight of the retained edges, without explicitly optimizing preservation of pairwise graph distances. Using a standard Kruskal or Prim implementation, the tree can be constructed in \(O(m\log n)\) time.

    \item \textbf{Random Shortest-Path Tree (Random SPT):} selects a root vertex uniformly at random and constructs a Dijkstra shortest-path tree, thereby exactly preserving distances from the selected root to every other vertex. Using a binary heap, the construction requires \(O(m\log n)\) time.

    \item \textbf{Approximate-Center Shortest-Path Tree (Center SPT):} first estimates a central graph vertex using a double-sweep and then constructs a shortest-path tree rooted at this approximate center. Since it requires only a constant number of Dijkstra searches, its overall construction time is \(O(m\log n)\).

    \item \textbf{Best-5 Shortest-Path Tree (Best-5 SPT):} constructs shortest-path trees from five randomly selected roots and retains the tree with the smallest sampled \(95\)th-percentile multiplicative stretch. Since the number of candidate roots is fixed, constructing the five candidate trees requires \(O(m\log n)\) time overall; with a fixed-size stretch sample, the selection step does not change this asymptotic complexity.

    \item \textbf{Diameter/Backbone Spanning Tree:} uses an approximate diameter as a backbone and attaches all remaining vertices to the backbone through a multi-source shortest-path forest. The approximate diameter computation and the multi-source Dijkstra search each require \(O(m\log n)\) time, giving an overall construction time of \(O(m\log n)\).

    \item \textbf{Alon-Karp-Peleg-West (AKPW) Tree:} applies the classical \citet{alon1995graph} low-stretch spanning-tree construction, designed to control the average stretch induced on the original graph edges. The method processes edges across increasing length scales, forming local clusters connected by shortest-path trees and recursively contracting these clusters until a spanning tree is obtained. The classical construction can be implemented in \(O(m\log n)\) time.

    \item \textbf{Abraham-Neiman Tree:} uses a prototype implementation based on the hierarchical petal decomposition framework of \citet{abraham2012using}. At each recursive step, the current connected subgraph is partitioned into a central residual cluster (the stigma) and several peripheral ``petals'' constructed around vertices far from the current cluster center. The resulting petals and stigma are connected through selected inter-cluster edges in a tree structure, and the construction is recursively applied within each cluster. The resulting low-stretch spanning tree is constructed in \(O(m\log n\log\log n)\) time.
\end{enumerate}
The experimental pipeline constructs and evaluates each of these trees independently for every mesh.

The motivation for considering several tree constructions is that they optimize different structural objectives. For example, the MST minimizes the total retained edge weight, $\sum_{e\in T} w_e,$ but does not directly optimize preservation of pairwise shortest-path distances. In contrast, a shortest-path tree rooted at $r$ exactly preserves all distances from the root, $d_T(r,v)=d_G(r,v)$ $\forall v\in V$, while low-stretch constructions explicitly aim to preserve graph distances more accurately. Our experiments therefore evaluate which notion of a ``good'' spanning tree is most closely aligned with preserving geodesic optimal transport.

\subsection{Runtime Comparisons and Speedups}
\label{app:thingi-runtime}

The runtime comparison fixes the tree construction to MST for every mesh. It contains the following seven implementations.
\begin{enumerate}[noitemsep, topsep=0pt]
    \item \emph{Brute Force Variations:}
    \begin{itemize}[noitemsep, topsep=0pt]
        \item \emph{Additive Brute Force.} For every source vertex, this method traverses the entire tree, explicitly accumulates the additive tree distance to every target, and evaluates the exponential kernel pairwise. Its cost is therefore $\Theta(n^2)$ per kernel-vector product.
        
        \item \emph{Exponential-Product Brute Force.} This method is also $\Theta(n^2)$, but exploits the identity $\exp\!\left(-\frac{d_T(i,j)}{\varepsilon}\right)=\prod_{e\in P_T(i,j)}\exp\!\left(-\frac{w_e}{\varepsilon}\right)$. The edge factors are therefore precomputed and propagated along tree paths, reducing the constant factor while retaining quadratic complexity. 
    \end{itemize}

    \item \emph{FFT-based tree field integrators.} The Thingi10K trees inherit arbitrary real-valued Euclidean edge weights, whereas the FFT formulations of Section~\ref{sec:algo} operate on discrete distance coordinates. We therefore quantize the tree metric before applying either FFT method; the complete construction and its approximation error are given in Appendix~\ref{app:fft_quantization}.
    
    \begin{itemize}[noitemsep, topsep=0pt]
        \item FTFI(\emph{1D-FFT}). We use an FTFI-style balanced-separator baseline following \citet{choromanski2022block}, in which distance-dependent cross interactions are reduced to one-dimensional structured convolutions. In our weighted setting these convolutions are evaluated on the quantized metric $d_Q$.
    
        \item \STADTFI(\emph{Adaptive-2D-FFT}). Our method uses the structure-adaptive decomposition of Section~\ref{sec:stad-adaptive}. At each recursive subtree it compares a diameter-path backbone with a singleton-centroid backbone according to Algo.~\ref{alg:choose_backbone}, and evaluates the resulting cross interactions using the 2D-FFT formulation of Algo.~\ref{alg:global-anchor-2dfft}. It operates on the same quantized kernel $K_Q$ as the 1D-FFT baseline; hence differences in runtime arise from the different decomposition and convolution strategies rather than from evaluating a different kernel.
    \end{itemize}

    \item \emph{Exact exponential-kernel methods.} These methods operate directly on the original real-valued tree metric $d_T$, require no quantization, and exploit $e^{-(a+b)/\varepsilon} =e^{-a/\varepsilon}e^{-b/\varepsilon}$. The Sinkhorn kernel is the $J=1$ case of the specialized kernel family in Section~\ref{sec:stad-specialized}, with $c_1=1$ and $\lambda_1=e^{-1/\varepsilon}$.
    
    \begin{itemize}[noitemsep, topsep=0pt]
        \item FTFI\ (\emph{SpclK-Centroid}). Uses a centroid-based implementation of the exponential-kernel factorization of \citet{choromanski2024fast}. In our formulation, this implementation is equivalent to \STADTFI\ (\emph{SpclK-Centroid}), with the backbone fixed to the centroid at every call.

        \item FTFI (\emph{SpclK-LinearExactDP}) denotes the exact two-pass dynamic program on the entire weighted tree, as mentioned in \citet[Lemma~3.5]{choromanski2022block}.
    
        \item \STADTFI\ (\emph{SpclK-Diameter}). Applies the backbone decomposition of Section~\ref{sec:stad-backbone} with a weighted diameter path. It aggregates source contributions at backbone anchors and evaluates interactions using the forward and backward recurrences of Section~\ref{sec:stad-specialized}, with exponential factors determined by the backbone edge weights.
    
        \item \STADTFI\ (\emph{SpclK-Adaptive}). Selects between a diameter backbone and a singleton centroid at each recursive call, following Section~\ref{sec:stad-adaptive}. Cross-component interactions are evaluated using the exponential-kernel specialization of Section~\ref{sec:stad-specialized}.
    \end{itemize}
\end{enumerate}

We quantify the computational gains of \STADTFI{} through five paired comparisons on the minimum spanning trees of the 28 Thingi10K meshes:
(i) \STADTFI{} (\emph{Adaptive-2D-FFT}) against FTFI (\emph{1D-FFT});
(ii) \STADTFI{} (\emph{SpclK-Diameter}) against FTFI (\emph{SpclK-Centroid});
(iii) \STADTFI{} (\emph{SpclK-Adaptive}) against FTFI (\emph{SpclK-Centroid});
(iv) \STADTFI{} (\emph{SpclK-Adaptive}) against FTFI (\emph{SpclK-LinearExactDP}); and
(v) \STADTFI{} (\emph{SpclK-Diameter}) against FTFI (\emph{SpclK-LinearExactDP}).

\paragraph{Speedup definition.}
For mesh $i$, baseline $B$, and proposed method $A$, define speedup as $S_i^{(q)}(B,A)=t_{i,B}^{(q)}/t_{i,A}^{(q)}$, where $q$ identifies the timing quantity. We consider the median runtime of a single kernel-vector product, the median time per Sinkhorn iteration, the median total Sinkhorn runtime excluding setup, and the latter total runtime including the recorded setup costs. These costs include tree construction, quantization where applicable, and operator and distance-matrix setup. A speedup greater than one indicates that $A$ is faster.

We aggregate the paired speedups using their geometric mean:
\begin{equation}
    S_{\mathrm{GM}}^{(q)}(B,A)
    =
    \exp\left(
        \frac{1}{|\mathcal{I}|}
        \sum_{i\in\mathcal{I}}
        \log S_i^{(q)}(B,A)
    \right),
    \label{eq:sinkhorn-speedup-geomean}
\end{equation}
where $\mathcal{I}$ is the set of meshes included in the comparison.
This gives each mesh equal weight in the comparison.

\begin{table}[t]
    \centering
    \small
    \setlength{\tabcolsep}{5pt}
    \caption{
        Geometric-mean speedups $t_B/t_A$ over 28 meshes.
        Values greater than one indicate that $A$ is faster.
    }
    \label{tab:sinkhorn-speedups}
    \resizebox{\linewidth}{!}{%
    \begin{tabular}{llrrrrr}
        \toprule
        \STADTFI{} variant ($A$)
        & Baseline ($B$)
        & Meshes
        & Single $Kx$
        & Per iteration
        & Sinkhorn
        & With setup \\
        \midrule
        \emph{SpclK-Diameter}
        & \multirow{2}{*}{FTFI (\emph{SpclK-Centroid})}
        & 28 & $3.31\times$ & $3.38\times$
        & $3.38\times$ & $3.35\times$ \\
        \emph{SpclK-Adaptive}
        &
        & 28 & $3.00\times$ & $3.07\times$
        & $3.07\times$ & $2.98\times$ \\
        \midrule
        \emph{Adaptive-2D-FFT}
        & FTFI (\emph{1D-FFT})
        & 28 & $5.99\times$ & $6.32\times$
        & $6.32\times$ & $5.84\times$ \\
        \midrule
        \emph{SpclK-Diameter}
        & \multirow{2}{*}{FTFI (\emph{SpclK-LinearExactDP})}
        & 28 & $1.24\times$ & $1.28\times$
        & $1.28\times$ & $1.20\times$ \\
        \emph{SpclK-Adaptive}
        &
        & 28 & $1.12\times$ & $1.17\times$
        & $1.17\times$ & $1.07\times$ \\
        \bottomrule
    \end{tabular}%
    }
\end{table}
Table~\ref{tab:sinkhorn-speedups} shows that the diameter and adaptive specialized variants achieve geometric-mean speedups of $3.38\times$ and $3.07\times$, respectively, in total Sinkhorn runtime over FTFI~(\emph{SpclK-Centroid}). Both outperform this centroid-based specialized baseline on all 28 meshes. Including setup retains speedups of $3.35\times$ and $2.98\times$, demonstrating that the gains persist after accounting for preprocessing. 

Compared with FTFI (\emph{SpclK-LinearExactDP}), \STADTFI{} (\emph{SpclK-Diameter}) and \STADTFI{} (\emph{SpclK-Adaptive}) achieve geometric-mean speedups of $1.24\times$ and $1.12\times$ for a single kernel-vector product, and $1.28\times$ and $1.17\times$ for total Sinkhorn runtime, respectively. Including setup, these speedups are $1.20\times$ and $1.07\times$. The diameter and adaptive variants are faster in total Sinkhorn runtime on 25 and 22 of the 28 meshes, respectively; including setup, they are faster on 24 and 19 meshes.

Across all 28 meshes, \STADTFI{} (\emph{Adaptive-2D-FFT}) achieves geometric-mean speedups of $5.99\times$ for a single kernel-vector product and $6.32\times$ for both per-iteration and recorded total Sinkhorn runtime over FTFI (\emph{1D-FFT}). Including setup costs, the speedup remains $5.84\times$.

Within each of the five paired comparisons, the recorded Sinkhorn iteration counts agree on every mesh. Consequently, the per-iteration and total Sinkhorn speedups coincide: the improvement comes from faster iterations rather than fewer iterations.

\subsection{Distance quantization for FFT-Based Integrators}
\label{app:fft_quantization}

The FFT-based tree field integrators require an additional discretization step when applied to the weighted trees obtained from the Thingi10K meshes. The spanning trees inherit the Euclidean edge lengths of the mesh and therefore have arbitrary real-valued edge weights $w_e\in\mathbb{R}_{+}$. Consequently, $d_T(i,j)=\sum_{e\in \Pi_T(i,j)} w_e$ does not, in general, take values on a uniformly spaced grid.

The FFT itself does not require integer-valued signal values. Rather, the issue is that the convolutional representation of the field integrator assumes that the distance argument can be indexed on a regular discrete lattice. For arbitrary real-valued tree distances there is, in general, no integer array index corresponding exactly to each possible value of $d_T(i,j)$. We therefore discretize the distance coordinate before applying the FFT.

Let $\Delta={\varepsilon}/{B}$, where $B$ denotes the number of distance bins per Sinkhorn regularization scale $\varepsilon$. For every tree edge $e$, define $q_e=\max\left\{1,\, \operatorname{round}\left({w_e}/{\Delta}\right)\right\}$. Thus $q_e\in\mathbb{Z}_{\geq 1}$ is the length of edge $e$ measured in integer lattice units. The corresponding quantized geometric edge length is $w_e^{(Q)}=q_e\Delta$. The induced quantized tree metric is therefore
\[
    d_Q(i,j)=\Delta \sum_{e\in P_T(i,j)}q_e.
\]

Thus, if $\widehat d_Q(i,j):=\frac{d_Q(i,j)}{\Delta}=\sum_{e\in P_T(i,j)} q_e$, then $\widehat d_Q(i,j)\in\mathbb{Z}_{\geq 0}$ and can be used directly as an array index in the FFT computations. For the Sinkhorn kernel, this gives
\[
    K_Q(i,j)
    =
    \exp\left(
        -\frac{d_Q(i,j)}{\varepsilon}
    \right)
    =
    \exp\left(
        -\frac{\Delta}{\varepsilon}
        \widehat d_Q(i,j)
    \right)
    =
    \exp\left(
        -\frac{\widehat d_Q(i,j)}{B}
    \right),
\]
since $\Delta=\varepsilon/B$.  Hence, after quantization, the kernel is sampled on a uniformly spaced integer distance lattice and can be evaluated using discrete FFT correlations and convolutions enabling the 2D-FFT algorithm of Section~\ref{sec:algo}.


It is important to distinguish this discretization error from the graph-to-tree approximation studied in Section \ref{thingi_treeapprox}. The latter replaces the original mesh metric $d_G$ by the tree metric $d_T$, whereas quantization subsequently replaces $d_T$ by $d_Q$ solely for the FFT-based implementation.  The exact exponential methods stop after the first approximation and evaluate the kernel induced by $d_T$ directly; only the FFT-based methods require the additional transition from $d_T$ to $d_Q$.

\paragraph{Quantization error.}
Let $\ell_{ij}=|P_T(i,j)|$ be the number of edges on the tree path
between $i$ and $j$. The rounding rule with $q_e\geq1$ gives
$|w_e^{(Q)}-w_e|\leq\Delta$ for every edge. Consequently,
\begin{equation}
    |d_Q(i,j)-d_T(i,j)|
    \leq \sum_{e\in P_T(i,j)}|w_e^{(Q)}-w_e|
    \leq \ell_{ij}\Delta,
    \label{eq:quant-distance-bound}
\end{equation}
and the corresponding kernel ratio satisfies
\begin{equation}
    e^{-\ell_{ij}/B}
    \leq \frac{K_Q(i,j)}{K_T(i,j)}
    \leq e^{\ell_{ij}/B},
    \qquad K_T(i,j)=e^{-d_T(i,j)/\varepsilon}.
    \label{eq:quant-kernel-bound}
\end{equation}
If every edge on the path has $w_e\geq\Delta/2$, ordinary rounding
gives the sharper distance bound $\ell_{ij}\Delta/2$.
Shorter edges are raised to $\Delta$ and can incur errors approaching
$\Delta$. Thus, the resolution parameter $B$ controls the per-edge
discretization scale, while pathwise errors can accumulate across edges.

\paragraph{Empirical quantization and transport-cost errors.}
All 28 MST benchmarks use $B=64$, so $\Delta=\varepsilon/64$.
We compare the original tree metric $d_T$ with $d_Q$ on the same MST,
with the same marginals and regularization parameter. Let $\Pi_T$ and
$\Pi_Q$ denote the corresponding Sinkhorn transport plans, and let
$\Pi_G$ denote the original-graph reference plan. Define
$C_G(\Pi)=\sum_{i,j}\Pi_{ij}D_G(i,j)$ and
$C_{\mathrm{ref}}=C_G(\Pi_G)$. We measure
\begin{align}
    E_{K,Q}
    &=\frac{\|K_Q-K_T\|_F}{\|K_T\|_F},
    \label{eq:quant-kernel-error}\\
    E_{\mathrm{rel}}(Q)
    &=\frac{|C_G(\Pi_Q)-C_{\mathrm{ref}}|}
        {C_{\mathrm{ref}}},
    \label{eq:quant-relative-ot-error}\\
    \operatorname{TV}_Q
    &=\frac12\sum_{i,j}|(\Pi_Q)_{ij}-(\Pi_T)_{ij}|.
    \label{eq:quant-plan-tv}
\end{align}
The reference plans are computed using the linear exponential-kernel
tree DP on the original and quantized metrics, respectively; both
reference solves satisfy the marginal-residual tolerance $10^{-7}$
on every mesh. The kernel error and plan TV isolate the effect of
quantization relative to the original tree problem. The relative OT
error evaluates the quantized-tree plan under the original mesh metric
$D_G$, so it includes the combined effects of tree approximation and
quantization. Its normalization supports comparisons across geometric
scales. These reference-plan metrics are evaluated separately from
the convergence behavior of the FFT-based solvers.

\begin{table}[t]
    \centering
    \small
    \setlength{\tabcolsep}{5pt}
    \caption{Quantization and transport-cost errors on the MSTs of
        28 Thingi10K meshes at $B=64$. Kernel error and plan TV
        compare the quantized and original tree problems; relative
        OT error uses the original-graph transport-cost reference.
        Each mesh has equal weight. We report the arithmetic mean
        $\pm$ sample standard deviation, median, and maximum.
        All values are dimensionless.}
    \label{tab:quantization-errors}
    \begin{tabular}{lrrr}
        \toprule
        Metric & Mean $\pm$ std. & Median & Maximum \\
        \midrule
        Kernel error $E_{K,Q}$
        & $0.3158\pm0.1409$ & $0.3201$ & $0.6909$ \\
        Relative OT error $E_{\mathrm{rel}}(Q)$
        & $0.0295\pm0.0262$ & $0.0274$ & $0.1014$ \\
        Plan variation $\operatorname{TV}_Q$
        & $0.0295\pm0.0258$ & $0.0238$ & $0.1140$ \\
        \bottomrule
    \end{tabular}
\end{table}

Table~\ref{tab:quantization-errors} shows a mean relative OT error of
$0.02955$ ($2.955\%$) for the quantized-tree plans. For comparison,
the original-tree plans have mean relative OT error $0.02838$
($2.838\%$), using the same graph reference. Thus, quantization
increases the mean relative OT error by approximately $0.117$
percentage points on these meshes. This is the difference between
the two mean graph-reference errors. The mean quantization-only
kernel error and plan TV are $0.3158$ and $0.0295$, respectively.

\subsection{Sinkhorn Implementation Details}
\label{app:sinkhorn-implementation}

\paragraph{Solver settings and stopping criterion.}
All methods in the runtime comparison use the same Sinkhorn driver, with double-precision scaling vectors initialized as $u^{(0)}=v^{(0)}=\mathbf{1}$. For the kernel operator $K$ associated with each method, a complete iteration consists of the elementwise updates
\begin{equation}
    u^{(t)}=\frac{a}{K v^{(t-1)}+\delta\mathbf{1}},
    \qquad
    v^{(t)}=\frac{b}{K^\top u^{(t)}+\delta\mathbf{1}},
    \qquad \delta=10^{-30}.
    \label{eq:app-sinkhorn-updates}
\end{equation}
The kernels are symmetric, so the same operator implements $K$ and $K^\top$. We evaluate the absolute marginal residual
\begin{equation}
    R_t=\max\!\left\{
        \left\|u^{(t)}\odot(Kv^{(t)})-a\right\|_\infty,
        \left\|v^{(t)}\odot(K^\top u^{(t)})-b\right\|_\infty
    \right\},
    \label{eq:app-sinkhorn-residual}
\end{equation}
where $\odot$ denotes elementwise multiplication. The residual is checked after the first complete iteration and every 20 iterations thereafter, i.e., at $t=1,21,41,\ldots$. A run terminates at the first check satisfying $R_t<10^{-7}$ or after 1,000 iterations. Each iteration uses two kernel-vector products; each residual check uses two additional products, whose cost is included in the recorded runtime. For a run reaching the iteration cap, the saved residual is the last periodically checked value, at iteration 981; the current runner does not recompute it at iteration 1,000.

\paragraph{Timing protocol and setup accounting.}
The recorded benchmarks use one numerical-library thread. For each mesh and integrator, the single kernel-vector product and the complete Sinkhorn solve are each timed three times, and we report their median runtimes. Each of the two brute-force baselines is timed once for each quantity. The input to the single-product benchmark is generated with NumPy random seed 42 and is shared across methods on the same mesh. Garbage collection is performed before each timed repetition and is excluded from the timing interval. Operator setup, including decomposition construction, is performed before these repetitions and reused throughout each solve. The reported time per iteration is the median, across repetitions, of the solve runtime divided by its executed iteration count; it therefore includes the amortized residual-checking cost. The ``With setup'' quantity adds the recorded tree-construction, operator-setup, quantization, and distance-matrix setup costs to the median solve runtime. The distance-matrix setup field is zero in the supplied runtime records: shared all-pairs distance calculations used for reference evaluation are excluded from these timings. For the FFT methods, the quantization resolution is $\Delta=\varepsilon/64$, as used by the supplied benchmark runner.

\section{Topological Vision Transformer Experiments: Details}
\label{app:vit}

\subsection{Masked linear attention as tree-field integration}
\label{app:reduction}

Let $q_i,k_j,v_j\in\mathbb{R}^{d_h}$ denote the query, key, and value
vectors for a single attention head, and let
$\phi:\mathbb{R}^{d_h}\rightarrow\mathbb{R}^{r}$ be the deterministic
feature map. The topological mask is
\[
M_{ij}
=
\exp\!\left(
u\,d_T(i,j)^2
+
v\,d_T(i,j)
+
w
\right).
\]
Masked linear attention computes
\begin{equation}
\mathrm{out}_i
=
\frac{
\sum_j M_{ij}
\big(\phi(q_i)^\top\phi(k_j)\big)v_j
}{
\sum_j M_{ij}
\big(\phi(q_i)^\top\phi(k_j)\big)
}.
\label{eq:app-masklin}
\end{equation}
Since $\phi(q_i)$ is independent of $j$, it can be moved outside the
summations:
\begin{equation}
\mathrm{out}_i
=
\frac{
\phi(q_i)^\top
\left(
\sum_j M_{ij}\phi(k_j)v_j^\top
\right)
}{
\phi(q_i)^\top
\left(
\sum_j M_{ij}\phi(k_j)
\right)
}.
\end{equation}

The two terms in parentheses are both of the form
$\sum_j f(d_T(i,j))x_j$, and are therefore instances of the tree-field
integration problem in Eq.~\ref{eq:stad-target}. We combine
the two payloads into
\begin{equation}
x_j
=
\big[
\phi(k_j)v_j^\top
\;\big|\;
\phi(k_j)
\big]
\in\mathbb{R}^{C},
\qquad
C
=
r\,d_h+r
=
r(d_h+1).
\end{equation}
A single tree-field integration over these $C$ channels gives both
quantities required by the attention computation. The first $r\,d_h$
channels produce the matrix appearing in the numerator, while the
remaining $r$ channels produce the normalizer.

For the models used here, $r=d_h=64$, giving
$C=64\cdot65=4{,}160$ channels per head. For fixed $r$ and $d_h$, the
channel width $C$ is independent of sequence length. A subquadratic
tree-field integrator in $N$ therefore gives a subquadratic
implementation of masked linear attention in $N$.

The same topological mask can be incorporated into softmax attention as
an additive logit bias,
\[
\log M_{ij}
=
u\,d_T(i,j)^2
+
v\,d_T(i,j)
+
w.
\]
This does not remove the quadratic pairwise computation of standard
exact softmax attention: it still requires $\Theta(N^2)$ query-key
interactions and corresponding pairwise topological biases. These
quantities need not be stored simultaneously as explicit $N\times N$
matrices, but the pairwise computation remains quadratic in $N$.

\subsection{Training details}
\label{app:training}

All models are trained from scratch on CIFAR-100 with a single fixed
recipe. Architecture: ViT-S ($\dim{=}384$, depth $12$, $6$ heads, MLP
$1536$), patch embedding at $224$px input; patch size $16$ gives
$N{=}196$ patch tokens and patch size $7$ gives $N{=}1{,}024$, in each
case plus one CLS token. Optimizer AdamW, learning rate $10^{-3}$,
weight decay $0.05$, $5$-epoch linear warmup followed by cosine decay to
zero over $100$ epochs, label smoothing $0.1$, gradient-norm clipping at
$1.0$, and \texttt{bfloat16} autocast. Augmentation is
$\mathrm{RandomCrop}(32,\text{pad}\,4)$ and horizontal flip on the native
$32\times32$ image, upsampled to $224\times224$ on device with bilinear
interpolation (antialiased). The batch size is $128$ at $N{=}196$ and
$256$ at $N{=}1{,}024$; all other hyperparameters are identical across
scales.

The mask uses the \emph{synced} setting of
\citet{choromanski2024fast}: three parameters $(u,v,w)$ per layer,
shared across heads, initialized to zero, so the mask is the identity at
initialization and the masked model coincides with the unmasked model
there (verified: at initialization the two models agree to floating-point
precision under matched arithmetic). The CLS row and column are left
unmasked ($M=1$). The exponent is clamped to $[-30,5]$ before
exponentiation, so $f$ is exactly its floor $e^{-30}$ beyond a
mask-dependent radius; this prevents overflow of the $u\,\dtree^2$ term.
The unmasked linear baseline is a Performer
\citep{choromanski2021performer} with $\phi(x)=\mathrm{elu}(x)+1$.

During training the mask is applied by exact evaluation of
Eq.~\eqref{eq:app-masklin}, computed in query blocks so that the
$N\times N$ score matrix is never materialized in full; this is exact
and is the appropriate choice at the training sequence lengths, where
the dense evaluation is faster than the recursive integrator. The
integrators are used at inference/measurement time; across $200$ tested
configurations (trees, layers, and learned parameter settings) their
outputs match the trained models' attention with maximum relative
deviation $8.6\times10^{-8}$.

Three-seed statistics (Table~\ref{tab:vit-acc}, $N{=}196$) use seeds
$\{0,1,2\}$; single-seed entries use seed $0$.

\begin{figure}[t]
\centering
\includegraphics[width=\linewidth]{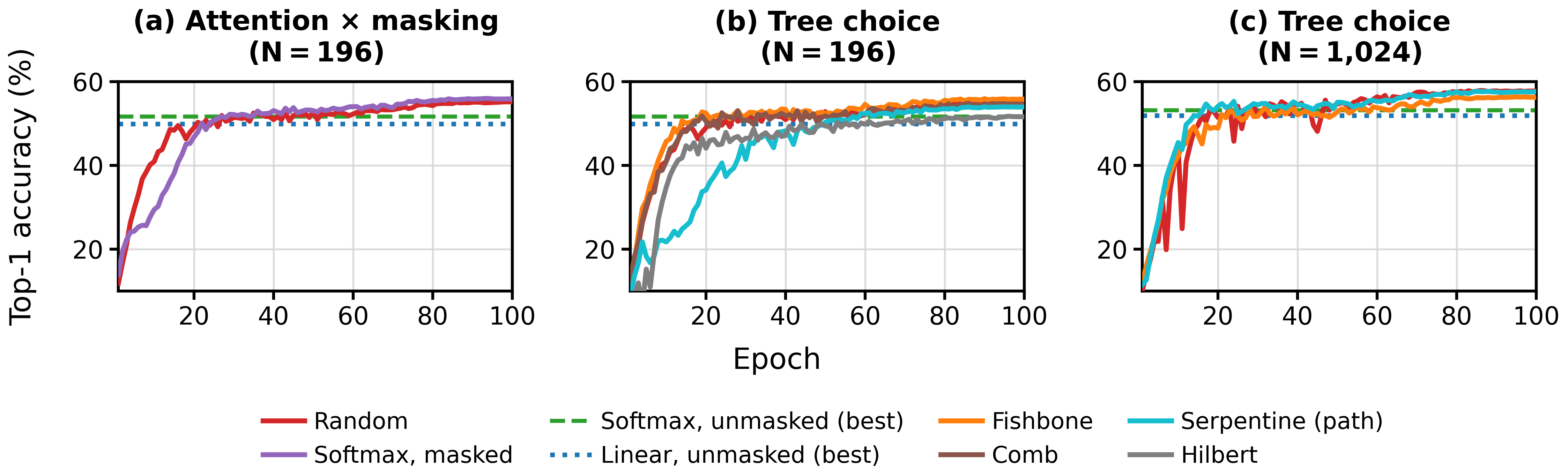}
\vspace{-18pt}
\caption{CIFAR-100 top-1 accuracy versus training epoch.
(a)~Masked softmax and masked linear attention at $N{=}196$ (curves
are means over three seeds). (b,~c)~Masked linear attention across
spanning trees at $N{=}196$ and $N{=}1{,}024$; multi-seed
configurations are drawn as seed means, the remaining configurations
as single seeds. In all panels, the dashed and dotted horizontal
lines mark the best accuracy of the unmasked softmax and unmasked
linear baselines, respectively. Every masked tree exceeds both
baselines (Appendix~\ref{app:vit-tree-choice}).}
\label{fig:vit-acc-curves}
\vspace{-6pt}
\end{figure}


\paragraph{Tiny-ImageNet replication.} The Tiny-ImageNet column of
Table~\ref{tab:vit-acc} uses Tiny-ImageNet-200 ($100{,}000$ training
and $10{,}000$ validation images at $64\times64$, $200$ classes)
under the identical recipe: augmentation is
$\mathrm{RandomCrop}(64,\text{pad}\,8)$ and horizontal flip at native
resolution, followed by the same on-device bilinear upsampling to
$224\times224$; normalization uses the dataset statistics; the
classification head has $200$ outputs; all remaining hyperparameters,
including patch size $16$ ($N{=}196$) and batch size $128$, are
unchanged. Every configuration was trained with seeds $\{0,1,2\}$ and
all runs converged. Per-seed best accuracies: unmasked softmax
$\{37.50, 37.52, 38.86\}$, unmasked linear $\{36.47, 36.51, 37.34\}$,
masked linear on the random tree $\{39.77, 40.24, 40.41\}$, masked
linear on the serpentine path $\{42.42, 41.15, 40.47\}$.

\paragraph{Masked-softmax ablation.} Softmax attention accepts the
same topological mask as an additive logit bias
(Appendix~\ref{app:reduction}). With the random-tree mask at
$N{=}196$, masked softmax reaches $56.06\pm1.31$ over three seeds,
versus $55.27\pm0.68$ for masked linear attention: no detectable
difference (Welch $t=0.9$, n.s.), so replacing softmax with linear
attention under the same mask preserves accuracy within the resolution
of three seeds. We do not train masked softmax at $N{=}1{,}024$, as
its $N\times N$ score-and-bias computation offers no route to
subquadratic evaluation.

\subsection{Attention-width evaluation of the cross term (G)}
\label{app:hankel}

At the attention width used in our experiments, $C=4{,}160$, the
per-channel cost of the backbone cross term becomes significant. For
some subtree shapes, the two-dimensional FFT construction of
Algo.~\ref{alg:global-anchor-2dfft} is therefore more expensive than
a matrix-multiplication-based evaluation.

Recall from Eq.~\ref{eq:sumofFFTterms} that for a backbone of length
$m$ and maximum off-backbone depth $H$,
\begin{equation}
G_q[s]
=
\sum_{r=0}^{H}
\sum_{t=1}^{m}
f\big(q+r+|s-t|\big)A_r[t],
\qquad
0\le q\le H,
\quad
1\le s\le m.
\label{eq:app-G}
\end{equation}
Writing
\[
\ell=q+|s-t|,
\]
the kernel argument becomes $\ell+r$. This separates the depth
correlation from the subsequent summation over backbone offsets.

\paragraph{Depth correlation.}
Define
\begin{equation}
B[\ell,t]
=
\sum_{r=0}^{H}
f(\ell+r)A_r[t],
\qquad
\ell=0,\dots,H+m-1.
\end{equation}
Let
\[
F[\ell,r]=f(\ell+r).
\]
Since the entries of $F$ depend only on the sum $\ell+r$, $F$ is a
Hankel matrix of shape $(H+m)\times(H+1)$. Combining the backbone index
$t$ and channel index into the column dimension gives the matrix product
\[
B=FA,
\]
where
\[
F\in\mathbb{R}^{(H+m)\times(H+1)},\qquad
A\in\mathbb{R}^{(H+1)\times(mC)}.
\]
Thus, the depth correlations for all backbone locations and channels can
be evaluated in one GEMM.

\paragraph{Backbone offset.}
After computing $B$,
\[
G_q[s]
=
\sum_{t=1}^{m}B[q+|s-t|,t].
\]
Splitting the sum at $t=s$ gives
\begin{equation}
G_q[s]
=
\underbrace{
\sum_{t\le s}B[q+s-t,t]
}_{L_q[s]}
+
\underbrace{
\sum_{t\ge s}B[q+t-s,t]
}_{R_q[s]}
-
B[q,s].
\end{equation}
The final term removes the duplicate contribution at $t=s$.

For the left term, define the antidiagonal coordinate
\[
\sigma=\ell+t.
\]
When $t\le s$ and $\ell=q+s-t$, every contributing entry satisfies
$\sigma=q+s$. An inclusive cumulative sum over $t$ for fixed $\sigma$
therefore gives $L_q[s]$ by reading the prefix ending at $t=s$.
The right term is obtained by applying the same construction after
reversing the backbone direction. The scan stage requires
$O\big((H+2m)m\big)$ memory operations per channel and makes no
assumption on the form of $f$.

\paragraph{Cost and dispatch.}
For one channel, the GEMM multiplies an
$(H+m)\times(H+1)$ matrix by an $(H+1)\times m$ matrix. Under the
standard convention that a multiplication and an addition count as
separate floating-point operations, this requires approximately
\begin{equation}
2(H+m)(H+1)m
\end{equation}
floating-point operations per channel. The prefix-scan stage requires
$O\big((H+2m)m\big)$ memory operations per channel.

For each subtree shape $(H,m)$, we compare a calibrated estimate of the
GEMM-plus-scan cost with the corresponding FFT-based evaluator and use
the cheaper implementation. For pure paths, where $H=0$,
Eq.~\ref{eq:app-G} reduces to
\[
G_0[s]
=
\sum_{t=1}^{m} f(|s-t|)A_0[t],
\]
which is a one-dimensional convolution along the path. The FFT
implementation is retained for this case.

In our numerical checks, the Hankel-GEMM evaluation agrees with direct
evaluation of Eq.~\ref{eq:app-G} to relative error on the order of
$10^{-15}$, including for growing kernels. Double-precision
accumulation is used for trained masks with large dynamic range; in
these cases we observed larger numerical error from padded FFT
evaluation.

\subsection{Channel-aware backbone selection}
\label{app:planner}

Algo.~\ref{alg:choose_backbone} selects a backbone independently
within each recursive subtree using a proxy for the resulting
computation cost. At channel width $C$, the cost contains two parts:
overhead paid once for the subtree and work repeated independently for
each channel. We therefore use the channel-aware proxy
\begin{equation}
\mathcal{C}(H,m,s,\{z_j\};C)
=
\mathcal{C}_{\mathrm{tree}}
+
C\,\mathcal{C}_{\mathrm{chan}}.
\end{equation}

The per-channel term $\mathcal{C}_{\mathrm{chan}}$ estimates the cost of
the evaluator used for the current subtree shape $(H,m)$. Depending on
the shape, this is the one-dimensional evaluator used when $m=1$, the
Hankel-GEMM evaluator of Appendix~\ref{app:hankel}, or the FFT-based
evaluator. The estimate also includes the correction term from
Appendix~\ref{app:eterms} and the per-channel cost of subsequent
recursive calls.

The per-tree term $\mathcal{C}_{\mathrm{tree}}$ accounts for
decomposition, dispatch, and recursive overhead that is incurred once
per subtree rather than once per channel.

The channel width
\[
C=r(d_h+1)
\]
is fixed by the model architecture and does not depend on the input.
For a fixed architecture and hardware calibration, the resulting
backbone plan therefore depends only on the tree and $C$ and can be
constructed once and reused. Since the per-channel term is multiplied
by $C$, changing the channel width can change which backbone minimizes
the proxy cost. In our cost model, subtree shapes that favor
diameter-style backbones at small channel width can favor more
centroid-like backbones at $C=4{,}160$.

For the path trees used in the runtime experiments, the selection is
trivial. The full tree is itself the diameter backbone, giving $m=N$
and $H=0$, so the field integration is evaluated in a single call
without recursive decomposition.

\subsection{Depth-bucketed correction terms (E)}
\label{app:eterms}

For a hanging component $B_\alpha$ with depth profile $b_\alpha$, the
correction term in Eq.~\ref{eq:stad-E} is
\begin{equation}
E_\alpha[q]
=
\sum_{r=1}^{H}
f(q+r)b_\alpha[r].
\end{equation}
Let
\[
D_\alpha
=
\max_{v\in B_\alpha}\mathrm{depth}(v)
\]
denote the maximum depth of the component. Since
$b_\alpha[r]=0$ for $r>D_\alpha$, and queries to $E_\alpha$ only require
$q\le D_\alpha$, both dimensions of the correlation can be restricted
to $D_\alpha$. Evaluating every component at the full subtree depth
$H$ would instead perform work on rows containing only structural
zeros.

We group components into logarithmic depth buckets according to
$\lceil\log_2 D_\alpha\rceil$. Let $\mathcal{B}_b$ denote one such
bucket and let
\[
\widehat D_b
=
\max_{\alpha\in\mathcal{B}_b}D_\alpha
\]
be its largest component depth. A single batched GEMM is then used for
the components in $\mathcal{B}_b$, sized according to
$\widehat D_b$. The per-channel correction cost is therefore
\begin{equation}
O\left(
\sum_b
|\mathcal{B}_b|
(\widehat D_b+1)^2
\right).
\end{equation}
Because components placed in the same logarithmic bucket have depths
within a constant factor of one another,
\begin{equation}
O\left(
\sum_b
|\mathcal{B}_b|
(\widehat D_b+1)^2
\right)
=
O\left(
\sum_\alpha
(D_\alpha+1)^2
\right).
\end{equation}
Thus, relative to evaluating all $s$ components at the global depth
$H$, whose per-channel cost is
$O\big(s(H+1)^2\big)$, depth bucketing gives the component-wise
complexity
\begin{equation}
O\left(
\sum_\alpha(D_\alpha+1)^2
\right),
\end{equation}
up to the constant-factor padding introduced by the logarithmic
buckets. This has the same component-wise depth dependence as the
corresponding per-component correlations in FTFI.

The truncation itself is exact: entries beyond $D_\alpha$ are zero in
the component depth profile, and output values with
$q>D_\alpha$ are never required.

\subsection{Runtime measurement protocol}
\label{app:runtime}

Runtime is measured on a single machine with $32$ CPU cores and
$256$\,GB of memory; the large memory allows the full $C=4{,}160$-channel
integration to run in a single pass, without the channel chunking that a
smaller memory would force (each chunk otherwise re-pays the per-call
tree work). Within each reported row (sequence length) all methods are
timed in the same process. Main-text tables report the median of
$K\geq 5$ timed runs after one discarded warm-up run ($K{=}3$ for the
recursive-bisection cell at $N{=}262{,}144$), with interquartile range
within $4\%$ of the median in all cases. The per-tree grid of
Appendix~\ref{app:vit-tree-choice} reports an earlier single-run
shared-process campaign on the same machine, whose values agree with
the repeated-run medians within ${\sim}10\%$. Because the unweighted patch grid admits any
spanning tree, each method is evaluated on the tree most favorable to
it: \ftfi{} on a random spanning tree (balanced, its best case for
centroid decomposition) and \STADTFI{} on a path tree
($H{=}0$). Dense evaluation materializes $M=f(D)$ and applies it; its
$N\times N$ mask is $\Theta(N^2)$ in time and memory and becomes
infeasible on commodity memory beyond ${\sim}3\times10^4$ tokens. Every
integrator cell is verified against dense evaluation where feasible, and
cross-validated between integrators otherwise, at relative tolerance
$10^{-9}$; observed relative error is ${\sim}10^{-15}$ throughout,
including on the learned masks. The advantage of \STADTFI{}
over \ftfi{} is in part algorithmic and in part architectural: on a path
tree its masking primitive reduces to one-dimensional convolutions and
matrix products that parallelize across cores, whereas \ftfi{}'s centroid
recursion is largely sequential.

\subsection{The Spanning Tree as a Design Choice}
\label{app:vit-tree-choice}

\subsubsection{Every spanning tree is a minimum spanning tree}
\label{sec:tc-free}

The topological mask $M_{ij}=\exp(u\,\dtree(i,j)^2+v\,\dtree(i,j)+w)$
depends on the hop distance $\dtree$ on a spanning tree $T$ of the
$s\times s$ patch grid. Prior work fixes $T$ to a random minimum
spanning tree. We observe that this leaves a large design space
unexplored: because every edge of the $4$-neighbor patch grid has unit
weight, \emph{every} spanning tree of the grid is a minimum spanning
tree. The choice of $T$ is therefore unconstrained by the MST
requirement, invisible to the original construction, and free to be
optimized. We study how this choice affects both the downstream accuracy
of the trained model and the runtime of the masking primitive, and find
that the two objectives are governed by different, and partly opposed,
properties of $T$.

\subsubsection{Tree families}
\label{sec:tc-families}

We evaluate five structurally distinct spanning trees of the patch grid,
chosen to span the range from balanced, high-branching structure to
path-like structure
(Figure~\ref{fig:trees}):
\begin{itemize}[noitemsep,topsep=2pt]
\item \textbf{random}: a seeded-Kruskal spanning tree; the incumbent
choice of \citet{choromanski2024fast}; a balanced, high-branching
tree with no dominant path.
\item \textbf{serpentine}: the row-major (boustrophedon) Hamiltonian
path; the whole tree is a single path.
\item \textbf{Hilbert}: the Hilbert space-filling-curve Hamiltonian
path.
\item \textbf{comb}: a caterpillar with the spine on the boundary row
and full-height teeth hanging from every column.
\item \textbf{fishbone}: a caterpillar with the spine on the central row
and half-height ribs extending up and down.
\end{itemize}
The two caterpillars share a spine-and-teeth structure but differ in the
depth of the teeth ($\approx\!s$ for comb, $\approx\!s/2$ for fishbone);
the two Hamiltonian paths differ only in traversal order.

\begin{figure}[t]
\centering
\includegraphics[width=\linewidth]{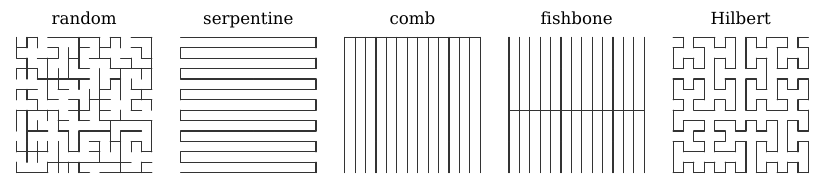}
\caption{The five spanning-tree families of the patch grid evaluated in
the tree sweep, from bushy (random) to path-like (serpentine, Hilbert).
Every spanning tree of the unit-weight grid is a minimum spanning tree,
so all are admissible under the topological-mask construction.}
\label{fig:trees}
\end{figure}

\subsubsection{Edge stretch as an accuracy predictor}
\label{sec:tc-stretch}

To predict how a tree affects accuracy we reuse the \emph{stretch}
statistic of the mesh-approximation experiments
(Section~\ref{sec:sinkhornexp}). For a spanning tree $T\subseteq G$ the stretch of a graph
edge $(i,j)$ is $\dtree(i,j)/d_G(i,j)$; since all patch-grid edges have
unit weight and connect graph-adjacent patches, the \emph{average edge
stretch} of $T$ reduces to the mean tree distance $\dtree(i,j)$ over
grid-adjacent patch pairs,
\begin{equation}
\mathrm{stretch}(T)=\frac{1}{|E|}\sum_{(i,j)\in E}\dtree(i,j),
\end{equation}
where $E$ is the set of grid edges. Low stretch means image-neighboring
patches remain close in tree metric, so the learned exponential-decay
mask can express a faithful locality prior; high stretch means
neighboring patches are separated by long tree detours, and the mask
cannot encode their proximity.

\subsubsection{Accuracy across trees}
\label{sec:tc-accuracy}

\paragraph{At $N{=}196$.} Table~\ref{tab:tc-acc196} reports single-seed
top-1 accuracy for the five trees under the identical training recipe
(ViT-S, masked linear attention), together with each tree's average edge
stretch on the $14\times14$ grid. Accuracy spans $4.21$ points across the
trees and decreases with stretch: the Pearson correlation between stretch
and accuracy is $-0.86$ (Spearman $-0.82$). The fishbone tree attains
both the lowest stretch and the highest accuracy, $+6.16$ points over the
unmasked baseline ($49.79$) and $+1.15$ over the incumbent random tree.
Two observations qualify the trend. At equal stretch ($7.50$) the comb
caterpillar exceeds the serpentine path by $2.04$ points, so global tree
structure carries information beyond the local statistic. The Hilbert
tree, although the Hilbert \emph{ordering} is widely used to preserve
locality, has the largest edge stretch as a \emph{tree}
(vertically adjacent patches connect only through long detours), gains
the least, and its layers converge to nearly flat masks with positive
coefficients, indicating reduced reliance on a poorly aligned geometry.

\begin{table}[t]
\centering
\caption{Accuracy as a function of tree choice (CIFAR-100, ViT-S,
masked linear attention, $N{=}196$, single seed), together with the
average edge stretch on the $14\times14$ grid. Accuracy decreases
with stretch (Pearson $-0.86$).}
\label{tab:tc-acc196}
\begin{tabular}{lcccc}
\toprule
tree & family & avg.\ edge stretch & top-1 (\%) & $\Delta$ vs.\ unmasked \\
\midrule
\emph{unmasked} & --- & --- & 49.79 & --- \\
fishbone   & caterpillar & \textbf{4.50} & \textbf{55.95} & $+6.16$ \\
comb       & caterpillar & 7.50 & 54.86 & $+5.07$ \\
random     & Kruskal     & 5.35 & 54.80 & $+5.01$ \\
serpentine & path        & 7.50 & 52.82 & $+3.03$ \\
Hilbert    & path        & 8.81 & 51.74 & $+1.95$ \\
\bottomrule
\end{tabular}
\end{table}

\paragraph{At $N{=}1{,}024$, the ranking reverses.}
Table~\ref{tab:tc-acc1024} reports accuracy at patch size $7$
($32\times32$ grid) for the trees trained at this scale, with stretch on
the $32\times32$ grid. The mask benefit is larger than at $N{=}196$ on
every tree, but the \emph{ranking among trees changes}: the random tree
($58.08$) now exceeds fishbone ($56.38$), reversing the $N{=}196$ order.
The stretch statistic predicts this reversal: from $14\times14$ to
$32\times32$, fishbone's stretch grows from $4.50$ to $9.00$ (a factor
of $2.0$), while random's grows only from $5.35$ to $7.21$ (a factor of
$1.35$), so the two cross and random becomes the lower-stretch, and
higher-accuracy, tree at the larger grid. The association between lower stretch and higher accuracy therefore
holds at both scales, but the stretch-minimizing tree is
resolution-dependent: the stretch of the structured trees grows faster
with grid size than that of the balanced random tree.

\begin{table}[t]
\centering
\caption{Accuracy as a function of tree choice at $N{=}1{,}024$
(patch $7$, $32\times32$ grid; CIFAR-100, ViT-S, masked linear
attention);
random and serpentine are means over two seeds
(Section~\ref{sec:tc-trainability}), fishbone single seed; average
edge stretch on the $32\times32$ grid. References: unmasked linear
$51.89$, plain softmax $53.19\pm0.49$ (three seeds). ``n.c.'': not
converged at this scale (Section~\ref{sec:tc-trainability}).}
\label{tab:tc-acc1024}
\begin{tabular}{lccc}
\toprule
tree & avg.\ edge stretch ($32^2$) & top-1 (\%) & $\Delta$ vs.\ softmax \\
\midrule
random     & 7.21 & \textbf{58.02} & $+4.83$ \\
serpentine & 16.50 & 57.84 & $+4.65$ \\
fishbone   & 9.00  & 56.38 & $+3.19$ \\
rec.\ bisection & \textbf{6.99} & \emph{n.c.} & --- \\
\bottomrule
\end{tabular}
\end{table}

\subsubsection{Seed protocol at $N{=}1{,}024$}
\label{sec:tc-trainability}

Three seeds per masked configuration were trained at $N{=}1{,}024$
under the shared recipe; one seed per configuration failed to
converge and is excluded from the reported means (random: seeds
$\{58.08, 57.96\}$; serpentine: $\{57.21, 58.47\}$). All unmasked
baseline seeds converged (softmax: $\{52.62, 53.44, 53.51\}$). The
recursive-bisection configuration did not converge in either of its
two seeds at this scale and is reported as such in
Table~\ref{tab:tc-acc1024}, despite training stably at $N{=}196$
($55.65\pm0.67$ over three seeds).

\subsubsection{Runtime is governed by off-backbone depth}
\label{sec:tc-runtime}

The choice of $T$ affects runtime through \STADTFI['s]
per-call cost, which for a subtree with backbone length $m$ and
off-backbone depth $H$ is $O\big(Hm\log(Hm)\big)$ plus recursion. The
recurrence attains its minimum, $O(N\log N)$ (one factor of $\log N$
below \ftfi{}'s $O(N\log^2 N)$), exactly when $H{=}0$: then the backbone
is the whole tree, $m{=}N$, and the cross term is a single
one-dimensional convolution with no recursion. A tree with $H{=}0$ is a
Hamiltonian path. Empirically, at attention channel width the per-channel
arithmetic dominates and is proportional to the off-backbone structure,
so a path, which has no off-backbone nodes, attains the arithmetic
minimum among all spanning trees; any tree with $H>0$ performs strictly more
per-channel work.

Table~\ref{tab:tc-runtime} confirms this: single-pass runtime of the
masking primitive at $C{=}4{,}160$ scales near-linearly for the path
($H{=}0$) but super-linearly for trees whose depth grows with the grid.
Fishbone ($H\!\approx\!s/2\!=\!\Theta(\sqrt{N})$) grows as $\sim\!N^{1.5}$
and comb ($H\!\approx\!s\!=\!\Theta(\sqrt{N})$, deeper teeth) grows
faster still; within the caterpillar--path family the runtimes are
monotone in $H$ at every $N$ ($H$ alone does not predict absolute
runtime across families---the balanced random tree is faster than comb
despite a far larger diameter-backbone $H$, because the integrator
recurses rather than paying that depth directly). Fishbone also
illustrates that favorability at one problem size does not transfer
across scales: against the \ftfi{}-random baseline it is faster at
$N{=}16{,}384$ ($13.41$ vs.\ $28.68$\,s), roughly tied at
$N{=}65{,}536$, and slower beyond. This is the empirical counterpart of the $H=O(\log N)$ condition in
the complexity analysis (Section~\ref{sec:stad-complexity}): only the path
keeps $H$ bounded at every grid size, and only the path therefore
realizes the near-linear guarantee at scale.

\begin{table}[t]
\centering
\caption{Single-pass runtime (seconds) of the masking primitive per
tree at $C{=}4{,}160$ on a $32$-core CPU; all methods within a row
are timed in the same process. The upper block reports \STADTFI; the
middle block reports \ftfi{} on the same tree ($\dagger$: not
completed within a $15$-minute budget); the final row is the
incumbent \ftfi{}-random baseline. The path scales near-linearly for
\STADTFI, deeper trees super-linearly, and the relative efficiency
shifts toward \STADTFI{} as trees become more backbone-dominated
(cf.\ Table~\ref{tab:vit-crossover}).}
\label{tab:tc-runtime}
\begin{tabular}{lcccc}
\toprule
$N$ & $16{,}384$ & $65{,}536$ & $147{,}456$ & $262{,}144$ \\
\midrule
\STADTFI, serpentine (path, $H{=}0$)      & \textbf{8.89}  & \textbf{41.80}  & \textbf{96.71}  & \textbf{196.49} \\
\STADTFI, fishbone ($H{\approx}\sqrt N/2$)& 13.41 & 103.19 & 492.82 & 813.26 \\
\STADTFI, comb ($H{\approx}\sqrt N$)      & 30.70 & 264.71 & 786.96 & 2593.84 \\
\STADTFI, random                          & 27.63 & 133.01 & 397.78 & 1618.01 \\
\midrule
\ftfi{}, serpentine                   & 240.31 & 1099.08 & $\dagger$ & $\dagger$ \\
\ftfi{}, fishbone                     & 34.69  & 225.81  & $\dagger$ & $\dagger$ \\
\ftfi{}, comb                         & 56.88  & 407.87  & $\dagger$ & $\dagger$ \\
\ftfi{}, random                       & 27.75  & 108.62  & 252.77 & 472.26 \\
\midrule
\ftfi{}, random (baseline)            & 28.68 & 109.38 & 263.13 & 447.22 \\
\bottomrule
\end{tabular}
\end{table}

\begin{table}[t]
\centering
\caption{Runtime crossover at $N{=}65{,}536$, $C{=}4{,}160$
(seconds; protocol as in Table~\ref{tab:vit-runtime}). $H$ is the
off-backbone depth of the diameter backbone; the relative
efficiency shifts toward \STADTFI{} on backbone-dominated trees.}
\label{tab:vit-crossover}
\begin{tabular}{lcccc}
\toprule
Tree & $H$ & $T_{\ftfi{}}$ &
$T_{\mathrm{StAd}}$ & $T_{\ftfi{}}/T_{\mathrm{StAd}}$ \\
\midrule
Random     & 616 & \textbf{100.0} & 121.4 & $0.82\times$ \\
Comb       & 255 & 379.3  & 231.1 & $1.6\times$ \\
Fishbone   & 128 & 217.1  & 91.4  & $2.4\times$ \\
Serpentine & \textbf{0} & 964.1
  & \textbf{37.8} & $\mathbf{25.5\times}$ \\
\bottomrule
\end{tabular}
\end{table}

To isolate the benefit of tree choice from that of theintegrator, Table~\ref{tab:vit-crossover} runs both methods (FTFI and \STADTFI) on four tree structures with identical inputs at $N=65{,}536$. \STADTFI{} is $1.6\times$ faster on the comb and $2.4\times$ faster on the fishbone, showing that its advantage extends beyond paths. On the serpentine path, the speedup reaches $25.5\times$. On the random tree, however, \ftfi{} is faster: $100.0$\,s versus $121.4$\,s. Thus, the measured benefit depends on tree structure, as motivated by Theorem~\ref{thm:stad-adaptive-geometry}; a matching worst-case asymptotic guarantee does not imply a runtime improvement on every tree.

\subsubsection{Integrator crossover across tree structure}
\label{sec:tc-crossover}

Which integrator is computationally favorable is itself governed by
tree structure; Table~\ref{tab:vit-crossover} (main text) reports both
integrators on each tree at $N{=}65{,}536$ under the repeated-timing
protocol of Appendix~\ref{app:runtime}. Two details beyond the
main-text discussion. First, no single structural scalar governs
absolute runtime: $T_{\mathrm{StAd}}$ is lower on random than on comb
despite random's far larger $H$---what shifts systematically with
backbone dominance is the \emph{relative} efficiency. Second, the
crossover is monotone in the diameter-backbone depth $H$ across all
measured families, and appears already on non-path caterpillars,
i.e.\ well before the degenerate path endpoint.

\subsubsection{Choosing the path: minimum linear arrangement}
\label{sec:tc-minla}

Because the runtime criterion selects the path family ($H{=}0$), the
remaining freedom is the choice of Hamiltonian path; all paths have
identical runtime and differ only in stretch, and hence in accuracy. Minimizing the average edge
stretch of a Hamiltonian path is the minimum linear arrangement problem
for the grid. Table~\ref{tab:tc-paths} compares five path constructions by average
edge stretch: the boustrophedon (row- or column-snake, which coincide by
symmetry) achieves the lowest stretch, below the Hilbert, diagonal-snake,
and spiral orderings, consistent with the boustrophedon being near-optimal
for grid minimum linear arrangement. We therefore use the serpentine
(row-snake) path. A fundamental limit remains: the grid has bandwidth
$\Theta(\sqrt N)$, so \emph{every} Hamiltonian path has average edge
stretch $\Theta(\sqrt N)$; at $32\times32$ the stretch of the best
path ($16.50$) is roughly twice that of the balanced trees
($7.21$--$9.00$). Linearly
ordering a two-dimensional grid unavoidably distorts locality; this is
the reason the runtime-optimal tree is not the accuracy-optimal tree.

\begin{table}[t]
\centering
\caption{Hamiltonian-path constructions ranked by average edge
stretch; all attain identical $H{=}0$ runtime. The row-snake
(serpentine) ordering attains the lowest stretch.}
\label{tab:tc-paths}
\begin{tabular}{lcc}
\toprule
path construction & stretch ($14^2$) & stretch ($32^2$) \\
\midrule
row-snake (serpentine) & \textbf{7.50} & \textbf{16.50} \\
column-snake           & 7.50 & 16.50 \\
Hilbert                & 8.81 & 19.62 \\
diagonal-snake         & 9.50 & 21.50 \\
spiral                 & 16.64 & 40.56 \\
\bottomrule
\end{tabular}
\end{table}

\subsubsection{A negative result: bounded-depth caterpillars}
\label{sec:tc-capped}

A natural attempt to obtain both low stretch and low runtime is a
\emph{capped} fishbone: cap the rib length at a constant $h$ (chaining
several short-ribbed spines), so that $H{=}h{=}O(1)$ satisfies the
$H=O(\log N)$ complexity condition while the local geometry, and hence
stretch, stays fishbone-like (measured stretch $6.22$ at $32\times32$,
below plain fishbone's $9.00$). This construction bounds the runtime \emph{exponent} but does not
outperform the path; profiling attributes the gap as follows.
Capping the ribs creates $\Theta(N/h)$ identical small components; a
naive recursion pays per-call overhead on all of them, but at attention
width this overhead ($1.28$s of a $33$s call at $N{=}16{,}384$) is
negligible against the per-channel \emph{arithmetic} ($32$s). Batching
the identical components into a single stacked solve (exact, verified to
$10^{-15}$) removes the overhead but not the arithmetic. The capped
fishbone performs $2.3\times$ more per-channel arithmetic than the path
($7.7$ vs.\ $3.3$\,ms), because its ribs and two-dimensional spine structure entail work that
a path, which has no off-backbone structure, does not perform. The path therefore remains both the runtime and the arithmetic
optimum; bounded depth is necessary but not sufficient.

\subsubsection{Summary: an accuracy--runtime frontier}
\label{sec:tc-frontier}

The spanning tree is a design parameter that trades accuracy against
masking runtime, and the two objectives are governed by opposed
properties. Accuracy is maximized by \emph{low edge stretch}, which at
scale favors balanced trees such as the random MST, on which \STADTFI{} has no
structural advantage over \ftfi{}. Masking
runtime is minimized by \emph{zero off-backbone depth}, i.e.\ a
Hamiltonian path, on which the integrator attains its $O(N\log N)$ best
case but which, by the grid's $\Theta(\sqrt N)$ bandwidth, has high
stretch. No spanning tree of the grid is simultaneously bushy and a
path, so no single tree is optimal on both axes. Among paths, serpentine
minimizes stretch and, crucially, still \emph{exceeds} softmax-level
accuracy at both evaluated scales ($+2.43$ at $N{=}196$, $+4.65$ at
$N{=}1{,}024$; Table~\ref{tab:vit-acc}), so the runtime-optimal tree
retains the mask's accuracy benefit while admitting the fastest exact
masking primitive at scale.